\documentclass{article}
\usepackage{microtype}
\usepackage{hyperref}

\usepackage[preprint]{icml2026}

\icmltitlerunning{Data Reuse in Non-Stationary Learning}
\makeatletter
\renewcommand{\printAffiliationsAndNotice}[1]{%
  \global\icml@noticeprintedtrue
  \begin{NoHyper}%
  {\let\thefootnote\relax\footnotetext{\hspace*{-\footnotesep}%
    #1Correspondence to: \icmlcorrespondingauthor@text.\par
    \Notice@String}}%
  \end{NoHyper}%
}
\makeatother

\usepackage{amsmath,amsfonts,bm}

\def\eqref#1{equation~\ref{#1}}
\def\Eqref#1{Equation~\ref{#1}}
\def\plaineqref#1{\ref{#1}}
\def\1{\bm{1}}

\DeclareMathAlphabet{\mathsfit}{\encodingdefault}{\sfdefault}{m}{sl}
\SetMathAlphabet{\mathsfit}{bold}{\encodingdefault}{\sfdefault}{bx}{n}

\newcommand{\E}{\mathbb{E}}

\newcommand{\R}{\mathbb{R}}

\newcommand{\KL}{D_{\mathrm{KL}}}

\usepackage{url}

\usepackage{amsmath,amssymb,amsthm,mathtools}
\usepackage{enumitem}
\usepackage{xcolor}
\usepackage{booktabs}
\usepackage{graphicx}
\usepackage{tikz}
\usepackage{float}
\usepackage{algorithm}
\usepackage{algorithmic}
\usepackage[english]{babel}
\usepackage[autostyle, english = american]{csquotes}
\MakeOuterQuote{"}
\usetikzlibrary{arrows.meta}

\hypersetup{
  colorlinks=true, linkcolor=blue, citecolor=blue, urlcolor=blue,
  pdftitle={Data Reuse in Non-Stationary Learning},
  pdfauthor={Tomer Gafni, Garud Iyengar, Assaf Zeevi},
  pdfsubject={Preprint}
}

\theoremstyle{plain}
\newtheorem{theorem}{Theorem}[section]
\newtheorem{proposition}[theorem]{Proposition}
\newtheorem{lemma}[theorem]{Lemma}
\newtheorem{corollary}[theorem]{Corollary}
\theoremstyle{definition}

\newtheorem{remark}[theorem]{Remark}
\newcommand{\Pbb}{\mathbb P}
\newcommand{\cS}{\mathcal S}
\newcommand{\cB}{\mathcal B}

\newcommand{\cC}{\mathcal C}

\newcommand{\id}{\mathrm{id}}
\newcommand{\Dstr}{\mathsf D}
\newcommand{\Estr}{\mathsf E}
\newcommand{\anc}{\mathrm{anc}}
\newcommand{\un}{\mathrm{un}}
\newcommand{\unsafe}{\mathrm{unsafe}}
\newcommand{\bad}{\mathrm{bad}}
\newcommand{\FA}{\mathrm{FA}}

\usepackage{soul}

\begin{document}

\onecolumn
\icmltitle{Data Reuse in Non-Stationary Learning}
\begin{center}
\begin{tabular*}{0.85\textwidth}{@{\extracolsep{\fill}}ccc}
\textbf{Tomer Gafni} & \textbf{Garud Iyengar} & \textbf{Assaf Zeevi}\\
Columbia University & Columbia University & Columbia University\\
\end{tabular*}
\end{center}
\icmlcorrespondingauthor{Tomer Gafni}{tbg2124@columbia.edu}
\icmlkeywords{Online learning, non-stationarity, data reuse}
\vskip 0.3in
\printAffiliationsAndNotice{}

\begin{abstract}
We consider online learning in non-stationary environments, where the goal is to track an unknown parameter that switches abruptly between a finite set of recurring values. Recurrence opens the possibility of judiciously reusing past observations to improve algorithm performance. However, the changing nature of the underlying signal and lack of information on these dynamics may limit the ability to "safely" reuse data. In this paper we quantify some of the fundamental tradeoffs in this class of problems, and show that they bear a certain resemblance to the classical bias-variance dilemma. Specifically, we propose a class of anytime algorithms, dubbed Exposure-Capped Reuse (ECR), that combine online change detection, compatibility testing, and "contamination" control. We characterize the regime in which ECR’s regret scales with the number of distinct values rather than the number of changes, and derive a novel information-theoretic lower bound that establishes the near-minimax optimality of ECR. This provides rigorous quantification of the statistical "value" of data reuse.
\end{abstract}

\section{Introduction}
\label{sec:introduction}

\subsection{Background and Overview}
\label{ssec:background}
Many learning systems operate in non-stationary environments in which the
data-generating distribution may change and later
recur~\citep{katakis2010tracking, alippi2013just}. For example,
computational demand in cloud systems may change abruptly as workload
groups connect or disconnect, with previously observed demand patterns
recurring when similar configurations return. Often, the cause of a change
is hidden from the learner and not known a
priori~\citep{tsymbal2004problem}. {This setting is closely related to
learning under distribution shift, including statistical transfer
learning~\citep{wang2025transfer} and concept drift
adaptation~\citep{gama2014survey}.} When a previously observed
concept reappears, past information may become useful again, circumventing  the
need to learn it from scratch~\citep{suarez2023survey}.  Exploiting this
recurrence, however, requires the learner to continuously monitor and detect regime changes and concurrently 
determine which historical data can be "safely" reused. To wit, the data needs to be classified in an online manner and attributed with high enough confidence to the correct underlying statistical source to ensure that it can be reused effectively. As will be articulated in what follows, addressing these salient ingredients in a synergistic fashion presents interesting new challenges.

\textbf{The online data reuse problem.} To flesh out the fundamental statistical trade-offs embedded in recurrent structured non-stationary environments, 
we consider a canonical online mean-tracking problem. The problem captures key salient features yet maintains a clean enough structure that lends itself to  elucidating the main learning theoretic insights. The setup we consider is one where the  learner sequentially observes noisy sub-Gaussian samples $\{X_t\}_{t=1}^T$, whose mean $\mu_t$ evolves in a piecewise-constant manner over a
horizon $T$,  with $S$ changes among $M$ distinct values ({Fig.~\ref{fig:recurrent-environment}}). We refer to these values as
\textit{classes}. Before observing $X_t$, the learner predicts $\widehat{\mu}_t$, and performance is measured by the expected cumulative $L_1$ error 
\(
\mathbb{E}\!\left[\sum_{t=2}^{T}
\left|\widehat{\mu}_t-\mu_t\right|\right]
\),
{corresponding to dynamic regret with respect to the time-varying mean sequence.} We study it under a \textit{minimax} criterion, namely, when nature is allowed to deploy the worst case patterns of the changing mean. 

The primary regime of interest is when  $M\ll S$, i.e., the environment changes frequently but repeatedly revisits a small set of classes. The change points, 
the means for each class, the active class at the current decision epoch, as well as $T$, $S$, and $M$, are all unknown to the learner.  
Our formulation differs from existing recurrent-learning settings in that it is fully online and anytime, requires no prior knowledge of the parameters except for the noise proxy \(\sigma\), and imposes no assumptions on a minimum spacing between changes.  

\textbf{The value of recurrence.}
{Consider a policy that knows the change points and predicts \(\widehat{\mu}_t\) by estimating the mean after each change from scratch, without reusing observations from previous segments. Its regret, measured by the criterion above, scales as \(O(\sigma\sqrt{ST})\).} (To see this, imagine the change points are equi-spaced, so there are $S$ segments each of length $T/S$.) 
{We refer to this policy as the \textit{no-reuse benchmark}.} 
{If the recurrent class identities are also known, the policy can form each prediction \(\widehat{\mu}_t\) by pooling past observations from different occurrences of the current class. This increases the effective sample size and reduces estimation variance, resulting in \(O(\sigma\sqrt{MT})\) regret.} (To see this, simply imagine allocating $T/M$ data points to each class.) {We refer to this policy as the \textit{perfect-reuse benchmark}.}   
Thus, recurrence can be leveraged to replace the dependence on the number of changes $S$ with the number of classes $M \ll S$. 
In long-horizon settings where the number of changes grows with $T$ while the number of classes remains finite, as is natural in many applications, the "value" of data reuse, {reflected in the ratio of the two benchmark regret rates}, grows without bound  
(see Fig.~\ref{fig:oracle-value-recurrence}). 

\begin{figure}[t]
  \centering
  \begin{minipage}[t]{0.48\linewidth}
    \vspace{0pt}
\centering
 \begin{minipage}[t][0.67\linewidth][t]{\linewidth}
    \vspace{0pt}
    \includegraphics[width=\linewidth]{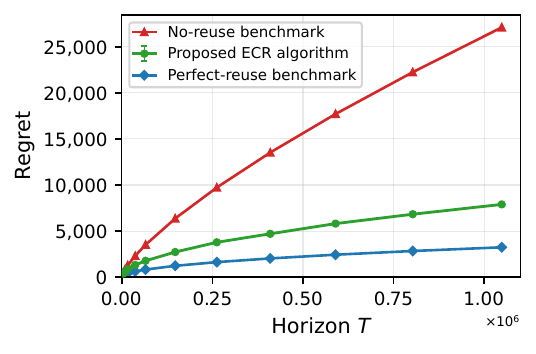}
  \end{minipage}\par
  \caption{
\textbf{The value of recurrence.} The setup is one where  $S\asymp{\sqrt{T}}$ and $M=4$. 
{The widening regret gap between the no-reuse and perfect-reuse benchmarks illustrates the growing benefit ("value") of ideal data reuse. Our proposed ECR algorithm is able to exploit this benefit without knowing the change points or class identities (Th.~\ref{thm:ecr-upper-bound}).}} 
% consistent with matching the perfect-reuse benchmark rate up to polylogarithmic factors .
  \label{fig:oracle-value-recurrence}
  \end{minipage}\hfill
  \begin{minipage}[t]{0.48\linewidth}
    \vspace{0pt}
\centering 
  \begin{minipage}[t][0.67\linewidth][t]{\linewidth}
    \vspace{0pt}
    \includegraphics[width=\linewidth]{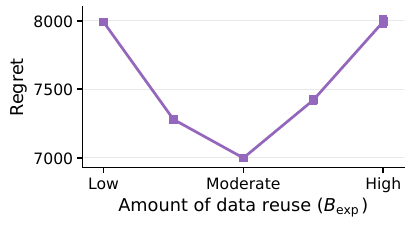}
  \end{minipage}\par
  \caption{
  {\textbf{More reuse is not necessarily better.} 
  A plot exhibiting regret as a function of the amount of data reuse (controlled by the exposure budget $B_{\exp}$). 
  Increasing reuse means more data can be used to reduce estimation variance. But this can result in "contamination" due to misclassification of past data, which increases bias. 
  Regret is, therefore, minimized at an intermediate level of data reuse.}
  }
  
  \label{fig:reuse-tradeoff}
  \end{minipage}
\end{figure}

\textbf{The price of non-stationarity.}
The central challenge in designing an anytime online algorithm with regret that scales with the number of classes, $O(\sigma\sqrt{MT})$, is that the learner must detect changes online and determine whether the current statistical regime is identified (with high confidence) with a previously observed class (if any).  
Each change necessarily incurs a detection delay, and sufficiently
short segments may be missed
altogether~\citep{gafni2026the}. Consequently, observations collected
before a change is detected, or when it is missed, \textit{contaminate} the stored
historical data. In other words, reusing past data that is incorrectly "labeled" can
introduce bias into the estimator. 
This creates a \textit{fundamental tradeoff}: reusing more historical data
reduces estimation variance, but also increases the 
bias created by 
contaminated observations. 

\subsection{Main contributions}
\label{ssec:main-contributions}

% Our main contributions are summarized below.

{\textbf{Exploiting recurrence despite non-stationarity.}
We propose \emph{Exposure-Capped Reuse} (ECR)
(Alg.~\ref{alg:ECR}), an anytime algorithm that realizes the statistical benefit of recurrence through judicious data reuse. We prove a non-asymptotic regret upper bound accounting for both estimation variance and contamination bias. In particular, when \(S=\widetilde O(\sqrt T)\), ECR attains \(\widetilde O(\sigma\sqrt{MT})\) regret, matching the perfect-reuse benchmark up to polylogarithmic factors (Th.~\ref{thm:ecr-upper-bound}).} 

ECR operates without knowing the horizon, the change
points, or the recurrent parameters. It combines: (i) \textit{Online
  segmentation:} a change detector that partitions the observation history
into reusable blocks despite potentially unavoidable missed changes
(Prop.~\ref{prop:anchored-segmentation}); (ii) \textit{Compatibility
  testing:} a statistical test that guarantees reuse of sufficiently "clean"
blocks from the same recurrent class while rejecting blocks associated
with different classes (Prop.~\ref{prop:compatibility-summary}); and (iii)
\textit{Exposure control:} a mechanism that limits the cumulative influence of each historical block (Sec.~\ref{ssec:exposure-capped-reuse}), as elaborated below.

% an exposure parameter that tracks each block's cumulative influence and a budget $B_{\exp}$ that limits its reuse, thereby balancing the bias-variance tradeoff (Sec.~\ref{ssec:exposure-capped-reuse}).

\textbf{Controlling bias from repeated reuse.}
We show that the main difficulty in exploiting recurrence
 in non-stationary settings  is \textit{bias accumulation under repeated
  reuse}.
% \textcolor{red}{Since there is no way to cleanly partition the data stream at the true underlying change points, some data from classes adjacent to the change point may "leak" and contaminate the present block.} 
Since the problem proceeds in an online fashion, the contamination effects may build up and compound downstream errors, accumulating beyond the ideal scale of $O(\sigma\sqrt{MT})$,
% \textcolor{red}{, the best performance achievable by fully exploiting recurrence via ideal data reuse}
signaling the possible pitfalls and subtleties of exploiting recurrence. (See Prop.~\ref{prop:uncapped-ecr}.) 
% \textcolor{red}{More generally, unrestricted reuse could potentially incur higher regret than an algorithm that does not reuse historical data.}

To control this accumulation, we introduce an exposure budget $B_{\exp}$
that limits each historical block's cumulative bias influence while preserving
sufficient reuse to reduce variance, thereby balancing the bias-variance tradeoff by effectively identifying the "sweet spot" of data reuse. Fig.~\ref{fig:reuse-tradeoff} illustrates this point graphically: a larger budget initially
improves performance through greater reuse, but eventually increases regret
by repeatedly propagating residual contamination. 

\textbf{Lower bound and minimax optimality.}
We establish a novel information-theoretic lower bound (Th.~\ref{thm:ecr-lower-bound}) of order
\(
\Omega(
\max\{
\frac{\sigma^2}{\Delta_{\min}}S\log\!\left(\frac{T}{S}\right),
\ \sigma\sqrt{MT}
\}
),
\)
where $\Delta_{\min}$ denotes the minimum separation between distinct parameters.
The lower bound shows that the condition $S=\widetilde{O}(\sqrt{T})$ required to attain the ideal recurrent rate is \textit{necessary} up to logarithmic factors (Cor.~\ref{cor:necessary-change-frequency}). ECR attains this rate under the same condition and essentially matches the lower bound, thereby crisply identifying the minimax complexity of the data reuse problem.
% \textcolor{red}{In a regime where the number of switches is of order $S=\widetilde{O}(\sqrt T)$ the value of recurrence can be  realized; ECR does so and essentially achieves the lower bound, and the minimax complexity of the data reuse problem is crisply identified.} 

We validate the theory with experiments on both synthetic and real data (Sec.~\ref{sec:numerical-results}).
Our formulation arises in a wide range of applications, including load
prediction in cloud computing systems~\citep{maghakian2019online}, temporal changes
in user preferences~\citep{tsymbal2004problem}, and online source
coding~\citep{milan2016forget}. We provide a concrete illustrative example
in Section~\ref{sec:problem-formulation}. 

\subsection{Related literature}
\label{ssec:related-literature}

We discuss related work from two research areas; an extended discussion appears in
App.~\ref{app:related-work}.

\paragraph{Online learning with memory.}

In adversarial prediction with expert advice, long-term-memory methods
exploit comparator sequences that may switch frequently while repeatedly
returning to a much smaller set of
experts~\citep{bousquet2002tracking,zheng2019equipping,herbster2020online,robinson2021improved}. 
In our formulation, both the recurring numerical parameters and their identities must be inferred from noisy observations, and the reused observations may themselves be contaminated.
In the stochastic bandit setting, \citet{re2021exploiting} combine change
detection, recurrence testing, and data reuse in a recurrent
piecewise-stationary formulation.
However, their theoretical guarantee remains change-dependent and does not quantify the resulting gains from recurrence.  
\citet{li2021unifying} study contextual bandits with a finite collection
of recurring parameters, using homogeneity tests for change detection and
historical-model aggregation. However, their analysis relies on
detectability assumptions and, more importantly, does not account for
unavoidable bias in stored blocks. While our model and learning objective
differ, we explicitly account for this contamination and show that even a
small residual bias can accumulate into substantial error under repeated
reuse. 

\paragraph{{Learning under distribution shift.}}
{Existing approaches differ in how they identify and reuse relevant
information. In recurrent concept drift~\citep{suarez2023survey}, some
recognize returning concepts and retrieve past observations or
classifiers~\citep{katakis2010tracking,alippi2013just,anderson2019recurring,chiu2018diversity},
while others weight historical predictors by their performance on
current data~\citep{sun2018concept,zhao2020handling,yang2021concept}.
Most are evaluated empirically; \citet{zhao2020handling} establish a
local benefit when a past model matches the current concept, but their
global guarantee uses a known variation budget. In statistical transfer
learning~\citep{hanneke2019value,cai2021transfer}, \citet{wang2025transfer}
combine informative-source selection with penalized estimation for
offline piecewise-constant means, obtaining minimax rates that depend on
source frequencies and source-target discrepancy. Our analysis accounts for contamination from imperfect online segmentation and establishes provable gains from recurrence under an anytime algorithm.}

\section{Problem formulation}
\label{sec:problem-formulation}
We consider an online learning problem in a recurrent piecewise-stationary environment.  
The learner observes a sequence of real-valued random variables
$\{X_1,\ldots,X_T\}$ whose means satisfy
\[
  \mu_t:=\E[X_t] \in
  \Theta=\{\theta^{(1)},\ldots,\theta^{(M)}\}\subset\R.
\]
We refer to the $M$ values in $\Theta$ as \textit{classes}.
The mean $\mu_t$ is  piecewise constant in time, and switches abruptly at an unknown deterministic sequence of $S$ \textit{change points}:
\[
1=\tau_0<\tau_1<\cdots<\tau_S<\tau_{S+1}=T+1.
\]
The mean is constant on each $\mathcal I_j\triangleq [\tau_j,\tau_{j+1})$, and the observations satisfy
\begin{equation}
\label{eq:observation-model}
X_t=\mu_t+\varepsilon_t,
\end{equation}
where $\{\varepsilon_t\}_{t\ge1}$ are i.i.d., zero-mean, and sub-Gaussian with known 
variance proxy $\sigma^2$.
We require that 
$\mu_{\tau_j}\ne\mu_{\tau_j-1}$ for $j=1,\ldots,S$.
We impose no minimum spacing between change points.
We assume that 
\(
\max_{m,m'}|\theta^{(m)}-\theta^{(m')}|
\le \Delta_{\max}\), 
\(\min_{m\ne m'}|\theta^{(m)}-\theta^{(m')}|
\ge \Delta_{\min},
\)
where
$0 < (M-1)\Delta_{\min}<\Delta_{\max}<\infty$.
Figure~\ref{fig:recurrent-environment} illustrates an example environment.

Let $\mathcal E \coloneqq \mathcal
E_{T,S,M}(\Delta_{\min},\Delta_{\max},\sigma)$ denote the class of all
data-generating processes satisfying the conditions above. 
The learner knows only the noise proxy $\sigma$; it does not know
$T,S,M,\Theta,\Delta_{\min},\Delta_{\max}$, the change points, or the ordered segment
labels. Let $\mathcal F_t=\sigma(X_1,\ldots,X_t)$ be the natural filtration,
and let $\Pi$ be the class of (possibly randomized) adapted policies. In every round $t\ge2$, a policy
$\pi\in\Pi$ outputs a prediction $\widehat\mu_t^\pi$
measurable with respect to $\mathcal F_{t-1}$ and its internal randomness.
We evaluate $\pi$ in environment $\nu\in\mathcal E$ by its expected
cumulative $L_1$ error (regret)
\begin{equation}
\label{eq:regret-definition}
R_T(\pi;\nu)
:=
\E_{\pi,\nu}
\left[\sum_{t=2}^T|\widehat\mu_t^\pi-\mu_t|\right].
\end{equation}
The expectation is over the observation noise and the policy's internal
randomness. The \textit{minimax} regret over $\mathcal E$ is
\begin{equation}
R_T^\star(\mathcal E)
:=
\inf_{\pi\in\Pi}\sup_{\nu\in\mathcal E}R_T(\pi;\nu).
\end{equation}
The supremum ranges over the change-point locations, recurrent parameter
values, ordered segment labels, and admissible noise laws.

\begin{figure}[t]
  \centering
   \resizebox{0.48\linewidth}{!}{\begin{tikzpicture}[
  x=0.50cm,
  y=0.60cm,
  >=Latex,
  every node/.style={font=\small},
  stateone/.style={draw=blue!70!black,line width=1.5pt},
  statetwo/.style={draw=orange!85!black,line width=1.5pt},
  statethree/.style={draw=green!55!black,line width=1.5pt}
]
  % Class-mean levels and time axis.
  \draw[gray!35,densely dotted] (0,2.8) -- (13,2.8);
  \draw[gray!35,densely dotted] (0,1.8) -- (13,1.8);
  \draw[gray!35,densely dotted] (0,0.8) -- (13,0.8);
  \node[anchor=east] at (-0.12,2.8) {$\theta^{(3)}$};
  \node[anchor=east] at (-0.12,1.8) {$\theta^{(2)}$};
  \node[anchor=east] at (-0.12,0.8) {$\theta^{(1)}$};
  \node[anchor=east] at (-0.12,3.35) {$\mu_t$};
  \draw[->,black!70] (0,0.18) -- (13.35,0.18) node[right] {$t$};

  % The same population-mean trajectory as in the algorithm illustration.
  \draw[stateone] (0,0.8) -- (2.15,0.8);
  \draw[black!45,line width=0.7pt] (2.15,0.8) -- (2.15,1.8);
  \draw[statetwo] (2.15,1.8) -- (2.8,1.8);
  \draw[black!45,line width=0.7pt] (2.8,1.8) -- (2.8,0.8);
  \draw[stateone] (2.8,0.8) -- (5,0.8);
  \draw[black!45,line width=0.7pt] (5,0.8) -- (5,2.8);
  \draw[statethree] (5,2.8) -- (8,2.8);
  \draw[black!45,line width=0.7pt] (8,2.8) -- (8,1.8);
  \draw[statetwo] (8,1.8) -- (10,1.8);
  \draw[black!45,line width=0.7pt] (10,1.8) -- (10,0.8);
  \draw[stateone] (10,0.8) -- (13,0.8);

  % Segment labels; the short segment is labeled in the annotation below.
  \foreach \x/\y/\idx in {1.075/0.8/0,3.9/0.8/2,6.5/2.8/3,9/1.8/4,11.5/0.8/5} {
    \node[above=2pt,font=\scriptsize] at (\x,\y) {$\mathcal I_{\idx}$};
  }

  % True change points; the first two labels are offset slightly for legibility.
  \foreach \x in {2.15,2.8,5,8,10} {
    \draw[black!70] (\x,0.10) -- (\x,0.26);
  }
  \node[black!70,below,xshift=-1pt,font=\scriptsize] at (2.15,0.07) {$\tau_1$};
  \node[black!70,below,xshift=1pt,font=\scriptsize] at (2.8,0.07) {$\tau_2$};
  \foreach \x/\idx in {5/3,8/4,10/5} {
    \node[black!70,below,font=\scriptsize] at (\x,0.07) {$\tau_{\idx}$};
  }

  % Highlight the short segment.
  \node[orange!85!black,align=center,font=\scriptsize] at (2.48,3.55)
    {short segment $\mathcal I_1$};
  \draw[->,orange!85!black,line width=0.7pt]
    (2.48,3.28) -- (2.48,1.95);
\end{tikzpicture}}
  \caption{{\textbf{A recurrent environment.}
  The mean undergoes \(S=5\) changes across \(M=3\) classes. Returns to previously visited classes create opportunities for data reuse. However, if the short segment \(\mathcal I_1\) goes undetected, its observations can become mixed with those from the surrounding \(\theta^{(1)}\) segments. Reusing this mixed history when \(\theta^{(1)}\) returns can introduce bias.}}
  \label{fig:recurrent-environment}
\end{figure}

\paragraph{Illustrative example (recurrent cloud demand).}
Cloud platforms forecast incoming computational demand before provisioning resources~\citep{maghakian2019online}. Let $D_t$ denote demand and $m(z_t)$ a baseline forecast based on available information $z_t$. Suppose the system serves several workload groups, with the subset currently active defining a workload configuration $c_t$ that corresponds to the classes in our model. As groups connect or disconnect, the configuration can change abruptly and may later recur. We model demand as
\[
D_t=m(z_t)+\theta^{(c_t)}+\varepsilon_t,
\qquad c_t\in[M],
\]
where $\theta^{(c_t)}$ is the additive forecast offset associated with the current configuration and $\varepsilon_t$ is noise. When the baseline predictor $m(z_t)$ has already been learned, the remaining task is to estimate the recurrent offset $\theta^{(c_t)}$. Forming the forecast residual,
\(
X_t:=D_t-m(z_t)=\theta^{(c_t)}+\varepsilon_t,
\)
reduces the problem to the recurrence formulation considered in this paper.

\section{The Exposure-Capped Reuse (ECR) Algorithm}
\label{sec:algorithm}
In this section we present ECR, an anytime online algorithm that requires
only knowledge of the noise proxy $\sigma$. We first introduce a
decision/estimation split (Sec.~\ref{ssec:decision-estimation-split}) and
a change detector that partitions the horizon into blocks online
(Sec.~\ref{ssec:change-detection-test}). We then develop a compatibility
test for identifying reusable historical blocks
(Sec.~\ref{ssec:compatibility-test}), followed by an exposure mechanism (Sec.~\ref{ssec:exposure-capped-reuse}). 

\subsection{Decision/estimation split}
\label{ssec:decision-estimation-split}
We split the horizon into decision and estimation streams, $\cS^{\Dstr}$ and $\cS^{\Estr}$, respectively:
\[
\begin{aligned}
\cS^{\Dstr}&:=\{t\in[T]:t\text{ is odd}\}, \,
\cS^{\Estr}:=\{t\in[T]:t\text{ is even}\}.
\end{aligned}
\]
The decision stream is used for change detection and compatibility
decisions, while the estimation stream is used for prediction. This
separation makes the selected estimation samples conditionally independent
of the selection decisions, at the cost of only a constant factor in the
regret. For an interval 
$I\subseteq[T]$ and
$a\in\{\Dstr,\Estr\}$, let $N_I^a:=|I\cap\cS^a|$ and
\(
\bar X_I^a:=\frac{1}{N_I^a}\sum_{u\in I\cap\cS^a}X_u,\quad
\bar\mu_I^a:=\frac{1}{N_I^a}\sum_{u\in I\cap\cS^a}\mu_u.
\)
To lighten notation, we suppress stream superscripts whenever
the relevant stream is clear from context. The appendix uses explicit stream notation.

\subsection{Change detection test}
\label{ssec:change-detection-test}

\textbf{Detection statistic and blocks.}
At time $t$, the learner has access to observations up to time $t-1$ and
keeps track of  the most recent restart time $r<t$. The immediate goal is to detect
whether there was a change point $\tau \in (r,t)$.  
To test for a change, we split the observations at each candidate change point
$k$ with $r<k<t$ and define\footnote{Throughout, an empirical average
  is used only when its sample count is positive.} 
\begin{equation}
  \label{eq:two-samples-statistic}
\widehat D_{k,t}^r
:=
\frac1\sigma
\sqrt{\frac{N_{[r,k)}N_{[k,t)}}{N_{[r,t)}}}
\left|\bar X_{[r,k)}-\bar X_{[k,t)}\right|.
\end{equation}

The scan statistic $C_t^r$ and the alarm time (stopping time) $N_G^r$ are
\begin{equation}
\label{eq:detection-statistic}
\begin{aligned}
C_t^r:=\max_{r<k<t}\widehat D_{k,t}^r, \quad
N_G^r:=\inf\{t>r:C_t^r\ge\gamma_t^r\},
\end{aligned}
\end{equation}
with the time-varying threshold
\begin{equation}
\label{eq:det-threshold}
\begin{aligned}
\hspace{-0.2cm}\gamma_t^r
&:=\sqrt{6\log(t-r)+2\log(1/\alpha_r)
  +2\log(\pi^2/3)},
\end{aligned}
\end{equation}
where $\alpha_r:=\frac6{\pi^2}\frac{\alpha}{r^2}$, $\alpha\in(0,1)$ is a fixed tuning parameter, and
\(C_t^r=0\) when no admissible split exists. Thus,
$C_t^r$ searches over all candidate splits since the last restart for the
strongest evidence of a change, and an alarm is raised when it exceeds the
threshold. 

Whenever the detector raises an alarm, the interval $[r,N_G^r-1)$ is declared a completed
\emph{block}, and the restart time is updated to
$r\leftarrow N_G^r-1$. Let $K$ denote the total number of blocks,
and let
$r_0=1<r_1<\cdots<r_{K-1}<r_K=T$ be the restart times induced by the detector. The resulting blocks are
\[
\cB_b:=[r_b,r_{b+1}),
\qquad b=0,\ldots,K-1.
\]
For block-level statistics, we use the shorthand
$N_b^a:=N_{\cB_b}^a$, $\bar X_b^a:=\bar X_{\cB_b}^a$, and
$\bar\mu_b^a:=\bar\mu_{\cB_b}^a$.

\textbf{Endogenous confounding.}
Detecting a mean shift of size $\Delta$ with failure probability at most $\delta$
requires on the order of
$(\sigma^2/\Delta^2)\log(1/\delta)$
observations~\citep{lai1998information}. Since we impose no minimum spacing
between change points, some changes may 
be missed, and when that happens, the reference samples on the
left side of the split  in Eq.~(\ref{eq:two-samples-statistic}) may contain a
mixture of classes, which confounds the detection of subsequent
changes~\citep{gafni2026the}.
In order to state our detection guarantees in the presence of such
confounding, we define
the \emph{identification scale}
\begin{equation}
\label{eq:ell-id-main}
\ell_{\id}
:=
\left\lceil
C_{\id}\frac{\sigma^2}{\Delta_{\min}^2}
\log\frac{T}{\alpha}
\right\rceil,
\end{equation}
where $C_{\id}$ is a sufficiently large constant to be determined later. 
Note that $\ell_{\id}$ is only used for analysis and is not used  by the ECR algorithm. 
We call an interval $I$
\emph{long} if $N_I\ge\ell_{\id}$, and \emph{short} otherwise.

\begin{proposition}[Anchor segmentation]
\label{prop:anchored-segmentation}
If block $b$ contains a long class-$m$ interval, then, with probability at least $1-\alpha$
\begin{equation}
\label{eq:anchored-segmentation}
|\bar\mu_b-\theta^{(m)}|\le \Delta_{\min}/4.
\end{equation}
\end{proposition}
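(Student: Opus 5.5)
The plan is to show that, on a single high-probability event, every block containing a long class-$m$ interval is dominated by class-$m$ samples, so its mean is pulled to within $\Delta_{\min}/4$ of $\theta^{(m)}$. Since $\bar\mu_b$ is a convex combination of class values, we have
\[
|\bar\mu_b-\theta^{(m)}|\le \Delta_{\max}\,\frac{N_b^{\mathrm{other}}}{N_b},
\]
where $N_b^{\mathrm{other}}$ counts samples in $\cB_b$ (decision stream) whose mean differs from $\theta^{(m)}$. This crude bound pays $\Delta_{\max}$, so I would refine it to a signed statement: show the foreign mass either is small, or nearly cancels. Concretely, the target is $N_b^{\mathrm{other}}\lesssim \ell_{\id}\cdot(\Delta_{\min}/\Delta_{\max})$, or a direct control of $|\bar\mu_b-\theta^{(m)}|$ through the test statistic.

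\textbf{Good event.} Define $\mathcal G$ as the event that, for all $r$ and all $r<k<t$,
\[
\frac1\sigma\sqrt{\frac{N_{[r,k)}N_{[k,t)}}{N_{[r,t)}}}\,\Bigl|(\bar X_{[r,k)}-\bar X_{[k,t)})-(\bar\mu_{[r,k)}-\bar\mu_{[k,t)})\Bigr|\le \gamma_t^r .
\]
A sub-Gaussian tail bound, a union bound over the at most $(t-r)^2$ pairs $(k,t)$, and summation of the $\alpha_r$ over $r$ (this is exactly how $\gamma_t^r$ and $\alpha_r$ were designed) give $\Pbb(\mathcal G)\ge1-\alpha$. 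Everything below is deterministic on $\mathcal G$.

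\textbf{Block-mean control.} Let $J\subseteq\cB_b$ be the long class-$m$ interval. I bound the foreign part on each side of $J$ separately.

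\emph{Right side.} Let $R=\cB_b\cap[\sup J+1,\infty)$. Because no alarm fired before $r_{b+1}$, at every time $t\le r_{b+1}-1$ and for the split $k$ at the right end of $J$, we have $\widehat D^{r_b}_{k,t}<\gamma^{r_b}_t$. On $\mathcal G$ this yields
\[
\sqrt{\tfrac{N_{[r_b,k)}N_{[k,t)}}{N_{[r_b,t)}}}\;|\bar\mu_{[r_b,k)}-\bar\mu_{[k,t)}|\le 2\sigma\gamma .
\]

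\emph{Left side.} The same argument applies with the split at the left end of $J$, using the full block $\cB_b$ at the alarm time.

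Combining: since $N_J\ge\ell_{\id}$, each side's mean is either within $O(\sigma\gamma/\sqrt{\ell_{\id}})$ of its neighbor, or that side carries few samples. Choosing $C_{\id}$ large makes $\sigma\gamma/\sqrt{\ell_{\id}}\le c\,\Delta_{\min}$ for a small constant $c$, because $\gamma^2\asymp\log(T/\alpha)$. I would then chain these relations: $\bar\mu_J=\theta^{(m)}$, and the left portion and right portion each average within $c\Delta_{\min}$ of the mean of $J$ and its adjacent part. Writing $\bar\mu_b$ as a weighted average of the three pieces then gives the $\Delta_{\min}/4$ bound.

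\textbf{Main obstacle.} The hard step is the regime where a side portion is short but has mean far from $\theta^{(m)}$. There the test only gives $N_{\mathrm{side}}\,|\text{gap}|^2\lesssim\sigma^2\gamma^2$, so the side's contribution to $\bar\mu_b$ is at most
\[
\frac{N_{\mathrm{side}}|\text{gap}|}{N_b}\le \frac{\sigma\gamma\sqrt{N_{\mathrm{side}}}}{N_b}\le\frac{\sigma\gamma}{\sqrt{\ell_{\id}}}.
\]
This is again $\le c\,\Delta_{\min}$, which is the key inequality. The weighting $\sqrt{N_1N_2/(N_1+N_2)}\ge\sqrt{\min(N_1,N_2)/2}$ is what turns the test into this bound in both the short-side and long-side cases.

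One subtlety is that the last time the test was checked may precede the block end by one step, since a restart sets $r\leftarrow N_G^r-1$. This affects only a single sample, whose contribution $\Delta_{\max}/\ell_{\id}$ is absorbed by taking $C_{\id}$ large relative to $\Delta_{\max}/\Delta_{\min}$ via the log factor. If that absorption fails, I would instead apply the non-alarm condition at time $r_{b+1}-1$ directly.
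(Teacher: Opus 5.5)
There is a genuine gap in the chaining step. Both of your splits are evaluated only after $J$ has ended: the left-end split on the full block, and the right-end split at rounds past $\sup J$. Each flank is therefore compared with a neighbor that contains the other flank ($\bar\mu_{J\cup R}$, resp.\ $\bar\mu_{L\cup J}$). The relations $\bar\mu_L\approx\bar\mu_{J\cup R}$ and $\bar\mu_R\approx\bar\mu_{L\cup J}$ are circular: they do not pull $\bar\mu_b$ toward $\theta^{(m)}$ when both flanks are much longer than $J$.

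Concretely, take $N_J=h$ and $N_L=N_R=n\ge h$, and let both flanks evenly alternate short segments of classes $m$ and $m'$. Then $\bar\mu_L=\bar\mu_R=\theta^{(m)}+\delta$ with $|\delta|\ge\Delta_{\min}/2$. The two final-round quantities $\sqrt{N_1N_2/(N_1+N_2)}\,|\bar\mu_1-\bar\mu_2|$ that your combining step uses both equal
\[
\sqrt{\frac{n(n+h)}{2n+h}}\cdot\frac{h|\delta|}{n+h}\le\frac{h|\delta|}{\sqrt n}.
\]
The right-end splits at earlier rounds are likewise $O(h|\delta|/\sqrt{n})$ up to rounding, so all your constraints hold once $n$ is large. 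Yet $|\bar\mu_b-\theta^{(m)}|=\frac{2n}{2n+h}|\delta|\ge\Delta_{\min}/3$.

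In this example both flanks are \emph{short} in your sense, since each is the smaller piece of its split. Your main-obstacle estimate correctly bounds $\frac{N_{\rm side}}{N_b}|\bar\mu_{\rm side}-\bar\mu_{\rm neighbor}|\approx h|\delta|/(2n)$. But that is not the side's contribution $\frac{N_{\rm side}}{N_b}|\bar\mu_{\rm side}-\theta^{(m)}|\approx|\delta|/2$, because the neighbor's mean is not $\theta^{(m)}$. The two final-round splits do control $\bar\mu_b$ when one flank has size $O(h)$, but not in general.

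The missing idea is to use the no-alarm condition at the round $v$ at which $J=[u,v)$ \emph{ends}, with split $k=u$; this is a no-alarm round since $v\le r_{b+1}$. The right-hand sample is then exactly $J$, with mean exactly $\theta^{(m)}$. With $p=N_{[r_b,u)}$,
\[
\sqrt{\frac{ph}{p+h}}\,\bigl|\bar\mu_{[r_b,u)}-\theta^{(m)}\bigr|\le2\sigma\gamma
\quad\Longrightarrow\quad
\bigl|\bar\mu_{[r_b,v)}-\theta^{(m)}\bigr|=\frac{p}{p+h}\bigl|\bar\mu_{[r_b,u)}-\theta^{(m)}\bigr|\le\frac{2\sigma\gamma}{\sqrt h},
\]
however large $p$ is. In the example, this statistic is $\sqrt{nh/(n+h)}\,|\delta|\ge\sqrt{h/2}\,\Delta_{\min}/2$, which exceeds $2\sigma\gamma$ for $h\ge\ell_{\id}$ and $C_{\id}$ large, so such a block cannot arise.

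Your right-end split at the last no-alarm round then gives $|\bar\mu_b-\bar\mu_{[r_b,v)}|\le2\sigma\gamma/\sqrt{p+h}$. The two bounds sum to at most $\Delta_{\min}/4$ for $C_{\id}$ large. This is exactly the formation/preservation argument of Lemma~\ref{lem:anchor-formation}.

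Finally, your off-by-one worry is unfounded. The alarm closing $\cB_b$ occurs at round $r_{b+1}+1$, so round $r_{b+1}$ is a no-alarm round whose active prefix is exactly $[r_b,r_{b+1})=\cB_b$. The proposed absorption would not have been valid anyway, because $C_{\id}$ is universal and $\ell_{\id}$ does not scale with $\Delta_{\max}/\Delta_{\min}$.
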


Eq.~(\ref{eq:anchored-segmentation}) uniquely associates block \(b\) with class \(m\); we
call such a block \emph{anchored} at class \(m\).
Prop.~\ref{prop:anchored-segmentation} shows that the detector partitions
the 
horizon into blocks with well-separated anchors (see
Fig.~\ref{fig:anchored-segmentation}).

\subsection{Compatibility test}

\label{ssec:compatibility-test}

\textbf{Compatibility rule.}
After the detector update at round $t$, let $b_t$ denote the index of the
current block, namely the unique index satisfying
$r_{b_t}<t\le r_{b_t+1}$, and define the \emph{active prefix} $\mathcal A_t:=[r_{b_t},t)$.
We compare each completed block $b<b_t$ with $\mathcal A_t$ using the compatibility statistic
\begin{equation}
\label{eq:compat-stat}
\widehat Q_b(t)
:=
\frac1\sigma
\sqrt{\frac{N_{\mathcal A_t}N_b}
{N_{\mathcal A_t}+N_b}}
\left|\bar X_{\mathcal A_t}-\bar X_b\right|.
\end{equation}

Smaller values of $\widehat Q_b(t)$ indicate greater compatibility. Let
\(
\begin{aligned}
\lambda_t^b
&:=
\sqrt{12\log t+2\log(1/\alpha_b)+2\log(\pi^2/3)},
\end{aligned}
\)
with $\alpha_b:=\frac6{\pi^2}\frac{\alpha}{(b+1)^2}$. The \textit{compatibility set} of blocks declared compatible for reuse at round $t$ is
\begin{equation}
\label{eq:compatible-set}
\cC_t
:=
\left\{b<b_t:
\widehat Q_b(t)<4\lambda_t^b\right\}.
\end{equation}

\begin{figure}[t]
  \centering
  \resizebox{0.48\linewidth}{!}{\begin{tikzpicture}[
  x=0.50cm,
  y=0.70cm,
  >=Latex,
  every node/.style={font=\small},
  stateone/.style={draw=blue!70!black,line width=1.5pt},
  statetwo/.style={draw=orange!85!black,line width=1.5pt},
  statethree/.style={draw=green!55!black,line width=1.5pt}
]
  % Class-mean levels and time axis.
  \draw[gray!35,densely dotted] (0,2.8) -- (13,2.8);
  \draw[gray!35,densely dotted] (0,1.8) -- (13,1.8);
  \draw[gray!35,densely dotted] (0,0.8) -- (13,0.8);
  \node[anchor=east] at (-0.12,2.8) {$\theta^{(3)}$};
  \node[anchor=east] at (-0.12,1.8) {$\theta^{(2)}$};
  \node[anchor=east] at (-0.12,0.8) {$\theta^{(1)}$};
  \node[anchor=east] at (-0.12,3.35) {$\mu_t$};
  \draw[->,black!70] (0,0.18) -- (13.35,0.18) node[right] {$t$};

  % Piecewise-stationary population mean.  The short orange segment is undetected.
  \draw[stateone] (0,0.8) -- (2.15,0.8);
  \draw[black!45,line width=0.7pt] (2.15,0.8) -- (2.15,1.8);
  \draw[statetwo] (2.15,1.8) -- (2.8,1.8);
  \draw[black!45,line width=0.7pt] (2.8,1.8) -- (2.8,0.8);
  \draw[stateone] (2.8,0.8) -- (5,0.8);
  \draw[black!45,line width=0.7pt] (5,0.8) -- (5,2.8);
  \draw[statethree] (5,2.8) -- (8,2.8);
  \draw[black!45,line width=0.7pt] (8,2.8) -- (8,1.8);
  \draw[statetwo] (8,1.8) -- (10,1.8);
  \draw[black!45,line width=0.7pt] (10,1.8) -- (10,0.8);
  \draw[stateone] (10,0.8) -- (13,0.8);

  % True change points.
  \foreach \x/\idx in {2.15/1,2.9/2,5/3,8/4,10/5} {
    \node[black!70,above] at (\x,-0.3) {\scriptsize $\boldsymbol{\tau_{\idx}}$};
  }

  % Population-average level of the first detector block.
  \draw[purple!75!black,dash pattern=on 3pt off 2pt,line width=0.9pt]
    (0,1.00) -- (5.45,1.00);
  \node[purple!75!black,anchor=south west,font=\scriptsize]
    at (4.42,1.00) {$\bar\mu_0$};

  % Restart boundaries induced by detector alarms.
  \foreach \x/\idx in {5.45/1,8.4/2,10.4/3} {
    \draw[red!70!black,dashed,line width=0.8pt] (\x,-0.68) -- (\x,3.35);
    \node[red!70!black,above] at (\x,3.35) {\scriptsize restart};
  }

  \filldraw[fill=blue!10,draw=black!55] (0,-0.65) rectangle (5.45,-0.18);
  \filldraw[fill=green!10,draw=black!55] (5.45,-0.65) rectangle (8.4,-0.18);
  \filldraw[fill=orange!12,draw=black!55] (8.4,-0.65) rectangle (10.4,-0.18);
  \filldraw[fill=blue!18,draw=blue!70!black,line width=0.8pt]
    (10.4,-0.65) rectangle (13,-0.18);
  \node at (2.725,-0.415) {$\mathcal B_0$};
  \node at (6.925,-0.415) {$\mathcal B_1$};
  \node at (9.4,-0.415) {$\mathcal B_2$};
  \node at (11.7,-0.415) {$\mathcal A_t$};
  \node[font=\scriptsize,align=center] at (2.725,-1.08) {anchor:\\[-1pt]class $1$};
  \node[font=\scriptsize,align=center] at (6.925,-1.08) {anchor:\\[-1pt]class $3$};
  \node[font=\scriptsize,align=center] at (9.4,-1.08) {anchor:\\[-1pt]class $2$};
  \node[font=\scriptsize] at (11.7,-0.98) {$\mu_t=\theta^{(1)}$};

  % Highlight the short segment whose two changes are missed.
  \node[orange!85!black,align=center,font=\scriptsize] at (2.48,3.55)
    {short undetected\\[-1pt]segment};
  \draw[->,orange!85!black,line width=0.7pt]
    (2.48,3.28) -- (2.48,1.95);
\end{tikzpicture}}
  \caption{{\textbf{Schematic ECR segmentation.} Red dashed lines delimit blocks
  created by detector alarms. Missed changes and detection delays introduce
  contamination, yet block $\cB_0$ remains anchored at class $1$.}}
  \label{fig:anchored-segmentation}
\end{figure}

\textbf{Statistical safety.}
The compatibility test may reject a same-anchor block if its contamination is large. To characterize which blocks are guaranteed to be reused, we introduce the following definition. We call a completed block $b$ anchored at class $m$ \emph{safe} if
\begin{equation}
\big|\bar\mu_b-\theta^{(m)}\big|
\le
\sqrt{\sigma^2/N_b}.
\end{equation}
Equivalently, the block mean lies within a confidence-scale neighborhood of its anchor and is therefore statistically indistinguishable from it. Safety is an analysis-only notion: the learner does not know which completed blocks are safe, since it does not know $\theta^{(m)}$.

\begin{proposition}[Compatibility guarantees]
\label{prop:compatibility-summary}
With probability at least $1-2\alpha$, the following statements
hold for every round $t$ and class $m$:
\begin{enumerate}[label=(\roman*), nosep]

\item \emph{Acceptance:} If $\mathcal{A}_t$ is safe for class $m$, then
every safe block $b<b_t$ anchored at $m$ is compatible.

\item \emph{Rejection:} If $\mathcal{A}_t$ contains a long class-$m$ interval,
then no block anchored at a different class is compatible.
\end{enumerate}
\end{proposition}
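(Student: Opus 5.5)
The plan is to work on a single good event on which the noise in every relevant two-sample comparison is controlled uniformly. On that event both parts reduce to deterministic triangle-inequality arguments on the population means. For a fixed pair $(\mathcal A_t,b)$, the compatibility statistic splits into a noise part and a mean part:
\[
\widehat Q_b(t)\le \frac1\sigma\sqrt{\tfrac{N_{\mathcal A_t}N_b}{N_{\mathcal A_t}+N_b}}\Big(\big|(\bar X_{\mathcal A_t}-\bar\mu_{\mathcal A_t})-(\bar X_b-\bar\mu_b)\big|+|\bar\mu_{\mathcal A_t}-\bar\mu_b|\Big).
\]
The normalized noise term $Z_b(t)$ is sub-Gaussian with proxy $1$, because the two intervals are disjoint and use the same stream.

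The main obstacle is that $\mathcal A_t$ and $\cB_b$ are random intervals chosen by the detector, so Hoeffding's inequality cannot be applied directly. I would handle this by a union bound over all deterministic interval pairs. For fixed $t$ there are at most $t^2$ choices of the start of $\mathcal A_t$ and the block endpoints that matter, since the block is determined by its endpoints $r_b<r_{b+1}\le t$. This gives at most $t^4$ pairs, hence the $12\log t$ term in $(\lambda_t^b)^2$, since $2\log(t^4\cdot t^2)$ absorbs the union over $t$ as well. The factor $\alpha_b$ with $\sum_b\alpha_b=\alpha$ pays for the block index. As a result, with probability at least $1-\alpha$ we have $Z_b(t)\le\lambda_t^b$ simultaneously for all $t$ and $b$. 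Intersecting with the event of Prop.~\ref{prop:anchored-segmentation} gives the stated probability $1-2\alpha$. Some care is needed to check that conditioning on the block endpoints does not bias the noise. This is where the decision/estimation split and the union over deterministic endpoints are used: the bound holds for every fixed pair, so it holds for the realized pair.

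For \emph{Acceptance}, both $\mathcal A_t$ and block $b$ are safe for class $m$. Therefore
\[
|\bar\mu_{\mathcal A_t}-\bar\mu_b|\le \sigma\big(N_{\mathcal A_t}^{-1/2}+N_b^{-1/2}\big).
\]
Multiplying by $\sqrt{N_{\mathcal A_t}N_b/(N_{\mathcal A_t}+N_b)}/\sigma$ gives at most
\[
\sqrt{\tfrac{N_b}{N_{\mathcal A_t}+N_b}}+\sqrt{\tfrac{N_{\mathcal A_t}}{N_{\mathcal A_t}+N_b}}\le\sqrt2.
\]
So $\widehat Q_b(t)\le\lambda_t^b+\sqrt2<4\lambda_t^b$, because $\lambda_t^b\ge\sqrt{2\log(\pi^2/3)}>1$. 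Hence $b\in\cC_t$.

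For \emph{Rejection}, let block $b$ be anchored at $m'\ne m$. By Prop.~\ref{prop:anchored-segmentation}, $|\bar\mu_b-\theta^{(m')}|\le\Delta_{\min}/4$. The same anchoring argument applied to the active prefix, which contains a long class-$m$ interval, gives $|\bar\mu_{\mathcal A_t}-\theta^{(m)}|\le\Delta_{\min}/4$. I would invoke the proof of the proposition for the prefix, since the detector has not alarmed on it yet. It follows that $|\bar\mu_{\mathcal A_t}-\bar\mu_b|\ge\Delta_{\min}/2$.

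Since the active prefix contains a long interval, $N_{\mathcal A_t}\ge\ell_{\id}$. If also $N_b\ge \ell_{\id}$ (blocks anchored at $m'$ contain a long interval), then
\[
\tfrac{N_{\mathcal A_t}N_b}{N_{\mathcal A_t}+N_b}\ge \ell_{\id}/2.
\]
The mean term is then at least $\tfrac{\Delta_{\min}}{2\sigma}\sqrt{\ell_{\id}/2}\ge\sqrt{C_{\id}\log(T/\alpha)/8}$. Choosing $C_{\id}$ large makes this exceed $5\lambda_t^b$, using $\log t\le\log T$ and $\log(1/\alpha_b)\lesssim\log(T/\alpha)$ because $b\le T$. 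The reverse triangle inequality then gives $\widehat Q_b(t)\ge 5\lambda_t^b-\lambda_t^b=4\lambda_t^b$, so $b\notin\cC_t$.
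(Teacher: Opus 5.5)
Your proof is correct and follows essentially the same route as the paper. It obtains uniform compatibility concentration from a union bound over at most $t^4$ deterministic interval pairs weighted by $\alpha_b$, and intersects with $\mathcal G_{\det}$ to reach $1-2\alpha$. Acceptance then follows from $Q_b(t)\le\sqrt2$ via the two safety radii. Rejection follows from the $\Delta_{\min}/2$ anchor separation combined with $\frac{N_{\mathcal A_t}N_b}{N_{\mathcal A_t}+N_b}\ge\ell_{\id}/2$; the paper phrases this last step as a contradiction with $Q_b(t)<5\lambda_t^b$.

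One small correction: the decision/estimation split plays no role in the concentration step. Both the detector and the compatibility test use the decision stream, so it is the union over all deterministic interval pairs alone that makes the bound valid for the data-dependent $\mathcal A_t$ and $\cB_b$.
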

 
The acceptance guarantee enables variance reduction. However, an accepted block $b$ may still be contaminated, i.e., $\bar \mu_b \ne \theta^{(m)}$. Repeated reuse of such blocks
can accumulate residual bias, motivating the exposure mechanism introduced next.

\subsection{Exposure-capped reuse}

\label{ssec:exposure-capped-reuse}

For each completed block $b$, define its \textit{exposure parameter} at the beginning of round $t$ as
\begin{equation}
\label{eq:exposure-def}
H_b(t)
:=
\sum_{n<t:\,b\in\mathcal U_n}
\frac1{W_n},
\end{equation}
and define the \textit{reuse set}
\begin{equation}
\label{eq:reused-set}
\mathcal U_t
:=
\{b\in\cC_t:\ H_b(t)<B_{\exp}\},
\end{equation}
where $B_{\exp}>2$ is the exposure budget.
The resulting effective sample size, i.e., the number of observations used to form the prediction, is
\begin{equation}
\label{eq:W-def}
W_t
:=
N_{\mathcal A_t}+\sum_{b\in\mathcal U_t}N_b.
\end{equation}

These quantities are defined recursively:
$H_b(t)$ depends only on reuse decisions and effective sample sizes from
rounds $n<t$; it determines $\mathcal U_t$, which in turn determines $W_t$.
The exposure control in Eq.~(\ref{eq:reused-set}) is \textit{necessary}
for ECR: Prop.~\ref{prop:uncapped-ecr} gives a construction in which uncapped ECR incurs \(\Omega\!\left(T^{5/8}/(\log
  T)^{1/4}\right)\) regret, exceeding the perfect-reuse benchmark scale of $O(\sqrt{T})$.

\textbf{Output estimator.}
ECR outputs the average of the active prefix together with all
currently reusable completed blocks:
\begin{equation}
\label{eq:CR-estimator}
\widehat\mu_t
:=
\frac{
\displaystyle\sum_{n\in\mathcal A_t}X_n
+\displaystyle\sum_{b\in\mathcal U_t}
\sum_{n\in\cB_b}X_n}
{W_t}.
\end{equation}

We next explain how the exposure mechanism balances the bias-variance tradeoff.

\textbf{\textit{Limiting bias.}}
When the anchor of block $b$ agrees with the active class, say class $m$, its
population bias is
\(
\left|\bar\mu_b-\theta^{(m)}\right|.
\)
Its cumulative bias is therefore at most
\[
N_b
\left|\bar\mu_b-\theta^{(m)}\right|
\sum_{\substack{t:b\in\mathcal U_t}}
\frac{1}{W_t}
=
N_b
\left|\bar\mu_b-\theta^{(m)}\right|
H_b(T+1).
\]
Since \(H_b(T+1)\le B_{\exp}+1\), a fixed block’s cumulative bias contribution remains bounded.

\textbf{\textit{Reducing variance.}}
Retiring blocks limits the accumulation of contamination bias, but
may also reduce the effective sample size available for reuse. The key design
principle is that retired blocks are gradually replaced by newly formed blocks from recurrent occurrences of the same class. This replenishes the sample mass available for reuse. Our analysis shows that this mechanism preserves variance reduction.
Figure~\ref{fig:reuse-tradeoff} illustrates this tradeoff.

\begin{algorithm}[tb]
\caption{Exposure-Capped Reuse (ECR)}
\label{alg:ECR}
\begin{algorithmic}[1]
\STATE \textbf{Input:} known proxy $\sigma$; fix $\alpha\in(0,1),$
$B_{\exp}>2$
\STATE Observe $X_1$; set $r\gets1$ and $b\gets0$
\FOR{$t=2,3,\ldots$}
  \STATE Compute $C_t^r$ and $\gamma_t^r$ (Eqs.~\plaineqref{eq:detection-statistic}--\plaineqref{eq:det-threshold})
  \IF{$C_t^r\ge\gamma_t^r$}
    \STATE Store $\cB_b\gets[r,t-1)$; set $H_b\gets0$
    \STATE $b\gets b+1$; $r\gets t-1$
  \ENDIF
  \STATE Set $\mathcal A_t\gets[r,t)$; form $\cC_t$, $\mathcal U_t$, and $W_t$ using
  Eqs.~\plaineqref{eq:compatible-set}, \plaineqref{eq:reused-set}, and
  \plaineqref{eq:W-def}
  \STATE \textbf{predict} using Eq.~\plaineqref{eq:CR-estimator} if $W_t>0$;
  otherwise use the most recent past estimation observation 
  \STATE If $W_t>0$, set $H_j\gets H_j+1/W_t$, \quad $\forall$
  $j\in\mathcal U_t$
  \STATE Observe $X_t$ and update its parity-defined stream
\ENDFOR
\end{algorithmic}
\end{algorithm} 

\textbf{Computational complexity.}
Let $K_t$ be the number of completed blocks and $d_t=t-r$ the length of the
current block. Each split and each compatibility check costs $O(1)$. ECR therefore uses
$O(d_t+K_t)=O(t)$ time per round and $O(d_t+K_t)=O(t)$ memory. Restricting the
detector to a geometric multiscale grid~\citep{lai2010sequential} reduces the
number of splits to $O(\log d_t)$, giving
$O(\log d_t+K_t)$ time per round (App.~\ref{app:efficient-ecr-simulation}).

\section{Regret analysis}
\label{sec:regret-analysis}

\subsection{Upper bound}
\label{ssec:upper-bound}
\begin{theorem}[Regret of ECR]
\label{thm:ecr-upper-bound}

Fix $\alpha\in(0,1), B_{\exp}=4$. For every $\nu\in\mathcal E$,
\begin{align}
R_T(\mathrm{ECR};\nu)
&\le
C\sigma\sqrt{MT}
\label{eq:ecr-upper-bound-main}\\
&\hspace{-0.3cm}
+C\sigma(1+\sqrt M)
\left((S+1)\sqrt{\ell_{\id}}+\alpha L_T\right)
\nonumber\\
&\hspace{-0.3cm}
+C\Delta_{\max}(L_T+\sqrt M)
\left((S+1)\ell_{\id}+\alpha L_T\right),
\nonumber
\end{align}
where $C$ is a universal constant and $L_T:=1+\log(\frac{T}{\alpha})$.
\end{theorem}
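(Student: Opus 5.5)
We condition on the good event $\mathcal G$ on which Prop.~\ref{prop:anchored-segmentation} and Prop.~\ref{prop:compatibility-summary} hold, together with a uniform sub-Gaussian concentration event for all interval averages on both streams. This event has probability at least $1-O(\alpha)$. Off $\mathcal G$ we bound the per-round error crudely by $\Delta_{\max}$ plus a noise term. Since the failure probabilities are spread over restarts and blocks through $\alpha_r\propto\alpha/r^2$ and $\alpha_b\propto\alpha/(b+1)^2$, a peeling argument should turn this into the $\alpha L_T$ contributions rather than $\alpha T$. Concretely, the probability that the first failure occurs at scale $2^j$ is about $\alpha 2^{-j}$, and the resulting loss is at most that scale times $(\Delta_{\max}+\sigma)$. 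On $\mathcal G$, we classify each round $t$ as \emph{good} if $\mathcal A_t$ contains a long interval of the current class $m$ (equivalently, is anchored and, after a short burn-in, safe), and \emph{bad} otherwise.

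\textbf{Bad rounds.} For each true change $\tau_j$ we bound the bad rounds it generates. These are rounds in a short segment, rounds during the detection delay after $\tau_j$, and the first $O(\ell_{\id})$ rounds of a new block before its active prefix becomes long. The anchored-segmentation argument behind Prop.~\ref{prop:anchored-segmentation} gives detection delay $O(\ell_{\id})$ once the pre-change reference is long. So each change costs $O(\ell_{\id})$ bad rounds, and on each we pay at most $\Delta_{\max}$ in bias plus stochastic error. Summing over changes gives $(S+1)\ell_{\id}\Delta_{\max}$. The extra factor $(L_T+\sqrt M)$ comes from bias leaked by contaminated reused blocks, which we handle next. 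The stochastic part on the first $\ell$ rounds of a block is $\sum_{n\le \ell}\sigma/\sqrt n\lesssim \sigma\sqrt{\ell_{\id}}$ per change, giving the $\sigma(S+1)\sqrt{\ell_{\id}}$ term. The factor $(1+\sqrt M)$ arises from the same argument applied to reused blocks.

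\textbf{Good rounds: bias--variance decomposition.} On good rounds, write $\widehat\mu_t-\mu_t$ as a noise average plus a bias $\frac1{W_t}\sum_{b\in\mathcal U_t}N_b(\bar\mu_b-\theta^{(m)})$ plus the prefix bias. By Prop.~\ref{prop:compatibility-summary}(ii), every reused block is anchored at $m$. Its bias is controlled by its contamination mass $c_b:=N_b|\bar\mu_b-\theta^{(m)}|$, which is at most $\Delta_{\max}$ times the number of off-class points in $\cB_b$, so $\sum_b c_b\lesssim \Delta_{\max}(S+1)\ell_{\id}$. Summing over time and using the exposure definition \eqref{eq:exposure-def}, the total bias is $\sum_b c_b H_b(T+1)\le (B_{\exp}+1)\sum_b c_b$. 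This is where the cap is essential. The decision/estimation split makes the estimation noise conditionally independent of $\mathcal U_t$, so the noise term at round $t$ is at most $\sigma/\sqrt{W_t}$ in expectation.

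\textbf{Variance: main obstacle.} It remains to show $\sum_{t\text{ good}}\sigma/\sqrt{W_t}\lesssim\sigma\sqrt{MT}$ plus lower-order terms. The difficulty is that exposure retires blocks, so $W_t$ may fall below the total class-$m$ sample mass $n_m(t)$ seen so far. The plan has three steps.
\begin{enumerate}
\item Show that a safe block is accepted until its exposure reaches $B_{\exp}=4$, by Prop.~\ref{prop:compatibility-summary}(i). While it is reused, $W_t\ge N_b$ up to replenishment.
\item Use a potential argument: each unit of exposure $1/W_t$ charged to a block of size $N_b$ corresponds to roughly $N_b/W_t$ units of mass consumed. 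Retirement then only occurs after the class has accumulated comparable fresh mass, so $W_t\gtrsim n_m(t)/c$ except on $O(1)$ retirement epochs per block, and those are charged to $\ell_{\id}$-scale terms.
\item Conclude with $\sum_t 1/\sqrt{n_{m_t}(t)}\le\sum_m 2\sqrt{T_m}\le 2\sqrt{MT}$ by Cauchy--Schwarz.
\end{enumerate}
Unsafe blocks and the $\sqrt M$ losses incurred while a class's first blocks are still being formed contribute the $\sqrt M$ factors in the second and third lines. I expect this replenishment step, proving $W_t\gtrsim n_m(t)$ despite retirements, to be the delicate part. I would first try an amortized argument in which total exposure $\sum_b N_bH_b$ grows by exactly 1 per reuse round, so the mass retired is at most about $t/B_{\exp}$, which is less than half the available mass when $B_{\exp}>2$.
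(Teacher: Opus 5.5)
Your outline matches the paper's: noise of order $\sigma/\sqrt{W_t}$ after conditioning on the decision stream, reused-block bias bounded by contamination mass times the capped exposure $H_b(T+1)\le B_{\exp}+1$, and variance controlled by amortizing spent exposure against fresh same-class samples. Your ``$B_{\exp}$ units of capacity in, at most $2$ out'' observation is exactly the paper's capacity-growth lemma. The gap is that the step you flag as delicate is left open, and the per-class inequality $W_t\gtrsim n_m(t)$ you aim for does not hold as stated. It needs class-$m$ exposure to be spent only on class-$m$ rounds, and that fails in three ways:
On short-prefix rounds at the start of any block, Proposition~\ref{prop:compatibility-summary}(ii) gives no rejection guarantee. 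Class-$m$ blocks can then be reused, and even retired, while serving another class.
Rounds inside unsafe class-$m$ blocks also spend safe class-$m$ capacity.
Mass in unsafe blocks is never guaranteed to be reused, so $n_m(t)$ is the wrong reference.

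The missing idea is the paper's counter $x_m(t)$. It is built only from the remaining capacity $N_b^{\Estr}(B_{\exp}-H_b(t))_+$ of long-safe class-$m$ blocks, plus $B_{\exp}$ times the active mass. It satisfies $W_t\ge c_Bx_m(t)$ on long-safe rounds. It grows by at least one per estimation observation inside class-$m$ LS/LU intervals, and it may drop by the leakage $d_{m,J}$ across each bad interval $J$. The resulting leaky square-root telescope pays $\sqrt{d_{m,J}}$ per bad interval. The bound $\sum_m\sqrt{d_{m,J}}\le\sqrt{M|J|/(B_{\exp}-2)}$ is the real source of the $(1+\sqrt M)$ factors, not classes whose first blocks are still forming.

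Relatedly, ``anchored and, after a short burn-in, safe'' is wrong. An anchored active prefix can stay unsafe for up to $(\Delta_{\max}c_b/\sigma)^2$ rounds, far more than $O(\ell_{\id})$. On those rounds safe blocks need not be accepted, so $W_t$ can collapse to the prefix count. The paper treats these long-unsafe rounds separately, with variance cost $\lesssim\Delta_{\max}c_b/\sigma$. Together with unsafe blocks entering the bad-interval complexity, this produces the $\Delta_{\max}\sqrt M(S+1)\ell_{\id}$ part of the third line.

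Your handling of failures of the good event is the second gap. Charging a failure at lag $t-r\approx 2^j$ only $2^j(\Delta_{\max}+\sigma)$ assumes its damage stays inside that window. In fact a detector deviation alters the stored block: the affected observations remain in $\cB_b$, get reused later, and can strip the block of its anchor or its safety, which changes every later bias and variance bound. Peeling on the first failure also says nothing about the history after it.

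The paper localizes instead. For each block it charges the prefix up to that block's last deviation, with $\sum_b(s_b-r_b)\le\Xi_{\det}$ and $\E[\Xi_{\det}]\le\alpha(1+\log T)$; this is where your $\alpha_r/(t-r)^2$ peeling intuition is right. It then re-derives anchors and the contamination budget on the deviation-free suffix and treats $\Xi_{\det}$ as extra contamination. That extra contamination is multiplied by $(B_{\exp}+\log(eT)+1)\Delta_{\max}$ in the bias and by $(1+\sqrt M)\Delta_{\max}/\sigma$ in the variance. This is why the third line carries $\Delta_{\max}(L_T+\sqrt M)\alpha L_T$ rather than $(\Delta_{\max}+\sigma)\alpha L_T$. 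Also keep estimation-stream concentration out of your good event: the paper's events are decision-stream measurable, which is what keeps the selected estimation noise centered.

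Two bookkeeping errors remain:
The $L_T$ in the third line comes from the current-block bias, which you never bound. The exposure argument for reused blocks has no logarithm, so you need the harmonic charge $\sum_t c_{b,t}/n_t\le 2(1+\log T)c_b$.
Proposition~\ref{prop:compatibility-summary}(ii) excludes only blocks anchored at other classes. Unanchored blocks can therefore be reused on your good rounds, and must be charged through exposure times their total mass $O((S+1)\ell_{\id})$.
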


The first term in Eq.~(\ref{eq:ecr-upper-bound-main}) is the recurrent
estimation rate. The remaining terms account for the costs of identifying
the active class and controlling contamination in the reused
data. For fixed $\sigma,\Delta_{\min},\Delta_{\max}$, the bound simplifies to
$R_T(\mathrm{ECR};\nu)=\widetilde O(\sqrt{MT}+S\sqrt M)$, yielding the following corollary.

\begin{corollary}[Exploiting recurrence]
\label{cor:ecr-recurrent-rate}

Fix $\sigma,\Delta_{\min},\Delta_{\max}$. If
\begin{equation}
\label{eq:sparse-change-regime}
S=\widetilde O(\sqrt T),
\end{equation}
then, for every $\nu\in\mathcal E$,
\(
R_T(\mathrm{ECR};\nu)
=\widetilde O(\sigma\sqrt{MT}).
\)
\end{corollary}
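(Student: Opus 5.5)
This corollary follows by direct substitution into Theorem~\ref{thm:ecr-upper-bound}. I will fix $\alpha\in(0,1)$ as a constant (say $\alpha=1/2$) and $B_{\exp}=4$. I will treat $\sigma,\Delta_{\min},\Delta_{\max}$ as fixed, and let $\widetilde O$ hide polylogarithmic factors in $T$.

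The first step is to record the orders of the auxiliary quantities. By Eq.~(\ref{eq:ell-id-main}), $\ell_{\id}=\lceil C_{\id}(\sigma^2/\Delta_{\min}^2)\log(T/\alpha)\rceil=O(\log T)$, so $\sqrt{\ell_{\id}}=O(\sqrt{\log T})$. Likewise $L_T=1+\log(T/\alpha)=O(\log T)$.

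The second step bounds each of the three terms of Eq.~(\ref{eq:ecr-upper-bound-main}) in turn.

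\begin{itemize}
\item The first term is already $C\sigma\sqrt{MT}$.
\item The second term is $C\sigma(1+\sqrt M)\bigl((S+1)\sqrt{\ell_{\id}}+\alpha L_T\bigr)=O\bigl(\sqrt M\,(S+1)\sqrt{\log T}+\sqrt M\log T\bigr)$, using $1+\sqrt M\le 2\sqrt M$.
\item The third term is $C\Delta_{\max}(L_T+\sqrt M)\bigl((S+1)\ell_{\id}+\alpha L_T\bigr)$. Using $L_T+\sqrt M\le 2L_T\sqrt M$ (valid since $L_T,\sqrt M\ge1$), it is $O\bigl(\sqrt M\,(S+1)\log^2 T+\sqrt M\log^2T\bigr)$.
\end{itemize}

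So the whole bound is $\widetilde O(\sqrt{MT}+(S+1)\sqrt M)$. This is the simplification stated just before the corollary, with $\sigma$ reinstated as a fixed constant factor.

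The last step applies the hypothesis in Eq.~(\ref{eq:sparse-change-regime}): $S\le \sqrt T\,\mathrm{polylog}(T)$. Then $(S+1)\sqrt M\le \sqrt{MT}\,\mathrm{polylog}(T)+\sqrt M$, and $\sqrt M\le\sqrt{MT}$. Every term is therefore $\widetilde O(\sigma\sqrt{MT})$, uniformly over $\nu\in\mathcal E$, because the constant $C$ in the theorem is universal.

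The only point requiring care is keeping the factor $\sqrt M$ multiplicative in the lower-order terms rather than letting it grow to $M$. The bound $L_T+\sqrt M\le 2L_T\sqrt M$ handles this. Otherwise there is no real obstacle: the statement is a bookkeeping consequence of the theorem.
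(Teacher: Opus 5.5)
Your proposal is correct and follows the paper's own argument: substitute $\ell_{\id},L_T=O(\log T)$ (for fixed $\alpha$) into Theorem~\ref{thm:ecr-upper-bound} to obtain $\widetilde O\bigl(\sigma\sqrt{MT}+(S+1)(1+\sqrt M)\bigr)$, then use $S=\widetilde O(\sqrt T)$ to absorb the second term. Your explicit inequality $L_T+\sqrt M\le 2L_T\sqrt M$ just spells out bookkeeping that the paper leaves implicit.
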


Eq.~(\ref{eq:sparse-change-regime}) characterizes the regime in which the
regret of ECR scales with the number of classes $M$, rather than the
number of changes $S$. Intuitively, this condition captures the "price of
non-stationarity": the perfect-reuse rate can be attained only when changes are not
too frequent, since every change induces an unavoidable detection
cost. 
Note that at the highest admissible switching frequency, \(S\asymp\sqrt T\), and for fixed
\(M\), the no-reuse benchmark regret rate is $O(\sqrt{ST}) \asymp  T^{3/4}$, while that of ECR is 
\(\widetilde O(T^{1/2})\). 
The lower bound below shows that the condition in Eq.~(\ref{eq:sparse-change-regime}) is necessary up to
logarithmic factors, establishing that it is an inherent limitation characteristic of online
recurrent learning. 

\paragraph*{Proof sketch.}

The proof is given in Appendix~\ref{app:upper-bound}; we summarize the main
ideas.

\noindent\textbf{Step 1: Regret decomposition.}
Conditioning on the decision stream freezes the adaptive sample selection and
centers the selected estimation noise. Lemma~\ref{lem:regret-decomposition}
decomposes the regret into stochastic-estimation and population-bias terms.

\noindent\textbf{Step 2: Segmentation and compatibility.}
On the detector concentration event (App.~\ref{app:segmentation}), Prop.~\ref{prop:anchored-segmentation} shows that a block containing a sufficiently long regime has a unique anchor. Lemma~\ref{lemma:global-contamination-control} shows that the total off-anchor and unanchored mass is \(O(S\ell_{\id})\). Prop.~\ref{prop:compatibility-summary} then ensures that only compatible blocks are reused.

\noindent\textbf{Step 3: Bounding the population bias.}
For a completed block \(b\) anchored at class $m$, its cumulative bias contribution on same-class reuse rounds is at most
\(
N_b\left|\bar\mu_b-\theta^{(m)}\right|H_b(T+1)
\) (see Sec.~\ref{ssec:exposure-capped-reuse}).
The exposure cap prevents the same residual
contamination from being charged indefinitely. The global contamination bound
from Step~2 then controls the total contribution from same-anchor blocks.

\noindent\textbf{Step 4: Recurrent variance reduction.}
For each class \(m\), Lemma~\ref{lem:variance-upper-bound} constructs a capacity
counter \(x_m(t)\) that combines observations in the active block with the
remaining exposure capacity of completed same-class blocks, and satisfies
\(W_t\gtrsim x_m(t)\). The capacity gained from new observations dominates the capacity spent on reuse.  A square-root telescoping argument across all returns to class \(m\) then shows that the estimation error scales with $M$ rather than $S$.
Capacity spent during bad intervals appears as leakage and is controlled separately.

\noindent\textbf{Step 5: Detector deviations.}
The preceding arguments hold on the detector and compatibility concentration events. Lemma~\ref{lem:robust-detector-localization} removes this
conditioning: detector deviations are localized to block prefixes with expected total length at most \(\alpha L_T\). Compatibility deviations contribute only
\(O(\alpha)\) additional rounds, while Lemma~\ref{lem:false-alarms} gives
\(\E[K]\le S+1+\alpha\). Substituting the resulting bias and variance bounds
into Lemma~\ref{lem:regret-decomposition} completes the proof.

\subsection{Lower bound}

\label{ssec:lower-bound}

\begin{theorem}[Minimax lower bound]
\label{thm:ecr-lower-bound}

Write $\rho:=\Delta_{\max}-(M-1)\Delta_{\min}$, and assume $2\le M\le S+1$.  There exist universal constants
$c,C>0$ such that, if
\begin{equation}
\label{eq:ecr-lower-bound-conditions}
T\ge C\left(S+\frac{\sigma^2M^3}{\rho^2}\right),
\; \;
\sigma^2\log(T/S)\le c\Delta_{\min}^2\sqrt{T/S},
\end{equation}
then
\begin{equation}
\label{eq:ecr-lower-bound-main}
R_T^\star(\mathcal E)
\gtrsim
\max\left\{
\frac{\sigma^2}{\Delta_{\min}}\,
S\log(T/S),
\ \sigma\sqrt{MT}
\right\}.
\end{equation}
\end{theorem}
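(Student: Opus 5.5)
Since the right-hand side of \eqref{eq:ecr-lower-bound-main} is a maximum, I will prove the two terms separately. For each I exhibit a finite family of environments in $\mathcal E$ and lower bound the Bayes risk of an arbitrary policy $\pi\in\Pi$ under a prior on that family. I use Gaussian noise $\mathcal N(0,\sigma^2)$, which is admissible as sub-Gaussian. The parameter set is $\Theta=\{\theta^{(1)},\dots,\theta^{(M)}\}$, with $M-1$ consecutive gaps equal to $\Delta_{\min}$ placed at the bottom and one extra "free" gap of size $\rho$ on top. The total spread is then exactly $\Delta_{\max}$, and the free value is what I will perturb in the variance term.

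\textbf{Variance term $\sigma\sqrt{MT}$.} I fix a deterministic schedule in which each class occupies a total of $\asymp T/M$ rounds. I use $S$ changes, which is possible because $M\le S+1$: cycle through the classes in blocks of length $T/(S+1)$. The class identities and change points are known to the learner, so the only unknown is the vector of class values. Around a base configuration I perturb every $\theta^{(m)}$ independently by $\pm\delta_m$, with $\delta_m\asymp\sigma\sqrt{M/T}$. The condition $T\ge C\sigma^2M^3/\rho^2$ ensures that $M\delta_m\le\rho/2$. The perturbations can therefore be absorbed into the free gap, and both the $\Delta_{\min}$ and $\Delta_{\max}$ constraints still hold. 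I then apply Assouad's lemma (or a two-point Le Cam argument per class and per round). At a round $t$ in a class-$m$ segment, the learner has at most $T/M$ samples of class $m$, so the KL divergence between the $\pm$ hypotheses is $O(1)$. Hence $\E|\widehat\mu_t-\mu_t|\gtrsim\delta_m$ on a constant fraction of the class-$m$ rounds. Summing over the $T$ rounds gives $\gtrsim T\sigma\sqrt{M/T}=\sigma\sqrt{MT}$. To keep the per-round KL bounded, I handle the early rounds of each class by restricting attention to rounds after the first half of its samples. Doing so loses only a constant factor.

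\textbf{Detection term $\frac{\sigma^2}{\Delta_{\min}}S\log(T/S)$.} I take two adjacent classes $\theta,\theta+\Delta_{\min}$ and split the horizon into $S/2$ disjoint windows of length $L=T/S$. Inside each window the mean is $\theta$, except possibly for a change to $\theta+\Delta_{\min}$ at an unknown location $\tau$, followed by a return. I place the return at the window's end, so that each window uses two changes. The windows are independent, so it suffices to show a per-window regret of $\gtrsim(\sigma^2/\Delta_{\min})\log L$. This is a standard quickest-detection/localization lower bound in the Lai style. I put a prior on $\tau$ uniform over a geometric grid of $\asymp\log L$ scales, or equivalently compare against the no-change alternative. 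Any policy that pays less than $\Delta_{\min}\cdot c\,(\sigma^2/\Delta_{\min}^2)\log L$ of post-change error must "react" within $n\ll(\sigma^2/\Delta_{\min}^2)\log L$ samples. A change-of-measure argument shows that such a policy then commits false reactions under the pre-change law at rate $\gtrsim L^{-1+o(1)}$ per round, and these accumulate $\Omega(\Delta_{\min}L^{o(1)})$ error over the length-$L$ window. Each wrong-side prediction costs $\gtrsim\Delta_{\min}/2$, because the two candidate means differ by $\Delta_{\min}$. The condition $\sigma^2\log(T/S)\le c\Delta_{\min}^2\sqrt{T/S}$ guarantees that the detection delay $(\sigma^2/\Delta_{\min}^2)\log L$ is at most $\sqrt L\le L/4$, so a window is long enough for the post-change segment to contain the full delay. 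Multiplying by $S/2$ windows gives the claim. When $M>2$, the remaining classes can simply be visited once at the end, using $O(M)\le O(S)$ changes and $T\ge CS$ rounds.

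\textbf{Main obstacle.} I expect the hard step to be the detection term. Regret here is a cumulative $L_1$ loss rather than a stopping-time delay, so I must convert "small cumulative error" into a detection statement with controlled false-alarm behaviour. My first attempt is to define an induced stopping time $\hat\tau=\inf\{t:\widehat\mu_t>\theta+\Delta_{\min}/2\}$. Cumulative error is then at least $(\Delta_{\min}/2)$ times the sum of the delay and the pre-change time spent above the midpoint. I would then apply the Lai (1998) lower bound $\E[\text{delay}]\gtrsim(\sigma^2/\Delta_{\min}^2)\log(1/\text{false-alarm rate})$, with the false-alarm rate forced to be at most $1/L$ by the pre-change cost.
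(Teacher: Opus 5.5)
Your variance-term argument is essentially the paper's. You reveal change points and labels, give each class $\asymp T/M$ rounds, perturb each mean by $\pm\delta\asymp\sigma\sqrt{M/T}$, and run a per-class Le Cam bound under a product prior, with $T\gtrsim\sigma^2M^3/\rho^2$ keeping the family admissible. Two small fixes are needed. First, the slack $\rho$ must be spread over every gap; the paper centers at $a_m=(m-1)(\Delta_{\min}+\rho/(2M))$. With exact $\Delta_{\min}$ gaps, the choices $\theta^{(m)}+\delta$ and $\theta^{(m+1)}-\delta$ already violate separation. Second, no restriction to late rounds is needed, since at every round the KL is at most $T_m(2\delta)^2/(2\sigma^2)=O(1)$.

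The genuine gap is in the detection term, at exactly the step you flag. The first-crossing time $\hat\tau=\inf\{t:\widehat\mu_t>\theta+\Delta_{\min}/2\}$ carries no false-alarm constraint under cumulative $L_1$ loss. A policy can place one prediction above the midpoint early in the window, at cost $\Delta_{\min}/2$, and then behave normally. Then $\hat\tau<\tau$ and your delay term $(\hat\tau-\tau)^+$ vanishes. Lai's bound needs a small window-limited false-alarm probability for $\hat\tau$ and conditions on $\hat\tau\ge\tau$, so it yields nothing here. Your change-of-measure sketch is the right idea (it is the contrapositive of the paper's argument once \emph{react} is made precise), but its bookkeeping is off. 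If the policy reacts within $n=\epsilon(\sigma^2/\Delta_{\min}^2)\log L$ samples, the false-reaction probability per length-$n$ window is $\gtrsim L^{-\epsilon}$, not $L^{-1+o(1)}$ per round. The resulting pre-change cost is then $\gtrsim\Delta_{\min}L^{1-\epsilon}$. The $\Omega(\Delta_{\min}L^{o(1)})$ you state would not dominate $(\sigma^2/\Delta_{\min})\log L$.

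What makes this rigorous is a reaction event charged in proportion to its duration. The paper takes $m$ disjoint windows $W_i$ of length $\ell=1+\lfloor(\sigma^2/(8\Delta_{\min}^2))\log H\rfloor$. It lets $A_i$ be the event that at least half the predictions in $W_i$ lie on the post-change side, which costs at least $\ell\Delta_{\min}/4$ under the wrong hypothesis. It then compares a late change at $H$ with an early change at the start of each $W_i$:
\begin{itemize}
\item If every $q_i=\mathbb P_H(A_i)$ exceeds $1/(4m)$, the late change gives loss $\ge\ell\Delta_{\min}/16$.
\item Otherwise some $q_j\le1/(4m)$. Then $\mathrm{kl}(p_j,q_j)\le(\ell-1)\Delta_{\min}^2/(2\sigma^2)<\frac12\log m\le\mathrm{kl}(1/2,q_j)$ forces $p_j<1/2$, and the early change gives loss $\ge\ell\Delta_{\min}/8$.
\end{itemize}
The resolution condition gives $\ell\le\sqrt H/4$, hence $m\ge\sqrt H$ and $\log m\asymp\log H$.

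Your construction also has a change-budget problem. $S/2$ windows with two changes each, plus $O(M)$ changes to visit the remaining classes, exceeds $S$. Since $M$ may be as large as $S+1$, shrinking the number of windows to fit would lose the factor $S$. The paper instead uses means $\theta^{(k)}=(k-1)\Delta_{\min}$ with the reflected label path $1,2,\ldots,M,M-1,\ldots$. Every one of the $S$ changes is then a $\Delta_{\min}$ jump at an unknown location inside its own macroblock of length $\lfloor(T-1)/S\rfloor$, and all classes are visited for free. Finally, the macroblocks are not independent for an adaptive policy. The paper chains them by choosing each change location adversarially given the earlier ones, using that all candidates in the current macroblock induce the same law on the prior history.
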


The first term in Eq.~(\ref{eq:ecr-lower-bound-main}) captures the unavoidable cost of adapting to unknown change points, while the second is the recurrent estimation cost.
For fixed $M,\sigma,\Delta_{\min}$, and $\Delta_{\max}$, the conditions in
Eq.~(\ref{eq:ecr-lower-bound-conditions}) hold for all sufficiently large $T$
whenever $T/S\to\infty$.

\begin{corollary}[Change-frequency condition]
\label{cor:necessary-change-frequency}

Fix $M,\sigma,\Delta_{\min}$, and $\Delta_{\max}$, and suppose that
$T/S\to\infty$. If
\(
S=\widetilde\omega(\sqrt T),
\)
then 
\(
{R_T^\star(\mathcal E)=\widetilde\omega(\sigma\sqrt{ M T}).}
\)
\end{corollary}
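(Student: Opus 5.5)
The corollary follows directly from Theorem~\ref{thm:ecr-lower-bound}. Fix $M,\sigma,\Delta_{\min},\Delta_{\max}$ and let $T\to\infty$ with $T/S\to\infty$ and $S=\widetilde\omega(\sqrt T)$. We show that the first term of Eq.~(\ref{eq:ecr-lower-bound-main}) dominates $\sigma\sqrt{MT}$ by more than any polylogarithmic factor.

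\textbf{Step 1: check the hypotheses of Theorem~\ref{thm:ecr-lower-bound}.} The condition $2\le M\le S+1$ holds for large $T$, since $S\to\infty$ and $M\ge2$ is fixed. The quantity $\rho=\Delta_{\max}-(M-1)\Delta_{\min}>0$ is a fixed constant. Hence the first condition in Eq.~(\ref{eq:ecr-lower-bound-conditions}) reads $T\ge C(S+\mathrm{const})$, which holds eventually because $T/S\to\infty$. For the second condition, put $u:=T/S\to\infty$. Since $\log u=o(\sqrt u)$, we get $\sigma^2\log u\le c\Delta_{\min}^2\sqrt u$ for large $T$. This is exactly the remark following the theorem.

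\textbf{Step 2: compare the two terms.} The theorem gives
\[
R_T^\star(\mathcal E)\gtrsim \frac{\sigma^2}{\Delta_{\min}}S\log(T/S)\ge \frac{\sigma^2}{\Delta_{\min}}S,
\]
using $\log(T/S)\ge1$ for large $T$. Divide by $\sigma\sqrt{MT}$. The ratio is $\frac{\sigma}{\Delta_{\min}\sqrt M}\cdot\frac{S}{\sqrt T}$, a positive constant times $S/\sqrt T$. By definition, $S=\widetilde\omega(\sqrt T)$ means $S/\sqrt T$ eventually exceeds every fixed power of $\log T$. Therefore $R_T^\star(\mathcal E)/(\sigma\sqrt{MT})$ also grows faster than any polylogarithm, i.e., $R_T^\star(\mathcal E)=\widetilde\omega(\sigma\sqrt{MT})$.

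\textbf{Main point of care.} There is no real obstacle. The only subtlety is fixing the meaning of $\widetilde\omega$, namely ``exceeds by more than every polylogarithmic factor,'' so that it matches the $\widetilde O$ convention of Corollary~\ref{cor:ecr-recurrent-rate}. One must also confirm that the hypotheses of Theorem~\ref{thm:ecr-lower-bound} hold in the limit, which uses both $T/S\to\infty$ and the fixed positivity of $\rho$ guaranteed by the model assumptions.
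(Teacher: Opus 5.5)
Your proposal is correct and is essentially the paper's argument. The paper derives the corollary directly from Theorem~\ref{thm:ecr-lower-bound}, using its remark that, for fixed parameters, the conditions hold for all large $T$ when $T/S\to\infty$, and then noting that the first term $\frac{\sigma^2}{\Delta_{\min}}S\log(T/S)$ exceeds $\sigma\sqrt{MT}$ by a factor of order $S/\sqrt T$. Your checks of $2\le M\le S+1$ and of $\rho>0$ make explicit what the paper leaves implicit.
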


%\hl{AZ: isn't it better to give the lower bound also as function of $M$ and $\sigma$ so that the match to teh upper bound is more evident? After all we write it that way in (18) no?  }. 
% Agree, fixed. 
Corollary~\ref{cor:necessary-change-frequency} shows that the condition
$S=\widetilde O(\sqrt T)$ in Eq.~(\ref{eq:sparse-change-regime}) is necessary, up to logarithmic factors, for attaining the recurrent rate
$\widetilde O(\sigma\sqrt{MT})$.
The proof of Theorem~\ref{thm:ecr-lower-bound} is given in
App.~\ref{app:lower-bound}. 

\paragraph*{Proof sketch.}

We consider two oracle-aided Gaussian subclasses; revealing
information can only help the learner. For the change-detection term, we reveal
the class means, ordered class labels, and \(S\) disjoint macroblocks of length \(\asymp T/S\),
leaving one change location per macroblock unknown. Comparing early and late
change environments creates a tradeoff between false alarms and detection delay. Gaussian
change of measure over disjoint windows gives
\(\Omega((\sigma^2/\Delta_{\min})\log(T/S))\)
loss per macroblock. Summing over the \(S\) macroblocks gives the first term.

For the recurrent-mean term, we reveal all change points and class labels and allocate
\(n\asymp T/M\) observations to each class. We independently perturb each of
the \(M\) separated class means by \(\pm\xi\), where
\(\xi\asymp\sigma/\sqrt n\). The stated conditions ensure that this family
preserves the required separation and diameter. A coordinate-wise Le Cam
argument yields \(\Omega(n\xi)=\Omega(\sigma\sqrt n)\) loss per class,
and hence \(\Omega(\sigma\sqrt{MT})\) overall. Taking the maximum of the two
oracles proves the theorem.

\subsection{Discussion and extensions} \vspace{-0.1cm}
\label{ssec:discussion}
\vspace{-0.1cm}

Retrospective segmentation refinement cannot guarantee exact change-point recovery, leaving contamination near estimated boundaries (alarms); see App.~\ref{app:retrospective}. Other reuse strategies, such as downweighting older observations, face a related bias-variance tradeoff.

For $L_p$ loss with $1<p<2$, we derive the benefit of recurrent pooling over the no-reuse benchmark: the estimation scale improves from \(\sigma^p(S+1)^{p/2}T^{1-p/2}\) to \(\sigma^pM^{p/2}T^{1-p/2}\); see App.~\ref{app:extension-p}. 
%A heuristic comparison with change-time costs suggests that attaining the recurrent benchmark scale requires \(S=\widetilde O(T^{1-p/2})\), for fixed model parameters. 
Establishing a corresponding regret guarantee for ECR remains open.
For vector-valued observations under Euclidean loss, we outline an extension of ECR with dimension-dependent confidence thresholds and a recurrent estimation term \(O(\sigma\sqrt{dMT})\). For fixed dimension, the resulting bound retains the recurrent rate under the same sparsity regime \(S=\widetilde O(\sqrt T)\); see App.~\ref{app:vector-extension}.

\section{Numerical results}
\label{sec:numerical-results}

{We evaluate ECR on synthetic and real data; additional experiments and
implementation details appear in App.~\ref{app:extended-simulations}.}

\begin{figure}[!t]
  \centering
  \setlength{\abovecaptionskip}{3pt}
  \begin{minipage}[t]{0.49\linewidth}
    \vspace{0pt}
  \includegraphics[width=\linewidth]{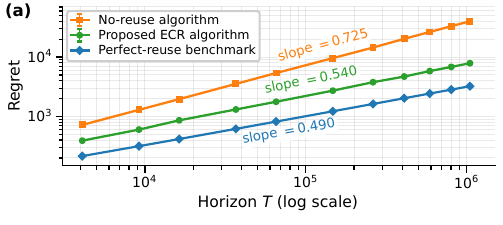}
  \end{minipage}\hfill
  \begin{minipage}[t]{0.49\linewidth}
    \vspace{0pt}
  \includegraphics[width=\linewidth]{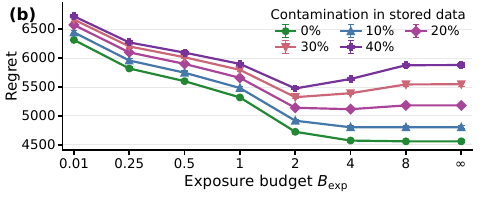}
  \end{minipage}
  \par\vspace{5pt}
  \begin{minipage}[t]{0.49\linewidth}
    \vspace{0pt}
  \begin{tikzpicture}
    % Restore the environment panel above the regret panel.
    \node[inner sep=0pt,anchor=north west] (realenv) at (0,0)
      {\includegraphics[width=\linewidth,trim={0 0.20in 3.5in 0.152in},clip]{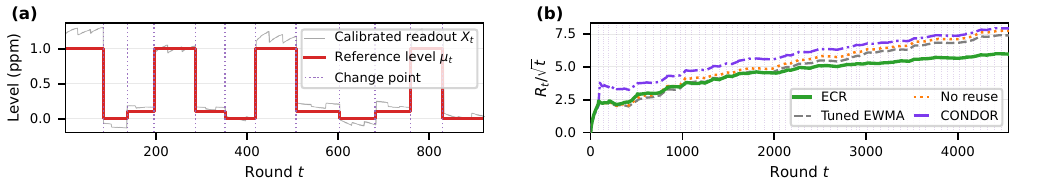}};
    \node[overlay,anchor=west,inner sep=0pt,
      font=\fontsize{8}{9}\selectfont,
      xshift={\dimexpr0.921537\linewidth+1.5pt\relax},
      yshift=0.019146\linewidth] at (realenv.south west) {$t$};
    \node[overlay,anchor=north west,inner sep=0pt,
      font=\sffamily\bfseries\fontsize{8}{9}\selectfont,
      xshift=0.018\linewidth,yshift=6pt] at (realenv.north west) {(c)};
  \end{tikzpicture}
  \end{minipage}\hfill
  \begin{minipage}[t]{0.49\linewidth}
    \vspace{0pt}
  \begin{tikzpicture}
    % Preserve the regret plot and update its panel marker.
    \node[inner sep=0pt,anchor=north west] (realpanel) at (0,0)
      {\includegraphics[width=\linewidth]{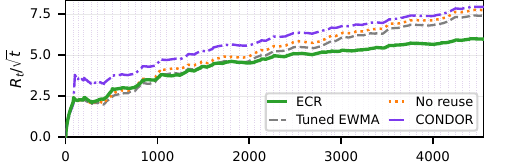}};
    \node[overlay,anchor=west,inner sep=0pt,
      font=\fontsize{8}{9}\selectfont,
      xshift={\dimexpr0.921537\linewidth+1.5pt\relax},
      yshift=0.019146\linewidth] at (realpanel.south west) {$t$};
    \node[overlay,anchor=north west,inner sep=0pt,
      font=\sffamily\bfseries\fontsize{8}{9}\selectfont,
      xshift=0.018\linewidth,yshift=6pt] at (realpanel.north west) {(d)};
  \end{tikzpicture}
  \end{minipage}
  \caption{{\textbf{Numerical results.}
  (a) Synthetic regret on log--log axes.
  (b) Exposure-budget tradeoff at different levels of source contamination.
  (c) Electronic-nose readouts and reference levels for the first $12$ segments.
  (d) Regret over the full replay, normalized by $\sqrt{t}$.
  Error bars in (a) and (b) show $95\%$ Monte Carlo intervals.}}
  \label{fig:synthetic-numerical-results}
  \label{fig:real-numerical-results}
  \label{fig:enose-environment}
\end{figure}

\textbf{Synthetic experiments.}
Figure~\ref{fig:synthetic-numerical-results}(a) provides empirical support for the regret scaling in Th.~\ref{thm:ecr-upper-bound} using $M=4$ Gaussian means $\{-9,-3,3,9\}$ across $S+1=\sqrt T/4$ segments.
The no-reuse baseline~\citep{gafni2026the} uses the same change detector as ECR but estimates the current mean afresh
after each alarm. 
For \(S\asymp\sqrt T\) and fixed \(M\), the theoretical rates of the no-reuse and perfect-reuse benchmarks scale as \(T^{3/4}\) and \(T^{1/2}\), respectively. The empirical slopes of the no-reuse algorithm (\(0.725\)) and perfect-reuse benchmark (\(0.490\)) are close to these exponents. ECR’s slope (\(0.540\)) is close to that of perfect reuse, consistent with the recurrence-dependent rate in Cor.~\ref{cor:ecr-recurrent-rate}.

{\textbf{Contamination and exposure control.}
Figure~\ref{fig:synthetic-numerical-results}(b) compares exposure budgets at
different contamination levels in stored data (controlled by the duration of
short, mostly undetected segments). With clean history (green curve), regret decreases
as more reuse is allowed. With more contaminated history,
the curves become U-shaped: additional reuse initially helps, but eventually
increases regret. The best-performing budgets therefore become smaller as
contamination increases, illustrating the balance between the statistical
benefit of reuse and repeated contamination bias. Implementation details appear in
App.~\ref{app:contamination-budget-experiment}.}

\textbf{Real-world data: electronic-nose assay.}
We use recorded sensor readings at three nominal diacetyl levels
$(0,0.1,1)$ ppm from the year-long dataset of \citet{worner2025long}.
Randomizing condition order and block durations within each of $21$
held-out days gives noncyclic replays with $63$ segments.
{We also include an adaptation of CONDOR~\citep{zhao2020handling} to scalar regression, using ECR's change detector.}
Calibration and
online parameters are selected on earlier days; class labels and change
points are unknown during evaluation.
Figure~\ref{fig:real-numerical-results}{(d)}
shows that ECR benefits from recurrence and outperforms all three baselines. ECR selectively reuses compatible history from earlier recurrences, whereas an exponentially weighted moving average (EWMA) discounts observations by age and no reuse starts afresh after each alarm. 
CONDOR periodically assigns equal weights to its stored models, including those from other classes, which can increase bias. ECR uses compatibility testing to limit such mixing.

\section{Conclusions}
\label{sec:conclusions}
We characterize the statistical complexity of online data reuse under recurrent non-stationarity. We develop ECR, a nearly minimax-optimal algorithm that realizes the statistical benefit of recurrence. We believe that the principles developed here, particularly those concerning repeated bias accumulation, may be useful beyond the specific formulation studied in this paper. An important direction for future work is to extend these ideas to sequential decision-making settings, where contaminated reuse interacts with exploration and action-dependent observations.

\bibliographystyle{icml2026}
\bibliography{data_reuse_non_stationary}

\clearpage
\appendix
\onecolumn

\section{Extended related literature}
\label{app:related-work}

\paragraph{Recurrence in sequential prediction.}
Long-term-memory prediction distinguishes the number of switches from the number of distinct predictors used by a comparator~\citep{bousquet2002tracking,robinson2021improved}. For bounded losses with full-information feedback, \citet{zheng2019equipping} obtain expected regret
\(O(\sqrt{T((S+1)\log T+M\log K)})\)
against a comparator that switches $S$ times among $M$ of $K$ available experts, with parameters tuned to $T,S,M$. They also provide a variant that adapts to $S,M$, replacing $M\log K$ by $M\log(K\log T)$. Recurrence reduces the expert-identification cost while retaining a switch-dependent term. \citet{herbster2020online} extend long-term memory to multitask prediction and RKHS hypothesis classes. In our formulation, the learner must estimate the recurrent means themselves from noisy observations and determine which potentially contaminated historical data to reuse.

For probabilistic sequence prediction, the Forget-me-not Process averages over binary temporal partitions and mixtures of fresh and stored model states~\citep{milan2016forget}. With a base-model redundancy of $O(\log(n+1))$ on $n$ stationary observations, its stated log-loss regret for $S$ changes is $O((S+1)\log^2 T)$. This bound does not improve when sources repeat, although recurrence yields empirical gains. \citet{shamir2004universal} establish asymptotically matching coding bounds that separate learning distinct source states, locating transitions, and assigning segments to states. These results concern comparator regret or coding redundancy; our objective instead measures cumulative $L_1$ error in an unobserved recurrent mean.

\paragraph{Recurrent bandits and data reuse.}
Routine Bandits pools data across recurring Gaussian bandit instances with known, equal-length period boundaries~\citep{saber2021routine}. For a fixed number of periods, its asymptotically optimal regret constant sums once over each distinct instance. Known boundaries avoid the segmentation-induced contamination arising in our setting from detection delays and missed changes. With unknown changes, \citet{re2021exploiting} combine detection, retrospective localization, and recurrence testing under jump and phase-length conditions. Their guarantee remains change-dependent, while recurrence gains are demonstrated empirically. For Gaussian linear bandits, DyClu uses homogeneity tests for change detection and historical-model aggregation~\citep{li2021unifying}. Its stated high-probability regret bound has the form
\(O(\sigma d\sqrt{T}\log T\sum_{k=1}^{M}\sqrt{p_k}+C\Gamma_T)\),
where $d$ is the feature dimension, $p_k$ is the fraction of interactions governed by recurrent parameter $k$, $M$ counts distinct parameters, $\Gamma_T$ counts stationary periods across users, and $C$ depends on problem and testing parameters. Since $\sum_k\sqrt{p_k}\le\sqrt M$, the leading term already benefits from recurrence. The analysis assumes detectability, parameter separation, and regular contexts, with additional period-length conditions. Although incoming observations are tested for homogeneity, false negatives can contaminate an archived history. The proof does not account for this residual bias or its amplification under repeated reuse.

\paragraph{Learning under recurring concept drift.}
Concept-drift methods recognize returning concepts or weight historical predictors by their performance on current data~\citep{suarez2023survey}. 
CONDOR reuses stored predictors through exponential weight updates based on their losses in the current epoch, rather than testing historical blocks for compatibility~\citep{zhao2020handling}. Their weighted combination also regularizes new models. For convex losses in $[0,1]$, suitable learning-rate tuning gives local cumulative regret
\(O(\sqrt{L_k^\star\log K}+\log K)\),
where $K$ bounds the model-pool size and $L_k^\star$ is the best stored predictor's cumulative loss in epoch $k$. Recurrence improves this guarantee when a retained model predicts the returning concept well. The global proof instead uses the worst-case local bound, yielding $O(V_T^{1/3}T^{2/3})$ dynamic regret against the best model at each round, with logarithmic dependence on $K$ suppressed. 
The update period is chosen using the function variation $V_T$ or a known upper bound. This global guarantee does not distinguish returning concepts from new ones. Our analysis quantifies a recurrence-dependent cumulative tracking rate without knowledge of the non-stationarity parameters.

\paragraph{Statistical borrowing and offline estimation.}
Statistical borrowing methods balance variance reduction against the bias introduced by combining populations that may differ. Examples include preliminary testing and shrinkage~\citep{shah2017merging}, penalties on differences between regression coefficients~\citep{chen2015data}, and regularization toward auxiliary hypotheses~\citep{kuzborskij2017fast}. \citet{wang2025transfer} derive minimax rates for offline piecewise-constant mean estimation that depend on source frequencies and source-target discrepancy, and develop informative-source selection. For online bandits, \citet{cheung2024leveraging} establish tight regret guarantees using a known bound on offline-online mismatch; without a nontrivial bound, offline data need not improve the minimax rate. Our additional difficulty is that historical blocks are generated through imperfect online segmentation and reused across many later predictions. Consequently, even exact parameter recurrence can leave residual contamination in stored data. Compatibility testing assesses whether reuse is supported by current observations, while exposure control separately bounds each block's cumulative influence on future predictions.
In offline mean estimation, \citet{mazhar2018bayesian} jointly estimate change points and shared levels, deriving an oracle inequality under squared loss for the global penalized estimator. They use the complete sequence, whereas our predictions are causal.

\section{Extended simulations}
\label{app:extended-simulations}

We first study how contamination accumulates under repeated reuse,
followed by experiments with CPU traces, vector-valued observations, and
different loss powers. We then examine the effect of reducing the number
of candidate splits in the detector. Implementation details for the
main-paper experiments in Section~\ref{sec:numerical-results} appear at
the end of this section.
Unless stated otherwise, synthetic observations have independent Gaussian
noise, methods are compared on the same observation paths, and error bars
denote pointwise $95\%$ Monte Carlo intervals.

Code and data are available at
\url{https://anonymous.4open.science/r/ecr-reproducibility-9F1B/}.

\subsection{Exposure capping under persistent contamination}
\label{app:hard-contamination}

The main-paper exposure experiment varies how much historical reuse is
allowed. Here, we fix the exposure budget and extend the subsequent recurrent
sequence to examine how the cost of contaminated reuse accumulates over time.
We construct a prefix with rapid, unresolved changes that enters the
repository as a mixed block. ECR and uncapped reuse initially coincide, but
once this block reaches its exposure limit, ECR retires it while uncapped
reuse continues to propagate its bias.
Figure~\ref{fig:hard-contamination-cumulative} shows the resulting gap in
cumulative regret as further recurrences are added. This experiment
illustrates the cost of repeatedly reusing contaminated observations;
Proposition~\ref{prop:uncapped-ecr} establishes the necessity of the cap
for ECR through an asymptotic counterexample. In the benign version of the
exposure experiment described in Appendix~\ref{app:exposure-experiment},
additional reuse remains beneficial because stored observations contain
little contamination.

\begin{figure*}[!t]
  \centering
  \includegraphics[width=0.88\textwidth]{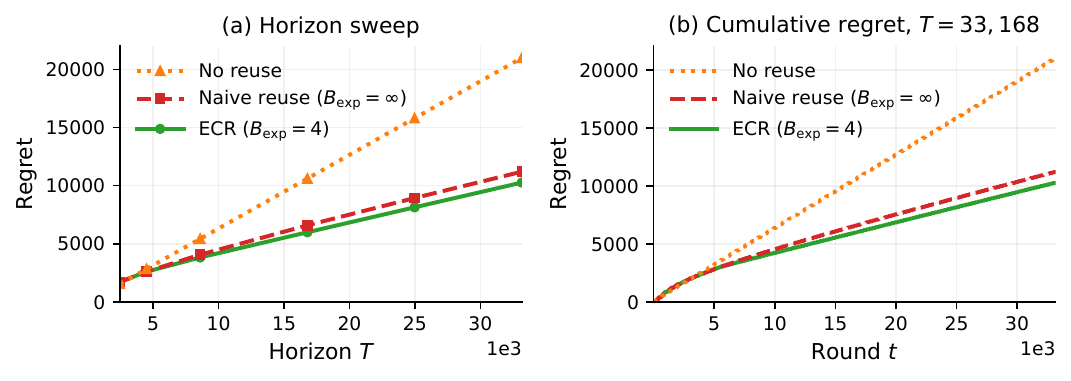}
  \caption{\textbf{Exposure capping under persistent contamination.}
  (a) Terminal cumulative $L_1$ regret as the subsequent recurrent sequence
  grows. (b) Cumulative regret at $T=33,168$. Results average $100$ paired
  paths.}
  \label{fig:hard-contamination-cumulative}
\end{figure*}

\paragraph{Implementation details.}
We use $\sigma=1$. During the first $400$ rounds, the mean equals $2$ for four rounds
and $0$ for one round, repeating this five-round pattern.
On all simulated paths, the detector misses all $159$ rapid changes, and the
prefix is stored as one mixed block with population mean $1.6$.
The subsequent sequence alternates
between mean-$64$ and mean-zero segments of length $256$. We vary the number
of recurrent pairs from $4$ to $64$, giving horizons from $2,448$ to
$33,168$, and average over $100$ paired paths. All methods use the same
detector and compatibility rule with their unscaled thresholds. ECR sets
$B_{\exp}=4$, naive reuse removes the cap, and no reuse sets $B_{\exp}=0$.

\subsection{Trace-driven NAB replay}
\label{app:nab-replay}

We next examine whether the benefit of reuse persists with noise from a
recorded, temporally dependent trace and irregular returns to previous
classes. We use an AWS CPU-utilization series from the Numenta Anomaly
Benchmark (NAB)~\citep{lavin2015evaluating}, with the learner predicting the
current mean utilization. Since NAB does not provide ground-truth recurrent
workload labels, we impose the recurrence schedule and use residuals from
approximately constant-mean intervals of the trace to generate the observations.
Figure~\ref{fig:nab-irregular-replay} shows one replay and the average regret
over $100$ evaluation paths. ECR's mean terminal regret is $16.9\%$ lower
than without reuse, and its terminal regret is lower than EWMA's on all
$100$ evaluation paths.

\begin{figure*}[!t]
  \centering
  \includegraphics[width=0.88\textwidth]{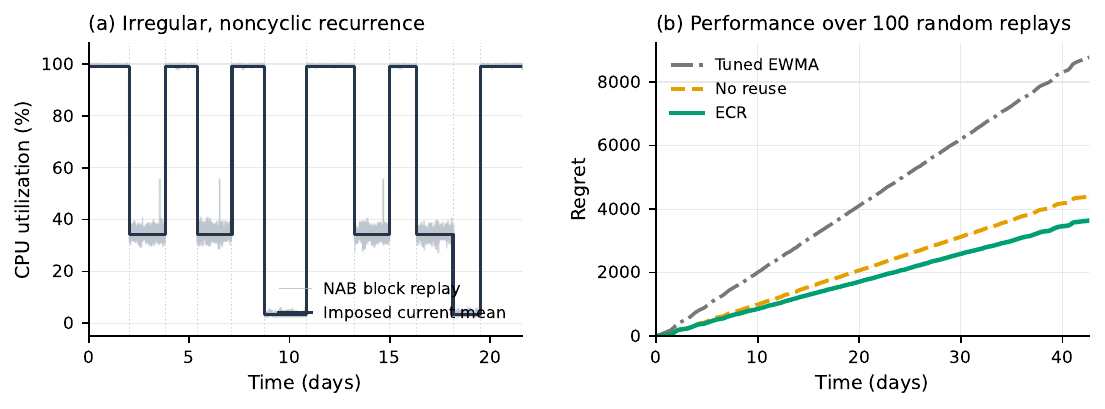}
  \caption{\textbf{Trace-driven NAB replay.}
  (a) The first $12$ segments of one irregular, noncyclic replay.
  (b) Mean cumulative $L_1$ regret over $100$ randomized replays.}
  \label{fig:nab-irregular-replay}
\end{figure*}

\paragraph{Implementation details.}
We use the intervals with low, middle, and high mean utilization identified by
\citet{gafni2026the}, excluding observations near the source boundaries.
Their empirical means are approximately $(3.3,34.2,99.1)$. We center the
residuals within each interval and
replay consecutive observations in circular blocks of length $12$, preserving
their local order. Each replay has $T=12,288$, $M=3$,
and $S=23$. The initial class is drawn uniformly, and each subsequent class
is chosen uniformly from the two inactive classes. Segment durations are
randomized around $512$ observations and rescaled to sum to $T$.
The detector uses the empirical noise proxy $\widehat\sigma=2.094$, the largest within-class
standard deviation. ECR uses $\alpha=0.05$, $B_{\exp}=4$, and compatibility
multiplier $1$; no reuse uses the same detector with $B_{\exp}=0$. The EWMA
update rate $\eta=0.4$ is selected on $100$ separate replay paths, and
evaluation uses $100$ new paired replays.

\subsection{Vector-valued recurrence}
\label{app:vector-simulation}

We compare ECR, no reuse, and the perfect-reuse benchmark in a synthetic
Gaussian environment with $M=4$ recurring mean vectors in $\mathbb R^5$,
using Euclidean loss. We vary $T$ from $1{,}024$ to $65{,}536$, with
$S+1=\sqrt T$ equal-length segments. In Figure~\ref{fig:vector-lp-experiments}(a),
ECR's regret remains close to a fixed multiple of the recurrent benchmark's
loss as the horizon grows, while the ratio without reuse increases more
quickly. The fitted log--log slopes of normalized regret are $0.040$ for
ECR and $0.204$ without reuse. Thus, the benefit observed in the scalar
experiment persists in this vector-valued setting.

\begin{figure*}[!t]
  \centering
  \begin{minipage}[t]{0.465\textwidth}
    \centering
    (a) Vector-valued recurrence\par
    \includegraphics[width=\linewidth]{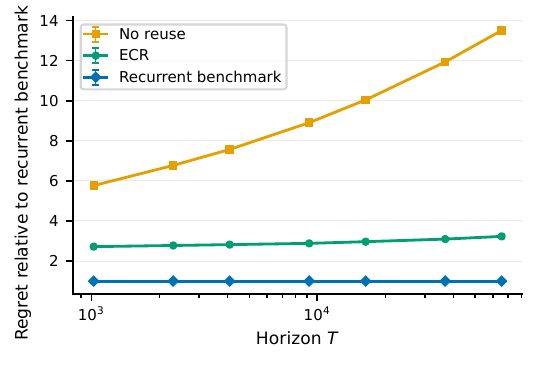}
  \end{minipage}\hspace{0.02\textwidth}%
  \begin{minipage}[t]{0.465\textwidth}
    \centering
    (b) Online benefit under $L_p$ loss\par
    \includegraphics[width=\linewidth]{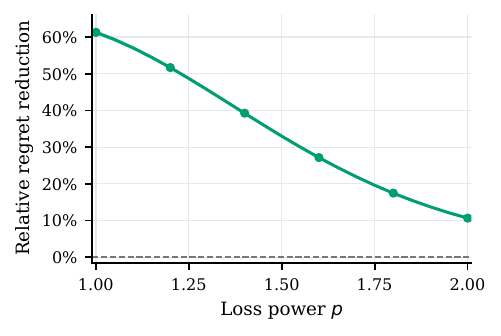}
  \end{minipage}
  \caption{(a) \textbf{Vector-valued recurrence.}
  Euclidean regret relative to the recurrent benchmark for $d=5$, $M=4$, and
  $S+1=\sqrt T$, averaged over $40$ paired paths.
  (b) \textbf{Online benefit under $L_p$ loss.}
  Relative regret reduction of ECR over no reuse at $T=262,144$, averaged
  over $30$ paired paths.}
  \label{fig:vector-lp-experiments}
\end{figure*}

\begin{figure*}[!t]
  \centering
  \includegraphics[width=0.93\textwidth]{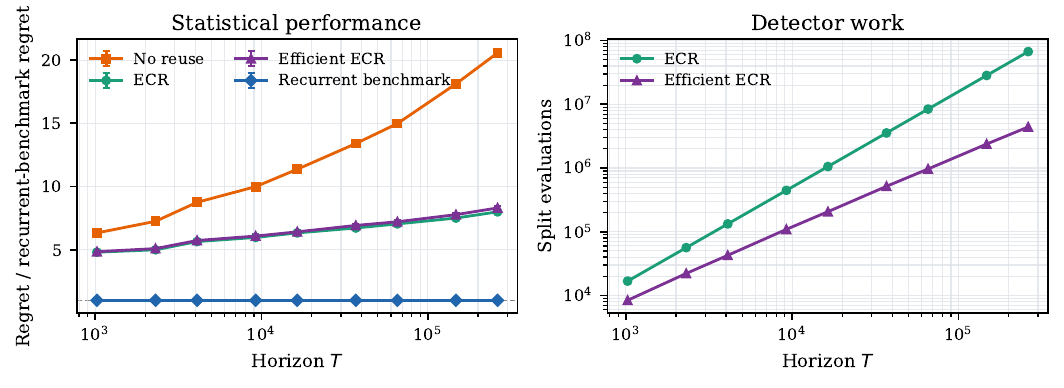}
  \caption{\textbf{Geometric-grid ECR.}
  (a) Regret relative to the recurrent benchmark's Monte Carlo mean.
  (b) Total split evaluations. Results average $40$ paired paths.}
  \label{fig:efficient-ecr}
\end{figure*}

\paragraph{Implementation details.}
We use class vectors separated by $12$ in Euclidean norm, with
independent $\mathcal N(0,I_5)$ noise. The fifth coordinate contains only
noise. The class sequence follows a balanced cyclic path.
Detection and compatibility testing use Euclidean norms and the
dimension-dependent confidence radii of Appendix~\ref{app:vector-extension}.
The detector and compatibility multipliers are $0.55$ and $0.125$, as in the
scalar comparison. We compare ECR, the same detector without reuse, and the
recurrent benchmark over $40$ paired paths, normalizing regret by the exact
Gaussian expectation for the recurrent benchmark.

\subsection{Dependence on the loss power}
\label{app:lp-simulation}

Increasing the loss power gives greater weight to large prediction errors,
including those following a change. To examine its effect on the benefit
of reuse, we generate each prediction path once and evaluate it under
$\sum_{t=2}^T|\widehat\mu_t-\mu_t|^p$ for $p\in[1,2]$.
Figure~\ref{fig:vector-lp-experiments}(b) reports the relative improvement
$1-R_T^{(p)}(\mathrm{ECR})/R_T^{(p)}(\text{no reuse})$. In this experiment,
the improvement decreases monotonically as $p$ increases from $1$ to $2$.
This is consistent with the discussion in Appendix~\ref{app:extension-p}:
the cost of adapting to unknown changes can limit the gain from recurrent
estimation as $p$ increases. A finite-horizon benefit remains at $p=2$.

\paragraph{Implementation details.}
We use $M=4$ Gaussian means $(-6,-2,2,6)$, $\sigma=1$, and a reflected
class path, so each change has magnitude $4$. The horizon is $T=262,144$,
with $S+1=512=\sqrt T$ equal-length segments. ECR and no reuse receive the
same observations and use the same data-dependent detector; neither knows
the change points or class labels. Their only difference is the exposure
budget, $B_{\exp}=4$ for ECR and $B_{\exp}=0$ for no reuse. We use
$\alpha=0.05$, detector multiplier $0.55$, and compatibility multiplier
$0.125$, and average over $30$ paired paths.

\subsection{Computationally efficient ECR}
\label{app:efficient-ecr-simulation}

Scanning fewer candidate splits reduces detection complexity, but may delay an
alarm. Figure~\ref{fig:efficient-ecr} compares this tradeoff for exhaustive
ECR and a detector using a geometric grid. At $T=262,144$, the grid reduces
the number of evaluated splits per path from $67.2$ million to $4.41$
million, a factor of $15.2$, while increasing regret by $3.84\%$. The two
ECR implementations have similar normalized regret curves across the
tested horizons.

The exhaustive detector evaluates every split in the current block.
Following the multiscale construction of
\citet{lai2010sequential}, we instead test splits at $r+u$ and $t-u$ for
offsets $u\in\{1,2,4,\ldots\}$, together with the midpoint. This reduces
the number of candidates per round from $O(t-r)$ to $O(\log(t-r))$.
The threshold, decision/estimation split, compatibility test, exposure cap,
and prediction rule are unchanged, as is the $O(K_t)$ scan over the $K_t$
completed blocks. The computational gain reported here therefore concerns
the detector's split evaluations.

\paragraph{Implementation details.}
We use $M=4$ reflected Gaussian classes with means $(-6,-2,2,6)$,
$\sigma=1$, and $S+1=\sqrt T$, with horizons from $1,024$ to $262,144$.
Exhaustive and efficient ECR, no reuse, and the recurrent benchmark are compared
over $40$ paired paths. We set $\alpha=0.05$, $B_{\exp}=4$, detector
multiplier $0.55$, compatibility multiplier $0.125$, and geometric base $2$.
Regret is normalized by the recurrent benchmark's Monte Carlo mean.

\subsection{Implementation details for the main-paper experiments}
\label{app:main-experiment-implementation}

All numerical experiments were run on local CPUs.

\paragraph{Benchmark comparison and online recovery.}
The benchmark comparison in Figure~\ref{fig:oracle-value-recurrence} and the
online comparison in Figure~\ref{fig:synthetic-numerical-results}(a) use
$M=4$ class means $(-9,-3,3,9)$ with $\sigma=1$ and
$S+1=\sqrt T/4$ equal-length segments.
The class sequence visits these values in ascending and then descending
order, with a uniformly drawn starting phase. The displayed horizons
range from $4,096$ to $1,048,576$.

In Figure~\ref{fig:oracle-value-recurrence}, the no-reuse benchmark knows the
change points and estimates each segment from scratch; the perfect-reuse benchmark
also knows the active class and pools all previous observations from that
class. Figure~\ref{fig:synthetic-numerical-results}(a) compares ECR,
the otherwise identical algorithm without reuse, and the perfect-reuse
benchmark. ECR and the no-reuse algorithm use $60$ paired paths per horizon
through $T=262,144$ and $80$ paired paths at each larger horizon.
Both figures show unnormalized regret. The benchmark
regrets are computed exactly under Gaussian noise and averaged over the
six starting phases.
{The empirical slopes in
Figure~\ref{fig:synthetic-numerical-results}(a) are obtained by regressing
log mean regret on log horizon over all eleven displayed horizons.}

The online comparisons use $\alpha=0.05$, $B_{\exp}=4$, and detector and
compatibility multipliers of $0.55$ and $0.125$, respectively. Throughout
the experiments, the detector multiplier scales $\gamma_t^r$ directly,
while the compatibility multiplier $c_{\rm cmp}$ appears in the squared
rule $\widehat Q_b^2(t)<c_{\rm cmp}(\lambda_t^b)^2$. The theoretical values
are therefore $1$ and $16$, respectively. The multipliers for the online
comparisons were selected on separate pilot simulations and then fixed for
evaluation to improve finite-horizon performance.

\paragraph{Exposure-control experiment.}
\label{app:exposure-experiment}
The experiment in Figure~\ref{fig:reuse-tradeoff} examines the effect of the
exposure budget when stored observations contain different amounts of
contamination. We use $T=16,784$ and $\sigma=1$. A $400$-round prefix
consists primarily of mean-zero observations, with either two or twenty
one-round shifts to mean $16$. These shifts lie on the estimation
stream and are missed by the detector, giving the \textit{benign} and
\textit{hard} environments, respectively. The subsequent sequence contains
$32$ pairs of mean-$64$ and mean-zero segments, each of length $256$.
We compare $B_{\exp}\in\{0.01,1,2,4,8\}$ over $100$ paired paths using
the unscaled detector and compatibility thresholds specified by the
algorithm. Figure~\ref{fig:reuse-tradeoff} shows mean cumulative regret in
the hard environment, with error bars of $\pm1.96$ standard errors.
The horizontal axis orders these settings from less to more allowed reuse.

{
\paragraph{Contamination and exposure budget.}
\label{app:contamination-budget-experiment}
Figure~\ref{fig:synthetic-numerical-results}(b) varies contamination while
keeping $\sigma=1$ and $T=16,864$ fixed. The first $480$ rounds contain
twelve $40$-round cycles. Each cycle starts at mean zero, moves to mean $2$
after $20$ rounds for $h\in\{0,2,4,8,12,16\}$ rounds, and then returns to zero.
The remaining sequence contains $32$ pairs of mean-$64$ and mean-zero
segments, each of length $256$. For positive $h$, there are three classes
and $88$ changes; the clean control $h=0$ has two classes and $64$ changes.
The excursions occur in both observation streams.

\emph{Source contamination} is the percentage of stored source observations
whose true mean differs from their block's most frequent source class,
computed on the estimation stream and restricted to the initial $480$ rounds.
All stored source portions are predominantly zero, and the percentages shown
in the figure are $0,10,20,30,40\%$ on every path. Between $98.9\%$ and $100\%$
of the short excursions are missed, while every later large change is detected.

We use $\alpha=0.05$ and the unscaled detector and compatibility thresholds.
The plotted regrets are averaged over $80$ paired Gaussian paths.
The same standardized noise is used across
durations and budgets, so the subsequent observations are identical within
each repetition. The figure shows $B_{\exp}\in\{0.01,0.25,0.5,1,2,4,8,\infty\}$;
the no-reuse control is retained in the accompanying experiment results.
The percentages label measured source contamination, and budgets are shown
as ordered settings. The excursion amplitude was selected using a separate
detector-only pilot, without inspecting regret.

For clean history, $B_{\exp}=8$ and uncapped reuse give identical predictions.
At $30\%$ and $40\%$ contamination, $B_{\exp}=2$ has the lowest mean regret
among the tested settings. The prespecified cap $B_{\exp}=4$ increases regret
by $13.6$ with clean history but reduces it by $241.0$ at $40\%$ contamination,
with a paired $95\%$ Monte Carlo interval of $[218.4,263.6]$ for the latter
reduction. Evaluating only the common subsequent sequence gives essentially
the same contrast. On the latter half of each subsequent zero segment,
active prefixes are clean and population prediction error comes entirely
from the mixed source history, confirming the role of repeated contamination.
}

\paragraph{Electronic-nose replay.}
For Figure~\ref{fig:real-numerical-results}{(c,d)}, we construct scalar observations
from the $62$-sensor electronic-nose dataset of
\citet{worner2025long,worner2025enosedataset}. We form log
exposure-to-baseline responses and center them using the initial blank
assays from the same day. A ridge regression maps each sensor response to
a scalar concentration readout. The online methods receive these readouts;
absolute prediction error is measured against the nominal added-diacetyl
level. The regression is trained on Days~1--8, the next eight complete days
are used to select the online parameters, and the remaining $21$ complete
days are held out.

Each held-out day contributes one segment for each nominal level, $0$,
$0.1$, and $1$ ppm. We retain the order of days and randomize the within-day
class order, requiring a class change at day boundaries. Segment lengths
are drawn around $72$ observations and constrained to lie between $24$ and
$120$. The sensor vectors come from recorded data, while the recurrence
order and retained durations are semi-synthetic. ECR uses $\alpha=0.05$,
$B_{\exp}=4$, detector multiplier $0.3$, and compatibility multiplier $0.4$;
EWMA uses the separately selected update rate $0.9441$.

{
We adapt CONDOR~\citep{zhao2020handling} to scalar regression using
ECR's calibrated change detector. Each stored model predicts a constant
$m$, and predictions are normalized weighted averages of these models.
After observing $X_t$, each model's weight is multiplied by
$\exp\{-\eta(m-X_t)^2/\widehat\sigma^2\}$, where
$\widehat\sigma=0.09685$ is the training residual standard deviation.
At global multiples of an update period $p$, or upon a detector alarm,
we add a model $(\bar X+\lambda m_{\rm prior})/(1+\lambda)$.
Here, $\bar X$ averages the observations since the previous model update,
and $m_{\rm prior}$ is the weighted model mean after the latest weight
update. This is the scalar form of CONDOR's squared-loss fit regularized
toward the model pool. We retain the latest $K$ models and reset their
weights uniformly after each model addition, as in the original algorithm.
The first model is initialized at $X_1$. Model and weight updates use all
readouts, while the detector retains ECR's decision stream. Each replay
starts afresh, and evaluation begins at $t=2$.

We select CONDOR's parameters by minimizing mean absolute error on the
same eight validation days, holding the detector fixed. For the reported
regularized variant, the search uses
$p\in\{5,10,25,50,100,200\}$, $K\in\{10,25,100,400\}$,
$\eta\in\{0.001,0.01,0.1,0.75\}$, and
$\lambda\in\{0.005,0.1,1,10\}$.
The selected values are $p=25$, $K=100$, $\eta=0.1$, and
$\lambda=0.005$, and remain fixed across all held-out replays.
}

The displayed replay was selected from $500$ randomized schedules using
only segment durations and class-transition counts, without reference to
sensor observations or algorithmic losses. It has $M=3$, $63$ segments,
$62$ changes, and horizon $T=4,555$. Across these $500$ schedules of the
same held-out days, the mean absolute errors averaged over schedules are
$0.091$ ppm for ECR, $0.112$ ppm for EWMA, $0.117$ ppm without reuse{, and $0.1185$ ppm for CONDOR}.

\section{Proof of the upper bound}
\label{app:upper-bound}

\subsection{Proof outline}
\label{app:proof-outline}

The proof proceeds in several steps. First, in Section~\ref{app:decomposition} we establish a regret decomposition, reducing the analysis to bounding the estimation noise contribution, the population-bias term, and an initialization term.
Sections~\ref{app:segmentation} and \ref{app:compatibility} establish the
required detection, segmentation, and compatibility properties.  The
exposure calculation is incorporated directly into the population-bias
argument in Section~\ref{app:population-bias}.
To present the main argument cleanly, Section~\ref{app:population-bias} and Section~\ref{app:variance-control} first bound the population bias and estimation terms, respectively, on the detector good
event $\mathcal G_{\det}$. 
Section~\ref{app:detector-localization} uses a localization argument to bound the contribution of the prefixes affected by detector deviations and removes this conditioning. Finally, Section~\ref{app:final-assembly} combines the resulting unconditional bounds to yield the desired regret guarantee in terms of
$T,S,M,\Delta_{\min},\Delta_{\max},\sigma$, and $\alpha$. 

\paragraph{Boundary convention.}
The observations stored in block \(b\) have indices \([r_b,r_{b+1})\), whereas the corresponding predictions occur on \((r_b,r_{b+1}]\). These intervals differ only at \(r_b\) and \(r_{b+1}\). Consequently, moving between the two indexing conventions changes a contamination count by at most one per block and a population-bias sum by at most \(\Delta_{\max}\) per block. Summed over all blocks, these corrections are at most \(K\) and \(\Delta_{\max}K\), respectively. We absorb them into the subsequent bounds.
Similarly, considering the decision and estimation streams, on every contiguous interval \(I\),
\(
|N_I^{\Dstr}-N_I^{\Estr}|\le 1.
\)
Thus, conversions between the two stream counts contribute at most \(O(K)\) across all detector blocks. Since \(\ell_{\id}\ge1\), count corrections are absorbed into \(O((S+K)\ell_{\id})\), while population-bias corrections are absorbed into \(O(\Delta_{\max}(S+K)\ell_{\id})\).

\subsection{Regret decomposition}
\label{app:decomposition}

Fix an environment $\nu\in\mathcal E$ and abbreviate
$\E_\nu:=\E_{\mathrm{ECR},\nu}$. By
Eq.~(\ref{eq:regret-definition}),
\[
R_T(\mathrm{ECR};\nu)
=
\E_\nu\left[\sum_{t=2}^T|\widehat\mu_t-\mu_t|\right].
\]
Let
\[
\begin{aligned}
\mathcal T_+&:=\{t\in\{2,\ldots,T\}:W_t>0\},
\qquad N_0:=T-1-|\mathcal T_+|.
\end{aligned}
\]
For $t\in\mathcal T_+$, define the population counterpart of the ECR
estimator by
\begin{equation}
\label{eq:population-reuse-estimator}
\widetilde\mu_t
:=
\frac{
\displaystyle\sum_{u\in\mathcal A_t\cap\cS^{\Estr}}\mu_u
+\displaystyle\sum_{b\in\mathcal U_t}
\sum_{u\in\cB_b\cap\cS^{\Estr}}\mu_u
}{W_t}.
\end{equation}
It is also useful to introduce
\begin{equation}
\label{eq:variance-bias-quantities}
V_T:=\sum_{t\in\mathcal T_+}\frac1{\sqrt{W_t}},
\qquad
P_T:=\sum_{t\in\mathcal T_+}|\widetilde\mu_t-\mu_t|.
\end{equation}

\begin{lemma}[Regret decomposition]
\label{lem:regret-decomposition}
For every $\nu\in\mathcal E$,
\begin{equation}
\label{eq:regret-decomposition}
R_T(\mathrm{ECR};\nu)
\le
\sigma\E_\nu[V_T]
+\E_\nu[P_T]
+(\sigma+\Delta_{\max})\E_\nu[K].
\end{equation}
\end{lemma}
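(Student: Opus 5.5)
We split the rounds into $\mathcal T_+$ and the complementary rounds with $W_t=0$, and on $\mathcal T_+$ we apply the triangle inequality $|\widehat\mu_t-\mu_t|\le|\widehat\mu_t-\widetilde\mu_t|+|\widetilde\mu_t-\mu_t|$. The second piece sums exactly to $P_T$. For the first piece we use the decision/estimation split. By construction, the detector restart times, the compatibility sets $\cC_t$, the exposures $H_b(t)$, the reuse sets $\mathcal U_t$ and the effective sizes $W_t$ are all measurable with respect to the decision-stream observations $\mathcal F^{\Dstr}:=\sigma(X_u:u\in\cS^{\Dstr})$. Here $W_t$ is read as the estimation-stream count, consistent with the definition of $\widetilde\mu_t$. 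The estimation-stream noises $\{\varepsilon_u\}_{u\in\cS^{\Estr}}$ are independent of $\mathcal F^{\Dstr}$. So, conditionally on $\mathcal F^{\Dstr}$, the estimator $\widehat\mu_t-\widetilde\mu_t=W_t^{-1}\sum_{u\in I_t}\varepsilon_u$ is an average of $W_t$ i.i.d.\ centered $\sigma$-sub-Gaussian variables over a \emph{fixed} index set $I_t$, namely the estimation-stream points of $\mathcal A_t$ and of the blocks in $\mathcal U_t$. Jensen's inequality then gives $\E[|\widehat\mu_t-\widetilde\mu_t|\mid\mathcal F^{\Dstr}]\le(\E[(\cdot)^2\mid\mathcal F^{\Dstr}])^{1/2}\le\sigma/\sqrt{W_t}$. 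Summing over $t\in\mathcal T_+$, which is itself $\mathcal F^{\Dstr}$-measurable, and taking expectations yields $\sigma\E_\nu[V_T]$.

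The main point needing care is this measurability claim. $\mathcal A_t=[r,t)$ includes the most recent observation $X_{t-1}$, and all the algorithmic quantities at round $t$ use only data before $t$. We must check that the alarm rule, the statistics $\widehat D$ and $\widehat Q$, and the exposure recursion only ever read decision-stream averages (the paper states this under ``stream superscripts suppressed''). We must also check that the exposure updates $1/W_n$ use counts, not values, of estimation samples. Given this, the conditional independence is immediate. The noise terms are independent, not merely uncorrelated, so the sub-Gaussian variance proxy gives the $\sigma^2/W_t$ second-moment bound.

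It remains to handle rounds with $W_t=0$. These occur when the active prefix contains no estimation sample and no block is reused, which happens only at the first round(s) after a restart. At most one round per block is affected, plus $t=2$, so $N_0\le K+O(1)$; the additive constant is absorbed into $K\ge1$. On such a round ECR predicts the most recent estimation observation $X_u=\mu_u+\varepsilon_u$. Hence $|\widehat\mu_t-\mu_t|\le|\mu_u-\mu_t|+|\varepsilon_u|\le\Delta_{\max}+|\varepsilon_u|$, and $\E|\varepsilon_u|\le\sigma$. At the very first round, where no past estimation observation exists, we use any fixed convention bounded by the same quantity. Summing gives at most $(\sigma+\Delta_{\max})\E_\nu[K]$. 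Combining the three contributions proves Eq.~(\ref{eq:regret-decomposition}).
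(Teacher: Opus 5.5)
Your argument follows the paper's proof: the triangle inequality around $\widetilde\mu_t$, and conditioning on the decision-stream $\sigma$-field so that $\mathcal U_t$, $W_t$ and $\mathcal T_+$ are frozen while the selected estimation noises stay centered and independent. It then gives $\sigma/\sqrt{W_t}$ for the conditional absolute noise and counts the fallback rounds separately.

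The one step that fails as written is that fallback count. Round $t=2$ is not extra on top of the per-block count. It is the single zero-denominator round of the initial block $\cB_0$: there $\mathcal A_2=\{1\}$ is decision-only, while $\mathcal A_3\ni 2$.

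More generally, suppose a block starts at $r$. The detector cannot fire at round $r+2$, because its only candidate split $k=r+1$ leaves one side with no decision-stream sample. Hence $\mathcal A_t\supseteq\{r,r+1\}$ contains an estimation index at every later round of that block, and only round $r+1$ can have $W_t=0$. This gives $N_0\le K$ exactly, which is what the stated coefficient requires.

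Your step ``$N_0\le K+O(1)$, absorbed into $K\ge1$'' only yields $N_0\le 2K$. That is the weaker bound with $2(\sigma+\Delta_{\max})\E_\nu[K]$, not the stated inequality. The paper phrases the same fact as: alarms occur only right after a decision-stream observation, so each restart is followed by at most one prediction before the next estimation observation.

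You should also make explicit that summing $\E|\varepsilon_u|\le\sigma$ over the random set of fallback rounds needs the same independence. For $t>2$, the index $u$ is the deterministic last even index before $t$, and $\{W_t=0\}$ is decision-stream measurable. Therefore $\E[\1\{W_t=0\}|\varepsilon_u|]\le\sigma\Pbb(W_t=0)$. At the deterministic round $t=2$, the fallback $X_1$ gives the same bound directly.
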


\begin{proof}
Let
$\mathcal G_T^{\Dstr}:=\sigma(X_u:u\in\cS^{\Dstr})$ be the full decision-stream
sigma-field.  
Because the decision and estimation streams form a deterministic partition of the observation indices and the noises are independent across time, the estimation-stream noises are independent of $\mathcal G_T^{\Dstr}$.
Conditioning on this sigma-field freezes the adaptive sample-selection mechanism while leaving the estimation-stream noise centered and independent.
All detector boundaries and compatibility decisions are
$\mathcal G_T^{\Dstr}$-measurable.  The same is true of the exposure counters,
reuse sets $\mathcal U_t$, and denominators $W_t$, by induction over time:
their updates use only earlier reuse decisions and sample counts.

For $t\in\mathcal T_+$, the active prefix and completed blocks are disjoint, so
the estimator uses exactly $W_t$ distinct estimation-stream observations.
Consequently,
\[
Z_t:=\widehat\mu_t-\widetilde\mu_t
=
\frac1{W_t}
\left(
\sum_{u\in\mathcal A_t\cap\cS^{\Estr}}\varepsilon_u
+\sum_{b\in\mathcal U_t}
\sum_{u\in\cB_b\cap\cS^{\Estr}}\varepsilon_u
\right)
\]
is, conditionally on $\mathcal G_T^{\Dstr}$, centered and sub-Gaussian with
variance proxy $\sigma^2/W_t$.  Hence
\begin{equation}
\label{eq:conditional-noise-bound}
\E_\nu[|Z_t|\mid\mathcal G_T^{\Dstr}]
\le \frac{\sigma}{\sqrt{W_t}}.
\end{equation}
Using
$\widehat\mu_t-\mu_t=Z_t+(\widetilde\mu_t-\mu_t)$, the triangle inequality,
Eq.~(\ref{eq:conditional-noise-bound}), and the tower property gives
\begin{equation}
\label{eq:positive-W-bound}
\E_\nu\left[\sum_{t\in\mathcal T_+}
|\widehat\mu_t-\mu_t|\right]
\le
\sigma\E_\nu[V_T]+\E_\nu[P_T].
\end{equation}

It remains to account for rounds with $W_t=0$.  Because detection uses only
decision-stream observations and its threshold is nondecreasing between them,
alarms (and hence restarts) can occur only after a decision-stream
observation.  After
each restart, there is therefore at most one prediction before the next
estimation-stream observation arrives.  Hence
\begin{equation}
\label{eq:zero-denominator-count}
N_0\le K.
\end{equation}
On such rounds with $t>2$, ECR uses the most recent estimation-stream
observation $X_{\rho(t)}$, where $\rho(t)$ denotes its index; at the
deterministic initial round $t=2$, it uses $X_1$.  For $t>2$, the event
$\{W_t=0\}$ belongs to
$\mathcal G_T^{\Dstr}$ and is therefore independent of the corresponding
estimation noise.  Consequently,
\[
\begin{aligned}
\E_\nu\left[
\sum_{t\notin\mathcal T_+}|\widehat\mu_t-\mu_t|
\right]
&\le
(\Delta_{\max}+\sigma)\E_\nu[N_0]
\le
(\Delta_{\max}+\sigma)\E_\nu[K],
\end{aligned}
\]
where we used $|\mu_{\rho(t)}-\mu_t|\le\Delta_{\max}$ and
$\E|\varepsilon_{\rho(t)}|\le\sigma$; the same bound holds for $X_1$ at
$t=2$.  Combining this estimate with Eq.~(\ref{eq:positive-W-bound}) proves
Eq.~(\ref{eq:regret-decomposition}).
\end{proof}

Lemma~\ref{lem:regret-decomposition} reduces the proof to bounding
$\E_\nu[V_T]$, $\E_\nu[P_T]$, and $\E_\nu[K]$.
The remainder of the proof establishes bounds on these three quantities.

\subsection{Detection and segmentation}
\label{app:segmentation}

Because the formulation imposes no minimum segment length, the detector cannot be
expected to detect every change point or produce pure blocks \citep{gafni2026the}. 
We instead establish the three properties of its segmentation that are needed in the
regret analysis.  First, false alarms are summably rare, so the expected
number of blocks remains of order $S$.  Second, every block contains a long
interval from at most one class, and
a block containing a stationary interval of order $\ell_{\id}$
is anchored at that run's class.  These anchors make the
compatibility test meaningful and allow the variance analysis to group useful
historical blocks by the $M$ recurrent classes.  Third, the total off-anchor
mass in anchored blocks, together with the mass of unanchored blocks, is of
order $(S+K)\ell_{\id}$.  Combined with the exposure cap, this last property
controls the bias caused by reusing contaminated blocks.

We begin with a uniform concentration bound for the detector.  It controls
false alarms and implies that, whenever no alarm is raised, every population
CUSUM remains below twice the detection threshold.  We then use this implication to
prove anchor formation and separation, followed by the resulting structural
contamination bound. For an anchored block $b$, denote its anchor by $\mathfrak a(b)$.

\textbf{Step 1 - Detector concentration}

For a split $r<k<t$ with $N_{[r,k)}^{\Dstr}>0$ and
$N_{[k,t)}^{\Dstr}>0$, which we call a \emph{valid split}, define the
population counterpart of the CUSUM statistic by
\begin{equation}
\label{eq:population-cusum}
D_{k,t}^r
:=
\frac1\sigma
\sqrt{\frac{N_{[r,k)}^{\Dstr}N_{[k,t)}^{\Dstr}}
{N_{[r,t)}^{\Dstr}}}
\left|\bar\mu_{[r,k)}^{\Dstr}
-\bar\mu_{[k,t)}^{\Dstr}\right|.
\end{equation}

For deterministic $r<t$, let
\begin{equation}
\label{eq:local-detection-event}
\mathcal D_{r,t}^{\det}
:=
\left\{\exists\text{ valid }k:
|\widehat D_{k,t}^r-D_{k,t}^r|\ge\gamma_t^r\right\}.
\end{equation}

\begin{lemma}[Local detector concentration]
\label{lem:local-detector-concentration}
For every deterministic $1\le r<t\le T$,
\begin{equation}
\label{eq:local-detection-probability}
\Pbb(\mathcal D_{r,t}^{\det})
\le
\frac6{\pi^2}\frac{\alpha_r}{(t-r)^2}.
\end{equation}
Consequently, the event
\begin{equation}
\label{eq:detection-good-event}
\mathcal G_{\det}
:=
\bigcap_{1\le r<t\le T}(\mathcal D_{r,t}^{\det})^c
\end{equation}
satisfies $\Pbb(\mathcal G_{\det})\ge1-\alpha$.

Let $r(t)$ denote the most recent restart time before
round $t$.  The same bounds give
\begin{align}
\E\left[\sum_{t=2}^T
\1\{\mathcal D_{r(t),t}^{\det}\}\right]
&\le\alpha,
\label{eq:local-detection-count}\\
\E\left[\sum_{t=2}^T(t-r(t))
\1\{\mathcal D_{r(t),t}^{\det}\}\right]
&\le\alpha(1+\log T).
\label{eq:local-detection-prefix-length}
\end{align}
\end{lemma}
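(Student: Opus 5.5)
The plan is to prove the pointwise bound Eq.~(\ref{eq:local-detection-probability}) by a sub-Gaussian tail bound for a single split, then a union bound over the at most $t-r$ candidate splits. The remaining three claims should then follow by summing this bound over deterministic pairs $(r,t)$.

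\textbf{Step 1: a single split.} Fix deterministic $r<t$ and a valid split $k$, and write $n_1:=N^{\Dstr}_{[r,k)}$, $n_2:=N^{\Dstr}_{[k,t)}$, $n:=n_1+n_2$. By the reverse triangle inequality,
\[
|\widehat D^r_{k,t}-D^r_{k,t}|\le \frac1\sigma\sqrt{\frac{n_1n_2}{n}}\,\bigl|\bar\varepsilon_{[r,k)}-\bar\varepsilon_{[k,t)}\bigr|,
\]
where $\bar\varepsilon$ denotes the decision-stream noise average. The two averages are independent and centered, and sub-Gaussian with proxies $\sigma^2/n_1$ and $\sigma^2/n_2$. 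Their difference is therefore sub-Gaussian with proxy $\sigma^2(1/n_1+1/n_2)=\sigma^2 n/(n_1n_2)$. Hence the right-hand side is sub-Gaussian with proxy $1$, and
\[
\Pbb\bigl(|\widehat D^r_{k,t}-D^r_{k,t}|\ge\gamma\bigr)\le 2e^{-\gamma^2/2}.
\]
Plugging in $\gamma=\gamma^r_t$ gives $e^{-(\gamma_t^r)^2/2}=(t-r)^{-3}\alpha_r\cdot 3/\pi^2$.

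\textbf{Step 2: union over splits.} There are fewer than $t-r$ valid splits. Summing the bound from Step~1 over them gives exactly $\frac{6}{\pi^2}\alpha_r/(t-r)^2$, which is Eq.~(\ref{eq:local-detection-probability}).

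\textbf{Step 3: the global event.} For the good event, take a union bound over all pairs $r<t$. For fixed $r$, we have $\sum_{d\ge1}\frac{6}{\pi^2}d^{-2}=1$, so the sum over $t$ contributes at most $\alpha_r$. Then $\sum_{r\ge1}\alpha_r=\frac{6\alpha}{\pi^2}\sum_r r^{-2}=\alpha$, which gives $\Pbb(\mathcal G_{\det})\ge 1-\alpha$.

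\textbf{Step 4: the random restart time.} The only subtle point, and the step I would be most careful with, is that $r(t)$ is random and depends on the data. I would therefore not condition on it. Instead I would use the pathwise domination
\[
\1\{\mathcal D^{\det}_{r(t),t}\}\le\sum_{1\le r<t}\1\{\mathcal D^{\det}_{r,t}\},
\]
where each event on the right is defined for a deterministic $r$. The analogous domination holds with the weights $(t-r)$ included. Taking expectations, Eq.~(\ref{eq:local-detection-count}) follows from the same double sum as in Step~3. For Eq.~(\ref{eq:local-detection-prefix-length}), the weight $t-r=d$ cancels one power of $d^{-2}$, so the inner sum over $t$ becomes $\frac{6}{\pi^2}\alpha_r\sum_{d=1}^{T}d^{-1}\le\alpha_r(1+\log T)$. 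Summing over $r$ then gives $\alpha(1+\log T)$.
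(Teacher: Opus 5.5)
Your proposal is correct and follows the paper's own route: a $1$-sub-Gaussian tail bound for each fixed split via the reverse triangle inequality, then a union bound over at most $t-r$ splits, then summation over $t>r$ and over $r$ using $\sum_r\alpha_r\le\alpha$, and finally the pathwise domination $\1\{\mathcal D^{\det}_{r(t),t}\}\le\sum_{r<t}\1\{\mathcal D^{\det}_{r,t}\}$, weighted by $t-r$ for the prefix-length bound to produce the harmonic factor $1+\log T$. One cosmetic slip: Step~2 gives at most $\frac{6}{\pi^2}\alpha_r/(t-r)^2$, not exactly this value, because there are only $t-r-1$ splits.
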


\begin{proof}
For a fixed valid triple $(r,k,t)$, the reverse triangle inequality bounds
$|\widehat D_{k,t}^r-D_{k,t}^r|$ by the absolute value of a normalized centered linear
combination of independent noises.  This linear combination is $1$-sub-Gaussian by the noise
assumption; see also \citep{gafni2026the}.
Writing $n=t-r$, its tail probability at $\gamma_t^r$ is at most
\[
2e^{-(\gamma_t^r)^2/2}
=\frac{6\alpha_r}{\pi^2n^3}.
\]
There are at most $n$ valid splits, which proves
Eq.~(\ref{eq:local-detection-probability}).  Summing first over $t>r$ and then over
$r$ gives
\[
\Pbb(\mathcal G_{\det}^c)
\le\sum_{r\ge1}\alpha_r\le\alpha.
\]
For the realized boundary $r(t)$, pathwise
\[
\1\{\mathcal D_{r(t),t}^{\det}\}
\le\sum_{r<t}\1\{\mathcal D_{r,t}^{\det}\}.
\]
Tonelli's theorem and Eq.~(\ref{eq:local-detection-probability}) prove
Eq.~(\ref{eq:local-detection-count}).  The analogous weighted inequality is
\[
(t-r(t))\1\{\mathcal D_{r(t),t}^{\det}\}
\le\sum_{r<t}(t-r)\1\{\mathcal D_{r,t}^{\det}\}.
\]
Applying Tonelli again and using
$\sum_{n=1}^Tn^{-1}\le1+\log T$ proves
Eq.~(\ref{eq:local-detection-prefix-length}).
\end{proof}

On $\mathcal G_{\det}$, if no alarm occurs at round $t$, then, for every valid split $k$,
\begin{equation}
\label{eq:no-alarm-population-bound}
D_{k,t}^r<2\gamma_t^r.
\end{equation}
Indeed, both $\widehat D_{k,t}^r<\gamma_t^r$ and
$|\widehat D_{k,t}^r-D_{k,t}^r|<\gamma_t^r$.  Thus, on the good event,
failure to raise an alarm implies that every population CUSUM statistic is below twice
the detection threshold.

\textbf{Step 2 - Total number of blocks}

We use the concentration bound to control unnecessary restarts.

\begin{lemma}[False alarms and number of blocks]
\label{lem:false-alarms}
Let
\[
\tau^+(r):=\min\bigl(\{\tau_j:\tau_j>r\}\cup\{T+1\}\bigr),
\]
denote the first change point after a restart time $r$, with the convention
$\tau^+(r)=T+1$ if no such change exists.  We call an alarm from $r$ false if
$N_G^r\le\min\{\tau^+(r),T\}$.  Let $N_{\FA}$ be the number of false alarms,
and recall that $K$ is the total number of blocks.
Then
\begin{equation}
\label{eq:K-bound}
\E[N_{\FA}]\le\alpha,
\qquad
\E[K]\le S+1+\alpha.
\end{equation}
\end{lemma}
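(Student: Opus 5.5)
We bound false alarms by showing that each one forces a detector-concentration failure, and then sum the failure probabilities using Lemma~\ref{lem:local-detector-concentration}.

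\textbf{A false alarm implies a concentration failure.} Fix a realized restart $r$ and suppose the alarm from $r$ is false, i.e.\ $t:=N_G^r\le\min\{\tau^+(r),T\}$. The alarm fires at round $t$ using observations indexed by $[r,t)$. Since $t\le\tau^+(r)$, no change point lies in $(r,t)$, so $\mu_u$ is constant on $[r,t)$. Hence for every valid split $k$ the population CUSUM \eqref{eq:population-cusum} satisfies $D_{k,t}^r=0$. The boundary index $r$ itself needs care: there is no change at $r$ unless $r=\tau_j$, and in that case $\tau^+(r)>r$ still leaves $[r,t)$ stationary. The alarm condition $C_t^r\ge\gamma_t^r$ then gives some valid $k$ with $|\widehat D_{k,t}^r-D_{k,t}^r|=\widehat D_{k,t}^r\ge\gamma_t^r$. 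This is exactly the event $\mathcal D_{r,t}^{\det}$ with $r=r(t)$.

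\textbf{Summing the failure probabilities.} The rounds at which distinct false alarms fire are distinct, since each alarm round determines its block. Therefore, pathwise,
\[
N_{\FA}\le\sum_{t=2}^T\1\{\mathcal D_{r(t),t}^{\det}\}.
\]
Taking expectations and applying Eq.~(\ref{eq:local-detection-count}) gives $\E[N_{\FA}]\le\alpha$. The one subtlety is that $r$ is random. Eq.~(\ref{eq:local-detection-count}) already handles this through the pathwise domination $\1\{\mathcal D_{r(t),t}^{\det}\}\le\sum_{r'<t}\1\{\mathcal D_{r',t}^{\det}\}$ over deterministic $r'$. We only need to check that $r(t)$ at the alarm round is the restart in force when $C_t^r$ is computed, which matches the indexing in Algorithm~\ref{alg:ECR}.

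\textbf{Counting blocks.} Each block after the first is created by an alarm, so $K\le1+(\text{number of alarms})$. We split alarms into false and non-false. A non-false alarm from $r$ has $N_G^r>\tau^+(r)$, so at least one change point $\tau_j$ lies in $(r,N_G^r)$. After the alarm the restart moves to $N_G^r-1\ge\tau^+(r)$. It follows that distinct non-false alarms are charged to distinct change points, by mapping each to the first change after its restart, because the restarts strictly pass these points. Hence there are at most $S$ non-false alarms, and $K\le1+S+N_{\FA}$. Taking expectations gives $\E[K]\le S+1+\alpha$.

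\textbf{Main obstacle.} The step needing most care is the boundary and off-by-one bookkeeping. The restart is set to $N_G^r-1$, not $N_G^r$, so we must verify two things. First, the charged change point satisfies $\tau^+(r)\le N_G^r-1$, which holds since $N_G^r>\tau^+(r)$ forces $N_G^r-1\ge\tau^+(r)$. Second, the next restart $r'$ has $\tau^+(r')>\tau^+(r)$, so the injection into change points is valid. If $r'=\tau^+(r)$ exactly, the same change point could seemingly be reused. Since $\tau^+(r')$ is defined with strict inequality $\tau_j>r'$, it moves to the next change, and the injection holds.
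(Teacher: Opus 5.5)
Your proof is correct and follows the paper's argument. A false alarm forces a detector-concentration failure because the population CUSUM vanishes on the stationary prefix $[r,t)$, non-false alarms map injectively to change points, and so $K\le S+1+N_{\FA}$. The only difference is cosmetic: the paper sums over restart values $r$, each used at most once with $\Pbb(N_G^r\le\min\{\tau^+(r),T\})\le\alpha_r$, whereas you sum over alarm rounds $t$ via Eq.~(\ref{eq:local-detection-count}). Both are the same double union bound over $(r,t)$ taken in opposite orders.
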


\begin{proof}
For every $t\le\tau^+(r)$, the population mean is constant on $[r,t)$, so
$D_{k,t}^r=0$ for every valid split $k$.  Hence the event
$\{N_G^r\le\min\{\tau^+(r),T\}\}$ is contained in
$\bigcup_{r<t\le T}\mathcal D_{r,t}^{\det}$ and has probability at most
$\alpha_r$.  Since the restart times are strictly increasing, each
integer $r$ can be used at most once, and therefore
\[
\E[N_{\FA}]
\le\sum_{r=1}^T\Pbb(N_G^r\le\min\{\tau^+(r),T\})
\le\sum_{r=1}^T\alpha_r
\le\alpha.
\]
Each non-false alarm is associated with a distinct change point. Hence the number of non-false alarms is at most $S$. Therefore,
\[
K \le S+1+N_{\FA},
\]
where the additional $1$ accounts for the initial block.
\end{proof}

\textbf{Step 3 - Proof of
Proposition~\ref{prop:anchored-segmentation}: {anchor segmentation}}

We next prove Proposition~\ref{prop:anchored-segmentation}.
We first isolate the local anchoring properties that will also be used later
in the analysis.
Recall the analysis-only identification scale $\ell_{\id}$ from
Eq.~(\ref{eq:ell-id-main}).
The threshold in Eq.~(\ref{eq:det-threshold}) satisfies
$(\gamma_t^r)^2\le C_\gamma\log(T/\alpha)$ for all $1\le r<t\le T$, where
$C_\gamma$ is a universal constant.

\begin{lemma}[Anchor formation, preservation, and separation]
\label{lem:anchor-formation}
Work on $\mathcal G_{\det}$, with $C_{\id}$ in
Eq.~(\ref{eq:ell-id-main}) sufficiently large, and suppose the active observation
prefix starts at $r$.
\begin{enumerate}[label=(\roman*)]
\item If a no-alarm prefix contains $\ell_{\id}$ decision-stream observations
from a pure class-$m$ interval, then at the end of that interval its population
decision-stream mean is within $\Delta_{\min}/8$ of $\theta^{(m)}$.
\item Once this occurs, the population decision-stream mean remains within
$\Delta_{\min}/4$ of $\theta^{(m)}$ at every later no-alarm round before the
first alarm.
\item After such an anchor has formed, a pure interval from a class $m'\ne m$
causes an alarm before $\ell_{\id}$ of its decision-stream observations can be
stored in the old block.
\end{enumerate}
\end{lemma}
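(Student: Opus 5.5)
The plan is to work throughout on $\mathcal G_{\det}$ and use only the consequence Eq.~(\ref{eq:no-alarm-population-bound}): at every no-alarm round $t$ and every valid split $k$, $D_{k,t}^r<2\gamma_t^r\le 2\sqrt{C_\gamma\log(T/\alpha)}$. All three parts reduce to one elementary fact about the population CUSUM. Take a split with $n_1$ decision-stream observations on the left and $n_2$ on the right, and write $\Delta$ for the difference of their population means. Then the statistic equals $\sigma^{-1}\sqrt{n_1n_2/(n_1+n_2)}\,\Delta\ge\sigma^{-1}\sqrt{\min(n_1,n_2)/2}\,\Delta$. Hence a no-alarm round forces
\[
\Delta<\frac{2\sigma\sqrt{2C_\gamma\log(T/\alpha)}}{\sqrt{\min(n_1,n_2)}}.
\]
If $\min(n_1,n_2)\ge\ell_{\id}$ with $C_{\id}$ large, the right-hand side is at most $c\Delta_{\min}$ for any prescribed small $c$, say $c=1/16$.

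\textbf{(i)} Let the pure class-$m$ interval $J=[s,e)$ contain at least $\ell_{\id}$ decision observations, and split at $k=s$. The right side is pure, with mean $\theta^{(m)}$ and $n_2\ge\ell_{\id}$. If the left side $[r,s)$ has at least $\ell_{\id}$ decision observations, the fact above gives $|\bar\mu_{[r,s)}-\theta^{(m)}|\le\Delta_{\min}/16$. The full-prefix mean is a convex combination of the two sides, so it lies within $\Delta_{\min}/8$ of $\theta^{(m)}$. If the left side has fewer than $\ell_{\id}$ observations, do not use the split. The left mass $n_1$ is then at most $\ell_{\id}\le n_2$, and the deviation of the prefix mean is $\frac{n_1}{n_1+n_2}|\bar\mu_{[r,s)}-\theta^{(m)}|$. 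This alone is not small, so the bound again has to come from the CUSUM, now balancing the other way: $\sqrt{n_1 n_2/(n_1+n_2)}\ge\sqrt{n_1/2}$. The quantity to control is therefore $\frac{n_1}{n_1+n_2}\Delta=\sqrt{n_1}\Delta\cdot\frac{\sqrt{n_1}}{n_1+n_2}$. The first factor is $O(\sigma\sqrt{\log(T/\alpha)})$ and the second is at most $1/\sqrt{n_2}\le1/\sqrt{\ell_{\id}}$. This gives the $\Delta_{\min}/8$ bound uniformly, so I will run this single computation and need no case split.

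\textbf{(ii)} At a later no-alarm round $t$, split at the start $s$ of the anchoring run. The left part $[r,e)$ has population mean within $\Delta_{\min}/8$ of $\theta^{(m)}$ by (i), with at least $\ell_{\id}$ decision observations. Better, split at $k=e$, the end of the run. The weighted bound from (i) applied to the left mass $n_1\ge\ell_{\id}$ shows that the new material $[e,t)$ can shift the overall mean by at most $\frac{n_2}{n_1+n_2}|\bar\mu_{[e,t)}-\bar\mu_{[r,e)}|\le\sqrt{n_2}\Delta/\sqrt{n_1}\cdot\sqrt{n_2/(n_1+n_2)}\cdots$. By the same algebra this is at most $2\sigma\gamma/\sqrt{n_1}\le\Delta_{\min}/8$. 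Adding this to the $\Delta_{\min}/8$ from (i) gives $\Delta_{\min}/4$.

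\textbf{(iii)} Suppose a pure class-$m'$ interval has accumulated $\ell_{\id}$ decision observations in the old block with no alarm. Split at its start. The right side has mean $\theta^{(m')}$ and $n_2\ge\ell_{\id}$. The left side contains the anchor, and by (ii) applied at the split point its mean is within $\Delta_{\min}/4$ of $\theta^{(m)}$, with $n_1\ge\ell_{\id}$. The gap is then $\Delta\ge\frac34\Delta_{\min}$, which contradicts the fact above. So an alarm occurs first.

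The main obstacle is the bookkeeping in (i)–(ii). There I must check that the weighted-deviation inequality really gives a bound uniform in the unknown left mass, and handle stream parity and boundary off-by-one effects via the stated boundary convention. Everything else is a direct application of Eq.~(\ref{eq:no-alarm-population-bound}).
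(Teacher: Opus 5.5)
Your proposal is correct and follows the paper's proof essentially step for step: the split at the start of the pure run plus the convex-combination identity for (i), the split at the anchor-formation point for (ii), and the $\tfrac34\Delta_{\min}$ gap played against the no-alarm bound $D_{k,t}^r<2\gamma_t^r$ for (iii). The only slip is in (i): the bound $\sqrt{n_1}\,\Delta=O(\sigma\gamma_v^r)$ holds only when $n_1\le n_2$, so your single computation is not uniform as written, whereas the paper's factorization $\frac{n_1}{n_1+n_2}\Delta=\sqrt{\frac{n_1n_2}{n_1+n_2}}\,\Delta\cdot\sqrt{\frac{n_1}{n_2(n_1+n_2)}}\le 2\sigma\gamma_v^r/\sqrt{n_2}$ is uniform (and your first case already covers $n_1\ge\ell_{\id}$ anyway).
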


\begin{proof}
For (i), let $[u,v)$ be the pure class-$m$ interval, let
$h=N_{[u,v)}^{\Dstr}$, and let $p=N_{[r,u)}^{\Dstr}$.  If $p=0$, the claim is
immediate.  Otherwise the split $k=u$ and
Eq.~(\ref{eq:no-alarm-population-bound}) give
\[
\frac{ph}{p+h}
\frac{(\bar\mu_{[r,u)}^{\Dstr}-\theta^{(m)})^2}{\sigma^2}
\le 4(\gamma_v^r)^2.
\]
Moreover,
\[
\bar\mu_{[r,v)}^{\Dstr}-\theta^{(m)}
=\frac{p}{p+h}
(\bar\mu_{[r,u)}^{\Dstr}-\theta^{(m)}),
\]
and hence
\[
|\bar\mu_{[r,v)}^{\Dstr}-\theta^{(m)}|
\le \frac{2\sigma\gamma_v^r}{\sqrt h}
\le \frac{\Delta_{\min}}8.
\]
The last inequality follows from Eq.~(\ref{eq:ell-id-main}) when $C_{\id}$ is
sufficiently large.

For (ii), let $v_0$ be an anchor-formation time, set
$a=N_{[r,v_0)}^{\Dstr}$, and consider a later no-alarm time $t$.  If
$d=N_{[v_0,t)}^{\Dstr}=0$, the claim is immediate.  Otherwise, the split
$k=v_0$ gives
\[
\frac{ad}{a+d}
\frac{(\bar\mu_{[r,v_0)}^{\Dstr}-
\bar\mu_{[v_0,t)}^{\Dstr})^2}{\sigma^2}
\le4(\gamma_t^r)^2.
\]
Using the convex-combination identity for $\bar\mu_{[r,t)}^{\Dstr}$,
\[
|\bar\mu_{[r,t)}^{\Dstr}-\theta^{(m)}|
\le
\frac{2\sigma\gamma_t^r}{\sqrt a}
+\frac{\Delta_{\min}}8
\le\frac{\Delta_{\min}}4.
\]

For (iii), suppose the active-prefix population mean immediately before a
later interval is within $\Delta_{\min}/4$ of $\theta^{(m)}$, and let that
interval be pure class-$m'$, $m'\ne m$.  If the two sides of the corresponding
split have decision-stream sizes $a$ and $d$, respectively, then
\[
(D_{k,t}^r)^2
\ge
\frac{9}{16}\frac{ad}{a+d}\frac{\Delta_{\min}^2}{\sigma^2}.
\]
The anchor-forming prefix has $a\ge\ell_{\id}$.  Once also
$d\ge\ell_{\id}$, the preceding display is larger than
$4(\gamma_t^r)^2$ for a sufficiently large $C_{\id}$.  On
$\mathcal G_{\det}$, this implies
$\widehat D_{k,t}^r>\gamma_t^r$, so an alarm must already have occurred.
At the first crossing, its triggering decision-stream observation is assigned to the
new block.  The old block therefore contains strictly fewer than
$\ell_{\id}$ decision-stream observations from the competing long interval.
\end{proof}

\begin{proof}[Proof of
Proposition~\ref{prop:anchored-segmentation}]
Lemma~\ref{lem:anchor-formation} shows that the first long pure interval
forms an anchor and that a second class cannot contribute another long pure
interval before an alarm. This proves uniqueness.

It remains to verify the mean of the stored block.  Let
$q=r_{b+1}$.  If $b$ is closed by an alarm, that alarm occurs at round $q+1$.
At the preceding round $q$,
\[
\mathcal A_q=[r_b,q)=\cB_b.
\]
The first two parts of Lemma~\ref{lem:anchor-formation} therefore apply
with one small distinction: if $q$ is the anchor-formation endpoint, part~(i)
applies directly; otherwise part~(ii), applied at the no-alarm round $q$, gives
$|\bar\mu_b^{\Dstr}-\theta^{(m)}|\le\Delta_{\min}/4$.  The terminal block satisfies the same conclusion at round $T$.  
{This proves Eq.~(\ref{eq:anchored-segmentation}).}
\end{proof}

\textbf{Step 4 - Aggregate contamination control}

Let $\mathfrak B_{\anc}$ and
$\mathfrak B_{\un}$ denote the sets of anchored and unanchored blocks, respectively.
For every anchored block $b$ and
$a\in\{\Dstr,\Estr\}$, let
\[
c_b^a
:=
\sum_{u\in\cB_b\cap\cS^a}
\1\!\left\{
\mu_u\ne\theta^{(\mathfrak a(b))}
\right\}.
\]
Define $\delta_b^a:=|\bar\mu_b^a-\theta^{(\mathfrak a(b))}|$ when
$N_b^a>0$, and set $\delta_b^a=0$ otherwise.

\begin{lemma}[Aggregate segmentation and block-bias budget]
\label{lemma:global-contamination-control}
On $\mathcal G_{\det}$, simultaneously for both streams,
\begin{equation}
\label{eq:global-contamination-control}
\begin{aligned}
\sum_{b\in\mathfrak B_{\anc}}
N_b^a\delta_b^a
+
\Delta_{\max}
\sum_{b\in\mathfrak B_{\un}}N_b^a
&\le
\Delta_{\max}\left(
\sum_{b\in\mathfrak B_{\anc}}c_b^a
+\sum_{b\in\mathfrak B_{\un}}N_b^a
\right)
\le
C'_{\rm seg}\Delta_{\max}(S+1)\ell_{\id}.
\end{aligned}
\end{equation}
\end{lemma}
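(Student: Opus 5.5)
The first inequality is pointwise, and I would prove it block by block. For an anchored block $b$ with anchor $m=\mathfrak a(b)$, write
\[
N_b^a\delta_b^a=\Bigl|\sum_{u\in\cB_b\cap\cS^a}(\mu_u-\theta^{(m)})\Bigr|\le\sum_{u\in\cB_b\cap\cS^a}|\mu_u-\theta^{(m)}|\,\1\{\mu_u\ne\theta^{(m)}\}\le\Delta_{\max}c_b^a .
\]
The unanchored terms appear identically on both sides. So all the content lies in the second inequality: on $\mathcal G_{\det}$, the total off-anchor count in anchored blocks plus the total mass of unanchored blocks is $O((S+1)\ell_{\id})$ in each stream. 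I would prove it for the decision stream. The estimation stream then follows from $|N_I^{\Dstr}-N_I^{\Estr}|\le1$ on contiguous intervals, with the $O(K)$ correction absorbed using $K\le S+1$ on $\mathcal G_{\det}$ (no false alarms there, by the proof of Lemma~\ref{lem:false-alarms}).

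\textbf{Charging scheme.} Decompose each block $\cB_b$ into its maximal pure pieces $\cB_b\cap\mathcal I_j$. Because pieces from distinct segments are disjoint, the number of (block, segment) pairs is at most $K+S\le 2S+1$: each block boundary or change point creates at most one new piece.

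\textbf{Unanchored blocks.} An unanchored block contains no long pure piece; otherwise Lemma~\ref{lem:anchor-formation}(i) would form an anchor. Its mass is therefore at most $\ell_{\id}$ times its number of pieces. Summed over blocks, the total is at most $(2S+1)\ell_{\id}$.

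\textbf{Off-anchor mass in anchored blocks.} A piece of class $m'\ne m$ falls into one of three cases.
\begin{enumerate}[label=(\roman*), nosep]
\item \emph{Short piece.} It contributes fewer than $\ell_{\id}$ observations.
\item \emph{Long piece after anchor formation.} Lemma~\ref{lem:anchor-formation}(iii) caps it at $\ell_{\id}$ decision observations.
\item \emph{Long piece before anchor formation.} By uniqueness (Proposition~\ref{prop:anchored-segmentation}), the block cannot contain two long pure pieces of different classes, and the anchor is set by the first long piece. So this case is impossible, except for a piece truncated at the block's start, which is short anyway.
\end{enumerate}
Each off-anchor piece thus contributes at most $\ell_{\id}$, and summing over the at most $2S+1$ pieces gives the bound with $C'_{\rm seg}=O(1)$.

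\textbf{Main obstacle.} The delicate step is case (iii). A long piece of a different class could occur before the anchoring run, with the anchor then determined by a later, longer run. To rule this out, I would argue that any long pure piece, wherever it occurs in a no-alarm prefix, already forms an anchor by part (i), since part (i) does not require the piece to be first. Parts (ii) and (iii) then forbid a second long piece of another class, so the block's anchor must be that piece's class. I would also check that part (iii) holds when the new class interval is interleaved with short pieces of other classes. Extending the bound on $D_{k,t}^r$ to a split at the interval's start only requires the left side's mean to be near $\theta^{(m)}$, and part (ii) guarantees that.
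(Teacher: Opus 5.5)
Your proposal is correct and follows the paper's argument. Both proofs bound $N_b^a\delta_b^a\le\Delta_{\max}c_b^a$ blockwise, cap each of the $O(S+K)$ block--segment intersections at $\ell_{\id}$ decision-stream observations via Lemma~\ref{lem:anchor-formation}, transfer to the estimation stream via $|N_I^{\Dstr}-N_I^{\Estr}|\le1$, and use $K\le S+1$ on $\mathcal G_{\det}$. Your explicit handling of a long off-class piece that precedes the anchoring run unpacks the paper's one-line remark that such a piece would otherwise have anchored the block or triggered an alarm, and your count of $K+S$ intersections is slightly sharper than the paper's $S+K+1$.
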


\begin{proof}[Proof of
Lemma~\ref{lemma:global-contamination-control}]
Contamination can occur only on intersections of true segments with detector
blocks.  Each charged intersection contains fewer than
$\ell_{\id}$ decision-stream observations: otherwise an unanchored block
would become anchored, or an anchored block would trigger an alarm.  There
are at most $S+K+1$ such intersections.  By the
alternating-stream part of the boundary convention and $\ell_{\id}\ge1$,
for both $a\in\{\Dstr,\Estr\}$,
\[
\sum_{b\in\mathfrak B_{\anc}}c_b^a
+\sum_{b\in\mathfrak B_{\un}}N_b^a
\le C_{\rm seg}(S+K)\ell_{\id}.
\]
On $\mathcal G_{\det}$ there are no false alarms, so the argument of
Lemma~\ref{lem:false-alarms} gives $K\le S+1$.  Absorbing the resulting
factor of two into a universal constant $C'_{\rm seg}$ and using
$N_b^a\delta_b^a\le\Delta_{\max}c_b^a$ proves
Eq.~(\ref{eq:global-contamination-control}).
\end{proof}

\subsection{Compatibility guarantees}
\label{app:compatibility}
The segmentation result assigns a population anchor to every
block containing a sufficiently long stationary interval.  Building on this
structure, the compatibility test provides two key guarantees.  First, a
long active prefix centered within $\Delta_{\min}/4$ of class $m$ rejects
every block with a different anchor, preventing substantial cross-class
contamination.  Second, an active prefix safe for $m$ makes every safe
same-anchor block compatible, supporting the recurrent estimation bound.
Actual reuse still depends on the exposure budget.

\paragraph{Zero-count convention.}
The displayed definitions of $\cC_t$ and $\mathcal U_t$ suppress positivity
conditions for readability.  Formally, $\widehat Q_b(t)$ is evaluated only
when $N_{\mathcal A_t}^{\Dstr}>0$ and $N_b^{\Dstr}>0$; otherwise $b$ is not
included in $\cC_t$.  A block with $N_b^{\Estr}=0$ contributes neither to the
estimator numerator nor to $W_t$ and may therefore be omitted from
$\mathcal U_t$ without changing any prediction.  Accordingly, exposure
bounds are stated only for blocks with $N_b^{\Estr}>0$; rounds with $W_t=0$
are handled by the fallback rule in Lemma~\ref{lem:regret-decomposition}.

\textbf{Step 1 - Compatibility concentration}

For deterministic intervals $I,J\subseteq[1,t)$ with positive decision-stream counts,
define
\begin{align*}
\widehat Q(I,J)
&:=
\frac1\sigma
\sqrt{\frac{N_I^{\Dstr}N_J^{\Dstr}}
{N_I^{\Dstr}+N_J^{\Dstr}}}
|\bar X_I^{\Dstr}-\bar X_J^{\Dstr}|,
\qquad Q(I,J)
:=
\frac1\sigma
\sqrt{\frac{N_I^{\Dstr}N_J^{\Dstr}}
{N_I^{\Dstr}+N_J^{\Dstr}}}
|\bar\mu_I^{\Dstr}-\bar\mu_J^{\Dstr}|.
\end{align*}
Thus $\widehat Q_b(t)=\widehat Q(\mathcal A_t,\cB_b)$, and we write
\begin{equation}
\label{eq:population-compatibility}
Q_b(t):=Q(\mathcal A_t,\cB_b)
\end{equation}
for its population counterpart.
Let $\mathcal E_t^{\rm cmp}$ be the event that, for some completed block
evaluated at round $t$,
\begin{equation}
\label{eq:local-compatibility-event}
|\widehat Q_b(t)-Q_b(t)|\ge\lambda_t^b,
\end{equation}
and set
\begin{equation}
\label{eq:compatibility-good-event}
\mathcal G_{\rm cmp}
:=
\bigcap_{t=2}^T(\mathcal E_t^{\rm cmp})^c.
\end{equation}

\begin{lemma}[Compatibility concentration]
\label{lem:compat-concentration}
For every $t\le T$,
\begin{equation}
\label{eq:local-compatibility-probability}
\Pbb(\mathcal E_t^{\rm cmp})
\le \frac6{\pi^2}\frac{\alpha}{t^2}.
\end{equation}
Consequently,
\begin{equation}
\label{eq:compatibility-deviation-count}
\Pbb(\mathcal G_{\rm cmp})\ge1-\alpha,
\qquad
\E\left[\sum_{t=2}^T
\1\{\mathcal E_t^{\rm cmp}\}\right]
\le\alpha.
\end{equation}
\end{lemma}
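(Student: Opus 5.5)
The proof is a union bound over deterministic intervals. The randomness of $\mathcal A_t$ and $\cB_b$ is removed by enumerating every interval pair they could realize, exactly as in Lemma~\ref{lem:local-detector-concentration}.

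\textbf{Step 1: fixed intervals.} Fix $t$ and two disjoint deterministic intervals $I,J\subseteq[1,t)$ with positive decision-stream counts. By the reverse triangle inequality,
\[
|\widehat Q(I,J)-Q(I,J)|\le \frac1\sigma\sqrt{\frac{N_I^{\Dstr}N_J^{\Dstr}}{N_I^{\Dstr}+N_J^{\Dstr}}}\,\bigl|\bar\varepsilon_I^{\Dstr}-\bar\varepsilon_J^{\Dstr}\bigr|,
\]
where $\bar\varepsilon_I^{\Dstr}$ and $\bar\varepsilon_J^{\Dstr}$ are the decision-stream noise averages over $I$ and $J$. Because $I$ and $J$ are disjoint, the noises are independent. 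The difference $\bar\varepsilon_I^{\Dstr}-\bar\varepsilon_J^{\Dstr}$ is therefore sub-Gaussian with variance proxy $\sigma^2(1/N_I^{\Dstr}+1/N_J^{\Dstr})$. The prefactor exactly normalizes this, so the right-hand side is $1$-sub-Gaussian. For block index $b$, the definition of $\lambda_t^b$ gives the tail bound
\[
2e^{-(\lambda_t^b)^2/2}=\frac{6\alpha_b}{\pi^2t^6}.
\]

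\textbf{Step 2: enumerating realizations.} Fix the round $t$ and a block index $b$. The active prefix $\mathcal A_t=[r,t)$ is determined by $r<t$, so it has at most $t$ realizations. A completed block $\cB_b=[r_b,r_{b+1})\subseteq[1,t)$ is determined by its two endpoints, so it has at most $t^2$ realizations. Moreover, $\cB_b$ and $\mathcal A_t$ are always disjoint.

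Consider the event that block $b$ is evaluated at round $t$ and $|\widehat Q_b(t)-Q_b(t)|\ge\lambda_t^b$. It is contained in the union, over admissible disjoint deterministic pairs $(I,J)$ with $I=\mathcal A_t$ and $J=\cB_b$, of the deviation events of Step 1. This containment is the same pathwise argument used for $r(t)$ in Lemma~\ref{lem:local-detector-concentration}. Its probability is therefore at most
\[
t^3\cdot\frac{6\alpha_b}{\pi^2t^6}\le\frac{6\alpha_b}{\pi^2t^2}.
\]
Summing over $b\ge0$ and using $\sum_b\alpha_b\le\alpha$ gives Eq.~(\ref{eq:local-compatibility-probability}).

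\textbf{Step 3: consequences.} A union bound over $t\ge2$, together with $\sum_{t\ge1}t^{-2}=\pi^2/6$, gives $\Pbb(\mathcal G_{\rm cmp}^c)\le\alpha$. Linearity of expectation gives the same sum as a bound on $\E[\sum_t\1\{\mathcal E_t^{\rm cmp}\}]$, which is Eq.~(\ref{eq:compatibility-deviation-count}).

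\textbf{Main obstacle.} The only delicate point is the data-dependence of $\mathcal A_t$ and $\cB_b$, which are selected by the detector from the same decision-stream observations. The plan avoids any conditioning argument by enumerating all interval realizations. The $12\log t$ term in $\lambda_t^b$, i.e.\ the factor $t^{-6}$, is large enough to absorb the $t^3$ interval pairs and still leave the $t^{-2}$ needed for summability.
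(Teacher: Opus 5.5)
Your proof is correct and follows essentially the same route as the paper: a $1$-sub-Gaussian tail bound for each fixed deterministic interval pair via the reverse triangle inequality, then a union bound over all realizable interval pairs and block indices $b$, then summation over $t$. The only difference is that you count at most $t^3$ realizable (active-prefix, block) pairs, while the paper uses the cruder bound of $t^4$ ordered pairs of intervals in $[1,t)$; either count is absorbed by the $t^{-6}$ factor in the tail bound.
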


\begin{proof}
For fixed deterministic $I,J$, the reverse triangle inequality bounds
$|\widehat Q(I,J)-Q(I,J)|$ by the absolute value of a normalized centered
linear combination of the decision-stream noises (as in the detector concentration argument).  This quantity is
$1$-sub-Gaussian.  Therefore, for a fixed block index $b$ and a fixed ordered
pair $(I,J)$,
\[
\Pbb\bigl(|\widehat Q(I,J)-Q(I,J)|
\ge\lambda_t^b\bigr)
\le 2e^{-(\lambda_t^b)^2/2}
=\frac{6\alpha_b}{\pi^2t^6}.
\]
Let $\mathfrak I_t$ denote the collection of intervals contained in
$[1,t)$.  Since $|\mathfrak I_t|\le t^2$, there are at most $t^4$ ordered
interval pairs.  Taking a union bound over these pairs and all possible
indices $b<t$ gives
\[
\begin{aligned}
\Pbb(\mathcal E_t^{\rm cmp})
&\le
\sum_{b<t}\sum_{(I,J)\in\mathfrak I_t^2}
\Pbb\!\left(
|\widehat Q(I,J)-Q(I,J)|\ge\lambda_t^b
\right)\\
&\le
\sum_{b<t}t^4\frac{6\alpha_b}{\pi^2t^6}
\le
\frac6{\pi^2t^2}\sum_{b\ge0}\alpha_b
\le
\frac6{\pi^2}\frac{\alpha}{t^2}.
\end{aligned}
\]
This deterministic union also covers the data-dependent active prefix and
completed blocks.  Summing over $t$ proves both claims in
Eq.~(\ref{eq:compatibility-deviation-count}).
\end{proof}

Outside $\mathcal E_t^{\rm cmp}$,
\begin{equation}
\label{eq:compat-implications}
\begin{aligned}
Q_b(t)\le2\lambda_t^b
&\ \Longrightarrow\ b\in\cC_t,
\qquad b\in\cC_t
\ \Longrightarrow\
Q_b(t)
<5\lambda_t^b.
\end{aligned}
\end{equation}
These two implications are the properties
of the empirical test needed below.

\textbf{Step 2 - Proof of Proposition~\ref{prop:compatibility-summary}}

Recall the long/short definition from
Section~\ref{ssec:change-detection-test} and the safety definition from
Section~\ref{ssec:compatibility-test}.  For later use, we say that the active
prefix is centered at class $m$ when
\begin{equation}
\label{eq:active-class-centering}
\bigl|\bar\mu_{\mathcal A_t}^{\Dstr}-\theta^{(m)}\bigr|
\le\Delta_{\min}/4.
\end{equation}
A prefix that has acquired anchor $m$ is long and, on
$\mathcal G_{\det}$, satisfies
Eq.~(\ref{eq:active-class-centering}) by
Lemma~\ref{lem:anchor-formation}.  A long-safe active prefix also satisfies
Eq.~(\ref{eq:active-class-centering}) when $C_{\id}>24$,
because $\sigma/\sqrt{\ell_{\id}}\le\Delta_{\min}/4$.
A long-unsafe prefix may or may not satisfy it, and the distinction is used
below.

\begin{proof}[Proof of Proposition~\ref{prop:compatibility-summary}]
Work on $\mathcal G_{\det}$, fix a round $t$, and suppose that
$\mathcal E_t^{\rm cmp}$ does not occur.

\emph{Acceptance.}
Suppose that the active prefix is safe for class $m$, and let $b<b_t$ be
safe and anchored at $m$.  The triangle
inequality around $\theta^{(m)}$ and the two safety inequalities give, with
$n=N_{\mathcal A_t}^{\Dstr}$ and $q=N_b^{\Dstr}$,
\[
Q_b^2(t)
\le
\frac{nq}{n+q}
\left(\frac1{\sqrt n}+\frac1{\sqrt q}\right)^2
=\frac{(\sqrt n+\sqrt q)^2}{n+q}
\le2.
\]
The explicit threshold definition gives $\lambda_t^b\ge1$, so
$Q_b(t)\le\sqrt2\le2\lambda_t^b$ and the first implication in
Eq.~(\ref{eq:compat-implications}) proves that $b\in\cC_t$.

\emph{Rejection.}
Suppose that the active prefix has acquired anchor $m$.  It is then long,
and Lemma~\ref{lem:anchor-formation} gives
Eq.~(\ref{eq:active-class-centering}).  Let $b$ be anchored at $m'\ne m$.
{By Proposition~\ref{prop:anchored-segmentation} and class
separation, the triangle inequality gives
$|\bar\mu_b^{\Dstr}-\theta^{(m)}|
\ge |\theta^{(m')}-\theta^{(m)}|-\Delta_{\min}/4
\ge3\Delta_{\min}/4$.} Therefore,
\[
|\bar\mu_{\mathcal A_t}^{\Dstr}-\bar\mu_b^{\Dstr}|
\ge
|\bar\mu_b^{\Dstr}-\theta^{(m)}|
-|\bar\mu_{\mathcal A_t}^{\Dstr}-\theta^{(m)}|
\ge\Delta_{\min}/2.
\]
Moreover, $(\lambda_t^b)^2\le C_\lambda\log(T/\alpha)$ for all $b<t\le T$ and a
universal constant $C_\lambda$.  If $b\in\cC_t$, the second implication in
Eq.~(\ref{eq:compat-implications}) therefore yields
\[
\frac{N_{\mathcal A_t}^{\Dstr}N_b^{\Dstr}}
{N_{\mathcal A_t}^{\Dstr}+N_b^{\Dstr}}
\le
C_0\frac{\sigma^2}{\Delta_{\min}^2}\log\frac{T}{\alpha}
\le \frac{C_0}{C_{\id}}\ell_{\id},
\]
where $C_0$ is universal.
Since $xy/(x+y)\ge\frac12\min\{x,y\}$, both the active prefix and every
anchored block are long, so the left-hand side is at least
$\ell_{\id}/2$.  Choosing $C_{\id}>2C_0$ in
Eq.~(\ref{eq:ell-id-main}) gives a contradiction.  Thus $b\notin\cC_t$.

The argument above is local in $t$: it requires only that
$\mathcal E_t^{\rm cmp}$ does not occur.  On
$\mathcal G_{\det}\cap\mathcal G_{\rm cmp}$ it therefore applies
simultaneously at every round.  The union bound and
Lemmas~\ref{lem:local-detector-concentration} and
\ref{lem:compat-concentration} show that this event has probability at least
$1-2\alpha$, proving the proposition.
\end{proof}

\subsection{Control of the population bias}
\label{app:population-bias}

We now bound $P_T$, the error of the population version of the ECR estimator.
Let 
\[
N_{\rm cmp}:=
\sum_{t=2}^T\1\{\mathcal E_t^{\rm cmp}\}
\]
denote the number of local compatibility deviations.
\begin{lemma}[Population-bias bound]
\label{lem:population-bias}
Let $C_{\rm pop}$ and $C$ be universal positive constants. On $\mathcal G_{\det}$,
\begin{equation}
\label{eq:population-bias-bound}
\begin{aligned}
P_T
&\le
C_{\rm pop}\Delta_{\max}(S+K)\ell_{\id}
\bigl(B_{\exp}+\log(eT)+1\bigr)
+\Delta_{\max}N_{\rm cmp}.
\end{aligned}
\end{equation}
Consequently,
\begin{equation}
\label{eq:population-bias-good-expectation}
\begin{aligned}
\E\bigl[P_T\1\{\mathcal G_{\det}\}\bigr]
&\le
C\Delta_{\max}(S+1)\ell_{\id}
\bigl(B_{\exp}+\log(eT)+1\bigr)
+\Delta_{\max}\alpha.
\end{aligned}
\end{equation}
\end{lemma}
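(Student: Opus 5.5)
We work on $\mathcal G_{\det}$ throughout. The plan is to classify each round $t\in\mathcal T_+$ and to write the error of the population estimator as a weighted average of per-block biases:
\[
|\widetilde\mu_t-\mu_t|\le \frac{1}{W_t}\Bigl(\sum_{u\in\mathcal A_t\cap\cS^{\Estr}}|\mu_u-\mu_t|+\sum_{b\in\mathcal U_t}N_b^{\Estr}|\bar\mu_b^{\Estr}-\mu_t|\Bigr).
\]

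\textbf{Rounds where compatibility fails.} On rounds where $\mathcal E_t^{\rm cmp}$ occurs, we bound the error crudely by $\Delta_{\max}$. This produces the term $\Delta_{\max}N_{\rm cmp}$.

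\textbf{Bad rounds.} Next we isolate rounds $t$ that are not well-identified. These are rounds where the active prefix $\mathcal A_t$ contains no long interval of the current class $m_t$ with $\mu_t=\theta^{(m_t)}$. They also include rounds where $\mu_t$ differs from the anchor of the active prefix, i.e. $t$ lies in an off-anchor piece or an unanchored block. Such rounds are charged $\Delta_{\max}$ each.

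We count them as follows. Each true segment, intersected with each detector block, contributes at most $O(\ell_{\id})$ rounds before the active prefix acquires a long stretch of the current class. This uses Lemma~\ref{lem:anchor-formation}(i),(iii): alarms occur within $\ell_{\id}$ decision observations of a competing long interval. Together with the intersection count from Lemma~\ref{lemma:global-contamination-control}, the number of bad rounds is $O((S+K)\ell_{\id})$.

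\textbf{Good rounds: rejection and the active-prefix term.} On the remaining good rounds, the active prefix is anchored at $m_t$ with $\mu_t=\theta^{(m_t)}$. Proposition~\ref{prop:compatibility-summary}(ii) then says that every reused anchored block has anchor $m_t$.

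Unanchored blocks remain possible, but they are short. Their total mass is at most $\sum_{\mathfrak B_{\un}}N_b=O((S+1)\ell_{\id})$, and each such block is exposure-capped. So it is charged exactly like a contaminated block, with per-sample bias at most $\Delta_{\max}$.

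For the active-prefix term, $\sum_u|\mu_u-\mu_t|/W_t$ is at most $\Delta_{\max}c_{\mathcal A_t}/W_t$, where $c_{\mathcal A_t}$ is the off-class count of the prefix. Since $W_t\ge N_{\mathcal A_t}$ grows by one per round within a block, summing $c/W_t$ over the rounds of a block gives $c\sum 1/n\le c\log(eT)$. Summed over blocks, this term is $O(\Delta_{\max}(S+K)\ell_{\id}\log(eT))$. This is the origin of the $\log(eT)$ factor.

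\textbf{Good rounds: reused blocks via exposure.} For each reused block $b$ we use the exposure identity from Sec.~\ref{ssec:exposure-capped-reuse}:
\[
\sum_{t:\,b\in\mathcal U_t}\frac{N_b^{\Estr}\delta_b^{\Estr}}{W_t}=N_b^{\Estr}\delta_b^{\Estr}H_b(T+1)\le (B_{\exp}+1)N_b^{\Estr}\delta_b^{\Estr}.
\]
The $+1$ is justified because the last increment $1/W_t$ is at most $1$, and the exposure was below $B_{\exp}$ before that increment. Summing over anchored blocks and applying Lemma~\ref{lemma:global-contamination-control} on the estimation stream gives $O(\Delta_{\max}(S+1)\ell_{\id}(B_{\exp}+1))$. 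Unanchored blocks are handled identically with $\delta\le\Delta_{\max}$.

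Adding the bad-round, active-prefix and reused-block contributions, and absorbing the boundary-convention corrections of size $O(\Delta_{\max}K)$, yields the first display.

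\textbf{Expectation bound.} For the second display, on $\mathcal G_{\det}$ there are no false alarms, so $K\le S+1$. We then take expectations and use $\E[N_{\rm cmp}]\le\alpha$ from Lemma~\ref{lem:compat-concentration}.

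\textbf{Main obstacle.} The hardest step is the bad-round accounting: showing that the rounds where the active prefix is not yet identified with the current class, or is mis-anchored relative to $\mu_t$, number only $O((S+K)\ell_{\id})$. This requires a careful case analysis of missed changes inside a block whose anchor was formed earlier. I would first try to charge each such round to the intersection of the current true segment with the current block. Within that intersection, a class differing from the anchor can persist for fewer than $\ell_{\id}$ decision observations, by Lemma~\ref{lem:anchor-formation}(iii). A same-class return to the anchor causes no bias from the prefix beyond $c_{\mathcal A_t}/W_t$.
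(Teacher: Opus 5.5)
Your proposal is correct and uses the same ingredients as the paper's proof. These are: the harmonic charge $2(1+\log T)c_b^{\Estr}$ for the active prefix; the exposure swap with $H_b(T+1)\le B_{\exp}+1$, combined with Lemma~\ref{lemma:global-contamination-control}, for correct-anchor and unanchored reused blocks; an $O((S+K)\ell_{\id})$ count of rounds at which the active prefix is not yet long and identified with $\mu_t$, which by the local rejection argument outside $\mathcal E_t^{\rm cmp}$ are the only rounds where wrong-anchor blocks can be reused; and $\Delta_{\max}N_{\rm cmp}$ for compatibility deviations. The differences are only bookkeeping: you classify rounds first and charge the whole error $\Delta_{\max}$ on unidentified rounds, whereas the paper splits $P_T\le P_T^{\rm cur}+P_T^{\rm reuse}$ and applies the short-or-uncentered prefix count only to wrong-anchor blocks (also, $N_{\mathcal A_t}^{\Estr}$ increases only every other round, which costs a harmless factor $2$ in your harmonic sum).
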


The proof is given using the steps below. 

At a high level, there are two sources of bias.  
First, the current block may contain samples
from regimes preceding the current one.  Second, a reused block may be
internally contaminated or may have an anchor different from the current
class.  The current-block contribution is controlled by a harmonic charging
argument. For reused blocks, internal contamination and
unanchored blocks are controlled by exposure, while a wrong-anchor block can
be reused only before the active prefix is
long and within $\Delta_{\min}/4$ of the current class,
or at a round when the compatibility statistic deviates from its population
counterpart.

Write the current-block and reused-block contributions as
\begin{align}
P_T^{\rm cur}
&:=
\sum_{t\in\mathcal T_+}\frac1{W_t}
\left|
\sum_{u\in\mathcal A_t\cap\cS^{\Estr}}(\mu_u-\mu_t)
\right|,
\label{eq:current-block-population-bias}\\
P_T^{\rm reuse}
&:=
\sum_{t\in\mathcal T_+}
\sum_{b\in\mathcal U_t}
\frac{N_b^{\Estr}}{W_t}
\left|\bar\mu_b^{\Estr}-\mu_t\right|.
\label{eq:reused-block-population-bias}
\end{align}
The triangle inequality in Eq.~(\ref{eq:population-reuse-estimator}) yields
\begin{equation}
\label{eq:population-bias-two-parts}
P_T\le P_T^{\rm cur}+P_T^{\rm reuse}.
\end{equation}

\textbf{Step 1 - Current-block bias}

We first control the contribution of the current detector block.  
Its sources of bias are detection-delay samples and short missed regimes.

\begin{lemma}[Current-block bias]
\label{lem:current-block-bias}
On $\mathcal G_{\det}$,
\begin{equation}
\label{eq:current-block-bias-bound}
P_T^{\rm cur}
\le
C_{\rm cur}\Delta_{\max}(S+K)\ell_{\id}
\bigl(1+\log(eT)\bigr).
\end{equation}
\end{lemma}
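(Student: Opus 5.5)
The plan is to bound $P_T^{\rm cur}$ block by block, using $W_t\ge N^{\Estr}_{\mathcal A_t}$ to remove the reused blocks from the denominator. This gives
\[
P_T^{\rm cur}\le\sum_{t\in\mathcal T_+}\frac{1}{N^{\Estr}_{\mathcal A_t}}\sum_{u\in\mathcal A_t\cap\cS^{\Estr}}|\mu_u-\mu_t|.
\]
Fix a detector block $\cB_b=[r_b,r_{b+1})$ and the rounds $t\in(r_b,r_{b+1}]$ during which it is the active prefix. The summand is nonzero only at samples $u<t$ with $\mu_u\ne\mu_t$. Each such term is at most $\Delta_{\max}$, so we only need to count these \emph{mismatched} samples and weight each by $1/N^{\Estr}_{\mathcal A_t}$.

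\textbf{Classifying rounds within a block.} Let $m$ be the class active at round $t$. There are two cases.
\begin{enumerate}[label=(\alph*), nosep]
\item The current true segment has contributed fewer than $\ell_{\id}$ decision-stream samples to $\mathcal A_t$. Such rounds are a detection-delay or short-segment prefix. By the boundary convention, there are at most $O(\ell_{\id})$ of them per intersection of a true segment with a block. By the counting argument in Lemma~\ref{lemma:global-contamination-control}, there are at most $O(S+K)$ such intersections. We bound the per-round error trivially by $\Delta_{\max}$, which yields $O(\Delta_{\max}(S+K)\ell_{\id})$.
\item The current segment is long inside $\mathcal A_t$. By Lemma~\ref{lem:anchor-formation}(iii), the block's anchor must then be $m$: otherwise an alarm would already have fired. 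Hence the mismatched samples $u\in\mathcal A_t$ are exactly the off-anchor (contaminating) samples of the block that precede $t$. By Lemma~\ref{lemma:global-contamination-control}, their total count over all blocks is $c_b^{\Estr}$, summing to $O((S+1)\ell_{\id})$.
\end{enumerate}

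\textbf{Harmonic charging for case (b).} Charge each contaminating sample $u$ in block $b$ a cost $\Delta_{\max}\sum_{t>u,\,t\in\cB_b}1/N^{\Estr}_{[r_b,t)}$. The denominator increases by one on every other round (the parity streams alternate), so this sum is at most $2\sum_{n=1}^{T}1/n\le2\log(eT)$. Summing over contaminating samples and using the bound from Lemma~\ref{lemma:global-contamination-control} gives $O(\Delta_{\max}(S+K)\ell_{\id}\log(eT))$. On $\mathcal G_{\det}$ we have $K\le S+1$, and adding case (a) gives Eq.~(\ref{eq:current-block-bias-bound}).

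\textbf{Main obstacle.} The delicate step is making case (b) rigorous when the block was closed late or contains several missed short segments. In that situation the true current segment may return to the anchor class after a short excursion. Samples from the excursion must then be counted as contaminating, while earlier anchor-class samples are not. This is consistent, because the mismatch is defined relative to $\mu_t=\theta^{(m)}$, and $m$ equals the anchor whenever the current run is long.

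The real risk lies in case (a) rounds where the current run is short but $\mu_t$ is a non-anchor class. There, all anchor samples are mismatched. I plan to handle these by the trivial $\Delta_{\max}$ per-round bound, so the only thing to verify is that such rounds number $O(\ell_{\id})$ per short intersection. That follows because a short run contributes fewer than $\ell_{\id}$ decision samples and so spans at most $2\ell_{\id}+1$ rounds.
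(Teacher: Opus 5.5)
Your argument is correct, and its core is the paper's. You drop the reused blocks via $W_t\ge N^{\Estr}_{\mathcal A_t}$. You charge each off-anchor estimation sample of a block harmonically: each positive prefix count occurs at most twice, which gives a factor $2(1+\log T)$. You then sum the contamination with Lemma~\ref{lemma:global-contamination-control}.

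The difference is in how you isolate the rounds that are not harmonically charged. The paper compares every round of an anchored block with that block's anchor, through the per-round bound $\Delta_{\max}c_{b,t}^{\Estr}/n_t+\Delta_{\max}\1\{\mu_t\ne\theta^{(\mathfrak a(b))}\}$. It counts the indicator term by the block's own off-anchor mass $c_b^{\Dstr}+c_b^{\Estr}$ plus one boundary round, and charges unanchored blocks $\Delta_{\max}(N_b^{\Dstr}+N_b^{\Estr})$.

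You instead split rounds by whether the current pure run inside $\mathcal A_t$ has reached $\ell_{\id}$ decision-stream samples. You count short-run rounds through the joint partition of true segments and detector cycles, exactly as in the proof of Eq.~(\ref{eq:long-centered-prefix-budget}). For long-run rounds, anchor uniqueness (Lemma~\ref{lem:anchor-formation}) gives $\mu_t=\theta^{(\mathfrak a(b))}$, so the mismatched samples are precisely the block's contamination.

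Your bookkeeping treats anchored and unanchored blocks uniformly, since unanchored blocks fall entirely into your case (a). It also makes explicit why the harmonic charge falls only on contamination. The paper's version is slightly more economical: comparing to the anchor through the triangle inequality makes the remaining rounds countable directly by contamination mass. The anchor structure then enters only through Lemma~\ref{lemma:global-contamination-control}, with no round-level appeal to the anchor lemma.
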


\begin{proof}

Consider first an anchored block $b$.  At a prediction round, let
\[
c_{b,t}^{\Estr}
:=
\sum_{u\in\mathcal A_t\cap\cS^{\Estr}}
\1\{\mu_u\ne\theta^{(\mathfrak a(b))}\},
\qquad
n_t:=N_{\mathcal A_t}^{\Estr}.
\]
When $n_t>0$, $W_t\ge n_t$ and the diameter bound give
\begin{align}
\frac1{W_t}
\left|\sum_{\substack{u\in\mathcal A_t\cap\cS^{\Estr}}}
(\mu_u-\mu_t)\right|
&\le
\Delta_{\max}\frac{c_{b,t}^{\Estr}}{n_t}
+\Delta_{\max}
\1\{\mu_t\ne\theta^{(\mathfrak a(b))}\}.
\label{eq:current-block-bias-blockwise}
\end{align}
If $n_t=0$, the left-hand side is zero.

For the first term in Eq.~(\ref{eq:current-block-bias-blockwise}), note that
$c_{b,t}^{\Estr}\le c_b^{\Estr}$ throughout the block.  Moreover, under the
alternating split, the prefix count $n_t$ takes each positive value at most
twice. Since $N_b^{\Estr} \le T$,
\begin{equation}
\label{eq:current-block-harmonic-charge}
\sum_{\substack{r_b<t\le r_{b+1}:\\ n_t>0}}
\frac{c_{b,t}^{\Estr}}{n_t}
\le
2c_b^{\Estr}\sum_{n=1}^{N_b^{\Estr}}\frac1n
\le2\bigl(1+\log T\bigr)c_b^{\Estr}.
\end{equation}
Thus the fixed contamination accumulated in the current block is diluted as
the prefix grows, producing the harmonic factor. 
Summing over the
anchored blocks,
\begin{equation}
\sum_{b\in\mathfrak B_{\anc}}
\sum_{r_b<t\le r_{b+1}}
\1\{\mu_t\ne\theta^{(\mathfrak a(b))}\}
\le
\sum_{b\in\mathfrak B_{\anc}}
\bigl(c_b^{\Dstr}+c_b^{\Estr}\bigr)+K.
\end{equation}

If $b$ is unanchored, its current-block contribution is at most
$\Delta_{\max}$ per
round, and the number of prediction rounds in its cycle is
$N_b^{\Dstr}+N_b^{\Estr}$.  Since the detector cycles partition the
prediction rounds and $\mathcal T_+\subseteq\{2,\ldots,T\}$, summing first
over the rounds in each cycle and then over blocks gives
\begin{align}
P_T^{\rm cur}
&\le
\Delta_{\max}\sum_{b\in\mathfrak B_{\anc}}
\left(
\sum_{\substack{r_b<t\le r_{b+1}:\\ n_t>0}}
\frac{c_{b,t}^{\Estr}}{n_t}
+\sum_{r_b<t\le r_{b+1}}
\1\{\mu_t\ne\theta^{(\mathfrak a(b))}\}
\right)
+\Delta_{\max}\sum_{b\in\mathfrak B_{\un}}
\bigl(N_b^{\Dstr}+N_b^{\Estr}\bigr)
\notag\\
&\le
\Delta_{\max}\left[
2(1+\log T)
\sum_{b\in\mathfrak B_{\anc}}c_b^{\Estr}
\right.
\left.
+\sum_{b\in\mathfrak B_{\anc}}
\bigl(c_b^{\Dstr}+c_b^{\Estr}\bigr)
{+K}
\right.\left.
+\sum_{b\in\mathfrak B_{\un}}
\bigl(N_b^{\Dstr}+N_b^{\Estr}\bigr)
\right]
\notag\\
&\le
C_{\rm cur}\Delta_{\max}(S+K)\ell_{\id}\bigl(1+\log(eT)\bigr).
\label{eq:current-block-bias-aggregation}
\end{align}
The last inequality follows from
Lemma~\ref{lemma:global-contamination-control}, applied to both streams,
and the boundary convention above.
\end{proof}

\textbf{Step 2 - Reused blocks' bias}

We next upper bound the bias due to the reused blocks. 

\begin{lemma}[Bias from reused blocks]
\label{lem:reused-block-bias}
On $\mathcal G_{\det}$,
\begin{equation}
\label{eq:reused-block-bias-bound}
P_T^{\rm reuse}
\le
C_{\rm reuse}\Delta_{\max}(S+K)\ell_{\id}
\bigl(B_{\exp}+1\bigr)
+\Delta_{\max}N_{\rm cmp}.
\end{equation}
where $C_{\rm reuse}>0$ is a universal constant.
\end{lemma}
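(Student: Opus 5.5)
The plan is to classify each pair $(t,b)$ with $t\in\mathcal T_+$ and $b\in\mathcal U_t$ by the type of $b$ and by the state of the active prefix, and to bound the term $\frac{N_b^{\Estr}}{W_t}|\bar\mu_b^{\Estr}-\mu_t|$ separately in each case. There are four cases: (a) $b$ anchored at the class of $\mu_t$; (b) $b$ unanchored; (c) $b$ anchored at a different class $m'\ne m$ while $\mathcal E_t^{\rm cmp}$ occurs; (d) $b$ anchored at $m'\ne m$ while $\mathcal E_t^{\rm cmp}$ does not occur. The tool shared by cases (a) and (b) is the exposure identity from Sec.~\ref{ssec:exposure-capped-reuse}:
\[
\sum_{t:\,b\in\mathcal U_t}\frac{N_b^{\Estr}}{W_t}=N_b^{\Estr}H_b(T+1)\le N_b^{\Estr}(B_{\exp}+1).
\]
This holds because $H_b$ is below $B_{\exp}$ before its last increment, and each increment satisfies $1/W_t\le 1$.

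\textbf{Cases (a) and (b).} In case (a) the summand is $\frac{N_b^{\Estr}}{W_t}\delta_b^{\Estr}$. Summing over $t$ gives at most $(B_{\exp}+1)N_b^{\Estr}\delta_b^{\Estr}$. In case (b) the summand is at most $\Delta_{\max}N_b^{\Estr}/W_t$, so the sum over $t$ is at most $(B_{\exp}+1)\Delta_{\max}N_b^{\Estr}$. Summing over blocks and applying Lemma~\ref{lemma:global-contamination-control} on the estimation stream bounds both cases together by $C'_{\rm seg}(B_{\exp}+1)\Delta_{\max}(S+1)\ell_{\id}$.

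\textbf{Case (c).} Since $N_b^{\Estr}\le W_t$ and the reused blocks are disjoint, $\sum_{b\in\mathcal U_t}N_b^{\Estr}/W_t\le1$. Each deviation round therefore contributes at most $\Delta_{\max}$, giving the term $\Delta_{\max}N_{\rm cmp}$.

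\textbf{Case (d), the main obstacle.} Here the rejection part of Proposition~\ref{prop:compatibility-summary} is the tool, but it only applies once the active prefix is long and centered at $m$. Two subcases remain.

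First, the prefix may be short, meaning the current segment has fewer than $\ell_{\id}$ decision samples in $\mathcal A_t$. I would charge these rounds to the true segment containing $t$: each of the at most $S+1$ segment–block intersections has fewer than order $\ell_{\id}$ such rounds, and each round contributes at most $\Delta_{\max}$. This gives $O(\Delta_{\max}(S+K)\ell_{\id})$.

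Second, the prefix may be long for some class but anchored at a class $m''$ different from the current class $m$. Then $t$ lies in a segment that entered after the anchor formed. By Lemma~\ref{lem:anchor-formation}(iii), fewer than $\ell_{\id}$ such rounds occur before an alarm, so these rounds are again charged at $O(\ell_{\id})$ per change.

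What makes case (d) delicate is the rounds where the prefix anchored at $m$ has not yet re-centered after the class returned. These must be covered by counting rounds where $\mu_t$ differs from the prefix anchor, exactly as in the count $\sum_b(c_b^{\Dstr}+c_b^{\Estr})+K$ of Lemma~\ref{lem:current-block-bias}. With that counting in place, case (d) is bounded by $C\Delta_{\max}(S+K)\ell_{\id}$. Adding the four cases yields Eq.~(\ref{eq:reused-block-bias-bound}).
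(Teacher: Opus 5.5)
Your decomposition matches the paper's. Correct-anchor and unanchored blocks are handled by the pathwise exposure bound $H_b(T+1)\le B_{\exp}+1$ together with Lemma~\ref{lemma:global-contamination-control} on the estimation stream. Wrong-anchor blocks are charged $\Delta_{\max}$ per round, using total historical weight at most one, on compatibility-deviation rounds and on rounds where rejection is not yet available. Cases (a)--(c) are fine as written.

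The problem is the third part of case (d), and it is a phantom. You describe rounds where the active prefix is anchored at the current class $m$ but ``has not yet re-centered.'' These rounds do not exist on $\mathcal G_{\det}$. By Lemma~\ref{lem:anchor-formation}(ii), once a no-alarm prefix contains a long class-$m$ interval, its decision-stream population mean stays within $\Delta_{\min}/4$ of $\theta^{(m)}$ at every later no-alarm round. Such a prefix is therefore already long and centered, and the rejection argument applies at once. Your proposed remedy would not have covered these rounds anyway: on them $\mu_t=\theta^{(m)}$ equals the prefix's anchor, so they are not counted by $\sum_b(c_b^{\Dstr}+c_b^{\Estr})+K$.

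Your first subcase alone closes case (d), and it is exactly the paper's argument for Eq.~(\ref{eq:long-centered-prefix-budget}):
\begin{itemize}
\item Partition rounds jointly by true segments and detector blocks; this gives at most $S+K+1$ intervals, not $S+1$.
\item In each interval, once the current pure suffix $[q,t)\subseteq\mathcal A_t$ has $\ell_{\id}$ decision-stream observations, Lemma~\ref{lem:anchor-formation}(i) makes $\mathcal A_t$ long with $|\bar\mu_{\mathcal A_t}^{\Dstr}-\mu_t|\le\Delta_{\min}/8$.
\item So only the first $2\ell_{\id}+1$ rounds of each interval need to be charged.
\end{itemize}
Your second subcase (prefix anchored at $m''\ne m$) is also redundant. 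By Lemma~\ref{lem:anchor-formation}(iii), the current pure suffix is necessarily short there, so those rounds are already in the first subcase. With this replacement your proof is complete and coincides with the paper's.
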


\begin{proof}

At each reuse round, we partition the historical blocks into correct-anchor,
unanchored, and wrong-anchor blocks.  Here an anchored block is
correct-anchor at round $t$ if
$\theta^{(\mathfrak a(b))}=\mu_t$, and
wrong-anchor otherwise. 

\medskip
\noindent\textbf{Correct-anchor blocks.}
We start with a pathwise consequence of the exposure rule.  Fix a
completed block $b$ with $N_b^{\Estr}>0$.  If it is never reused, then
$H_b(T+1)=0$.  Otherwise, let $u_\star$ be its final reuse round.
Immediately before that reuse, $H_b(u_\star)<B_{\exp}$, while
$W_{u_\star}\ge N_b^{\Estr}$.  Hence
\begin{equation}
\label{eq:H-budget}
H_b(T+1)
=H_b(u_\star)+\frac1{W_{u_\star}}
\le B_{\exp}+\frac1{N_b^{\Estr}}
\le B_{\exp}+1.
\end{equation}
Thus the cap controls cumulative per-sample influence regardless of how many
times the block is reused.

Next, every reused block has passed compatibility because
$\mathcal U_t\subseteq\cC_t$.  If an anchored block is correct-anchor
at round $t$, then $\mu_t=\theta^{(\mathfrak a(b))}$.  Its bias is therefore
entirely due to within-block contamination, and
\begin{equation}
\label{eq:correct-anchor-block-bias}
N_b^{\Estr}
|\bar\mu_b^{\Estr}-\theta^{(\mathfrak a(b))}|
\le \Delta_{\max}c_b^{\Estr}.
\end{equation}
Reordering the finite sums and using
$\mu_t=\theta^{(\mathfrak a(b))}$ on correct-anchor rounds gives
\begin{align}
&\sum_{t\in\mathcal T_+}
\sum_{b\in\mathcal U_t\cap\mathfrak B_{\anc}}
\frac{N_b^{\Estr}}{W_t}
\left|\bar\mu_b^{\Estr}-\mu_t\right|
\1\{\theta^{(\mathfrak a(b))}=\mu_t\}
=
\sum_{b\in\mathfrak B_{\anc}}
N_b^{\Estr}
|\bar\mu_b^{\Estr}-\theta^{(\mathfrak a(b))}|
\sum_{\substack{t\in\mathcal T_+:\\b\in\mathcal U_t}}
\frac{\1\{\theta^{(\mathfrak a(b))}=\mu_t\}}{W_t}
\label{eq:exposure-sum-swap}\\
&\qquad\overset{\mathrm{(a)}}{\le}
\sum_{b\in\mathfrak B_{\anc}}
N_b^{\Estr}
|\bar\mu_b^{\Estr}-\theta^{(\mathfrak a(b))}|
H_b(T+1)
\le
\Delta_{\max}
\sum_{b\in\mathfrak B_{\anc}}c_b^{\Estr}H_b(T+1)
\notag\\
&\qquad\le
\Delta_{\max}\bigl(B_{\exp}+1\bigr)
\sum_{b\in\mathfrak B_{\anc}}c_b^{\Estr}
\le
C_{\rm seg}\Delta_{\max}\bigl(B_{\exp}+1\bigr)
(S+K)\ell_{\id},
\label{eq:correct-anchor-cumulative-bias}
\end{align}
The equality exchanges the time and block sums.  In step \emph{(a)}, dropping
the correct-anchor indicator enlarges the nonnegative inner sum to all reuse
rounds, which is exactly $H_b(T+1)$ by Eq.~(\ref{eq:exposure-def}).  Thus the repeated bias of
block $b$ is reduced to its fixed contamination bias times its cumulative
per-sample exposure.  The remaining inequalities use
Eq.~(\ref{eq:correct-anchor-block-bias}), Eq.~(\ref{eq:H-budget}), and
Lemma~\ref{lemma:global-contamination-control}.

\medskip
\noindent\textbf{Unanchored blocks.}
For an unanchored block, the diameter assumption gives
$|\bar\mu_b^{\Estr}-\mu_t|\le\Delta_{\max}$.  Reversing the two sums and
using Eq.~(\ref{eq:H-budget}) therefore gives
\begin{align}
&\sum_{t\in\mathcal T_+}
\sum_{b\in\mathcal U_t\cap\mathfrak B_{\un}}
\frac{N_b^{\Estr}}{W_t}
\left|\bar\mu_b^{\Estr}-\mu_t\right|
\le
\Delta_{\max}\sum_{b\in\mathfrak B_{\un}}N_b^{\Estr}
\sum_{\substack{t:\,b\in\mathcal U_t}}\frac1{W_t}
\notag\\
&\qquad=
\Delta_{\max}\sum_{b\in\mathfrak B_{\un}}N_b^{\Estr}H_b(T+1)
\le
\Delta_{\max}\bigl(B_{\exp}+1\bigr)
\sum_{b\in\mathfrak B_{\un}}N_b^{\Estr}
\le
 C_{\rm seg}\Delta_{\max}\bigl(B_{\exp}+1\bigr)
(S+K)\ell_{\id}.
\label{eq:unanchored-cumulative-bias}
\end{align}
Again, the last inequality follows from
Lemma~\ref{lemma:global-contamination-control}.

\medskip
\noindent\textbf{Wrong-anchor bias.}
For a wrong-anchor block, compatibility rejection becomes
effective only after the active prefix is long and centered at the current
class.  We start proving the following bound that controls the rounds before this occurs.

On $\mathcal G_{\det}$,
\begin{equation}
\label{eq:long-centered-prefix-budget}
\begin{aligned}
\sum_{t=2}^T\1\!\left\{
N_{\mathcal A_t}^{\Dstr}<\ell_{\id}
\ \text{or}\ 
|\bar\mu_{\mathcal A_t}^{\Dstr}-\mu_t|>\tfrac{\Delta_{\min}}4
\right\}
&\le C_{\rm lc}(S+K)\ell_{\id},
\end{aligned}
\end{equation}
where $C_{\rm lc}$ is a sufficiently large numerical constant.

\Eqref{eq:long-centered-prefix-budget} holds due to the following.
Consider the partition induced jointly by the true segments and the detector
blocks $(r_b,r_{b+1}]$. It consists of at most $S+K+1$ nonempty intervals.
Each interval begins whenever either the true mean changes or the detector
restarts. Fix a round $t$ in such an interval, and let $q$ be its left
endpoint. The detector has not restarted on $(q,t]$ and the mean is constant
there, so $[q,t)\subseteq\mathcal A_t$ is a pure suffix with mean $\mu_t$.

Once this suffix contains $\ell_{\id}$ decision-stream observations,
$\mathcal A_t$ is long.  Moreover, since it is
a no-alarm prefix of the current detector cycle,
Lemma~\ref{lem:anchor-formation}(i) gives
$|\bar\mu_{\mathcal A_t}^{\Dstr}-\mu_t|\le\Delta_{\min}/8$.
Thus only the rounds before the pure suffix accumulates
$\ell_{\id}$ decision-stream samples can make the indicator in
Eq.~(\ref{eq:long-centered-prefix-budget}) nonzero.  The alternating split leaves at
most $2\ell_{\id}+1$ such rounds per interval.  Summing over at most $S+K+1$
intervals and using $\ell_{\id}\ge1$ proves the result after enlarging
$C_{\rm lc}$.

Next, we use Eq.~(\ref{eq:long-centered-prefix-budget}) to upper bound the wrong-anchor blocks' bias.
Suppose $b\in\mathfrak B_{\anc}$ has an anchor different from
the current mean $\mu_t$.  If
$N_{\mathcal A_t}^{\Dstr}\ge\ell_{\id}$,
$|\bar\mu_{\mathcal A_t}^{\Dstr}-\mu_t|\le\Delta_{\min}/4$, and
$\mathcal E_t^{\rm cmp}$ does not occur,
the rejection argument in the proof of
Proposition~\ref{prop:compatibility-summary} applies at this round and gives
$b\notin\cC_t$, and hence
$b\notin\mathcal U_t$.  Thus a wrong-anchor block can
be reused only when the active prefix is short, when its population mean is
farther than $\Delta_{\min}/4$ from the current class, or at a
compatibility-deviation round.  Since the total historical weight at any
round is at most one,
\begin{equation}
\label{eq:wrong-anchor-bias}
\begin{aligned}
&\sum_{t\in\mathcal T_+}
\sum_{\substack{b\in\mathcal U_t\cap\mathfrak B_{\anc}\\
\theta^{(\mathfrak a(b))}\ne\mu_t}}
\frac{N_b^{\Estr}}{W_t}
\left|\bar\mu_b^{\Estr}-\mu_t\right|
\le\Delta_{\max}\sum_{t=2}^T\1\!\left\{
N_{\mathcal A_t}^{\Dstr}<\ell_{\id}
\ \text{or}\ 
|\bar\mu_{\mathcal A_t}^{\Dstr}-\mu_t|>\tfrac{\Delta_{\min}}4
\right\}
{}+\Delta_{\max}N_{\rm cmp}
\\
&\le C\Delta_{\max}(S+K)\ell_{\id}
+\Delta_{\max}N_{\rm cmp},
\end{aligned}
\end{equation}
where the last inequality uses Eq.~(\ref{eq:long-centered-prefix-budget}).

Combining Eq.~(\ref{eq:correct-anchor-cumulative-bias}),
Eq.~(\ref{eq:unanchored-cumulative-bias}),
and Eq.~(\ref{eq:wrong-anchor-bias}) proves
Lemma~\ref{lem:reused-block-bias}.
\end{proof}

Combining Lemma~\ref{lem:current-block-bias} and Lemma~\ref{lem:reused-block-bias} proves Lemma~\ref{lem:population-bias}.
\subsection{Upper bound of the variance term}
\label{app:variance-control}
We next upper bound
$V_T=\sum_{t\in\mathcal T_+}W_t^{-1/2}$ on the detector good event
$\mathcal G_{\det}$.  This first isolates the main recurrent-class argument
from the rare detector deviations treated in
Appendix~\ref{app:detector-localization}.  We show that anchored long-safe
completed blocks preserve statistical progress across different occurrences
of the same class, yielding the leading scale $\sqrt{MT}$.

\begin{lemma}[Variance upper bound]\label{lem:variance-upper-bound}
  For a universal constant $C$,
\begin{equation}
\label{eq:good-event-effective-sample-size}
\begin{aligned}
&\E\!\left[V_T\1\{\mathcal G_{\det}\}\right]
\le C\sqrt{MT}
{}+C(1+\sqrt M)(S+1+\alpha)\sqrt{\ell_{\id}}
\\
&\quad{}+C(1+\sqrt M)\frac{\Delta_{\max}}{\sigma}
(S+1+\alpha)\ell_{\id}
{}+C(1+\sqrt M)\alpha.
\end{aligned}
\end{equation}
\end{lemma}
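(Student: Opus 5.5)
Our plan is to work pathwise on $\mathcal G_{\det}$, where $K\le S+1$ by Lemma~\ref{lem:false-alarms}, and split $\mathcal T_+$ into \emph{good} and \emph{bad} rounds. At a good round the active prefix is long and centered at the current class $m=\mu_t$, as in Eq.~(\ref{eq:long-centered-prefix-budget}), and $\mathcal E_t^{\rm cmp}$ does not occur. Every other round is bad. On bad rounds we use only $W_t\ge1$. Their number is at most $C(S+K)\ell_{\id}+N_{\rm cmp}$, but charging $1$ per round would give $S\ell_{\id}$ rather than $S\sqrt{\ell_{\id}}$. So within each of the at most $S+K+1$ pure intervals of the joint partition we use $W_t\ge n_t$, the prefix estimation count, which grows along the pure suffix. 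The sum $\sum_{n\le 2\ell_{\id}}n^{-1/2}\lesssim\sqrt{\ell_{\id}}$ then gives the $(S+1)\sqrt{\ell_{\id}}$ term. The compatibility deviations contribute $\E N_{\rm cmp}\le\alpha$ by Lemma~\ref{lem:compat-concentration}.

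On good rounds, Proposition~\ref{prop:compatibility-summary}(i) applies whenever the active prefix is also safe. We group the good rounds by class $m$ and, following the proof sketch, build a capacity counter
\[
x_m(t)=N_{\mathcal A_t}^{\Estr}\1\{\text{current class } m\}+\sum_{b \text{ safe, anchored at } m}N_b^{\Estr}\bigl(B_{\exp}-H_b(t)\bigr)_+/B_{\exp}.
\]
The key claim is $W_t\gtrsim x_m(t)$ at good safe rounds of class $m$. Blocks that are safe, anchored at $m$, and not exhausted are compatible and below the cap, so they are in $\mathcal U_t$. Exhausted blocks contribute zero to $x_m$.

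Next we track the dynamics of $x_m$. Each new class-$m$ observation raises $x_m$ by $1$, and is stored when its block closes; a completed safe block starts with full capacity $N_b$. Each reuse round decreases $x_m$ by at most $\sum_{b\in\mathcal U_t}N_b/(B_{\exp}W_t)\le 1/B_{\exp}$. With $B_{\exp}=4>2$, the net drift is therefore at least $\tfrac12$ per class-$m$ round, counting both streams. Hence at the $k$-th class-$m$ good round $x_m\gtrsim k$ up to losses. Summing $W_t^{-1/2}\lesssim k^{-1/2}$ over the $T_m$ rounds of class $m$ gives $\sqrt{T_m}$, and Cauchy--Schwarz gives $\sum_m\sqrt{T_m}\le\sqrt{MT}$.

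The losses are the \emph{leakage}. First, capacity is spent at bad rounds. Second, blocks anchored at $m$ may be unsafe, so their observations never enter $x_m$. Third, the active prefix may be unsafe. For unsafe blocks, safety fails only if $N_b\delta_b>\sqrt{\sigma^2N_b}$. The lost mass is then $N_b\le N_b^2\delta_b^2/\sigma^2\le N_b\delta_b\,\Delta_{\max}/\sigma\cdot(\ldots)$, and summing via Lemma~\ref{lemma:global-contamination-control} gives total lost mass $\lesssim(\Delta_{\max}/\sigma)(S+1)\ell_{\id}$. This is the origin of the third term. Converting a deficit $L_m$ in the counter into regret uses $\sum_k(k-L_m)_+^{-1/2}$-type bounds. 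Per class this costs $O(\sqrt{L_m}+\ldots)$, or $O(L_m)$ crudely. Summing over classes with Cauchy--Schwarz gives the $(1+\sqrt M)$ factors. The expectation over $K$ gives the $S+1+\alpha$ factor.

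The main obstacle is the counter bookkeeping. We must show that $x_m$ decrements caused by reuse at bad rounds or unsafe-prefix rounds are bounded by the corresponding bad-round counts, since each such round costs at most $1/B_{\exp}$. We must also make the square-root telescoping robust to an additive deficit. I would first try proving $x_m(t)\ge \tfrac12 k_m(t)-D_m(t)$, where $k_m$ counts class-$m$ rounds and $D_m$ counts leakage. I would then bound $\sum_t\min\{1,(k-D)_+^{-1/2}\}\le 2\sqrt{T_m}+D_m$ per class.
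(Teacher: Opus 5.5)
Your skeleton matches the paper's. It uses a counter combining active mass with residual exposure capacity, a drift of $1-2/B_{\exp}$ per cell, and $W_t\gtrsim x_m(t)$ at safe good rounds via acceptance. The gap is in the step you flag as the main obstacle. Your concrete plan, $x_m(t)\ge\frac12k_m(t)-D_m(t)$ followed by $\sum_t\min\{1,(k-D)_+^{-1/2}\}\le2\sqrt{T_m}+D_m$, charges leakage \emph{linearly}. Capacity drained at bad rounds is bounded only by their number, so $\sum_mD_m$ can be of order $(S+K)\ell_{\id}$. That yields an additive $(S+1)\ell_{\id}$, which the lemma's right-hand side does not absorb with a universal constant: take $M$ fixed, $\sigma/\Delta_{\max}$ large and $S\ell_{\id}\gg\sqrt{MT}$. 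This is the linear-versus-square-root loss you correctly avoided for the direct variance of bad rounds.

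Your alternative does not work either. The $\sum_k(k-L_m)_+^{-1/2}$ bound gives $O(L_m+\sqrt{T_m})$, not $O(\sqrt{L_m})$. Moreover, $E_m$ drain-and-regrow cycles cost of order $\sum_j\sqrt{d_{m,j}}$, which can exceed $\sqrt{\sum_jd_{m,j}}$ by a factor of order $\sqrt{E_m}$.

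The missing idea is to apply each leakage inside the square-root telescope, at the gap where it occurs. Let $x_{m,j}$ be the counter at the start of the $j$th class-$m$ good interval, $q_{m,j}$ its number of estimation observations, and $d_{m,j}$ the class-$m$ capacity spent before the next one. Then $x_{m,j+1}\ge(x_{m,j}+q_{m,j}-d_{m,j}-C)_+$, hence $\sqrt{x_{m,j}+q_{m,j}}\le\sqrt{x_{m,j+1}}+\sqrt{d_{m,j}}+C$. By subadditivity, each bad interval $J$ costs class $m$ only $\sqrt{d_{m,J}}$.

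Capacity spent per round, summed over classes, is at most $1/B_{\exp}$, because the repositories are disjoint and their reused mass lies in $W_t$. Cauchy--Schwarz over classes \emph{within each bad interval} then gives $\sum_m\sqrt{d_{m,J}}\lesssim\sqrt{M|J|}$. The total leakage cost is therefore $\sqrt M\sum_J\sqrt{|J|}$, the same square-root complexity that bounds the direct bad-round variance. This, not a Cauchy--Schwarz over per-class deficits, is the source of the $(1+\sqrt M)$ factors. You also need the number of good intervals to be at most $K$ plus the number of bad intervals, to pay an $O(1)$ endpoint cost per interval.

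The unsafe-block accounting has a related error. Unsafety gives $\sqrt{N_b^{\Dstr}}<\Delta_{\max}c_b^{\Dstr}/\sigma$, so only $\sqrt{N_b}$ is linear in the contamination. $N_b$ itself can be as large as $(\Delta_{\max}c_b^{\Dstr}/\sigma)^2$, so your total lost mass $\lesssim(\Delta_{\max}/\sigma)(S+1)\ell_{\id}$ does not follow.

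Such mass should never enter the counter. Credit the active prefix as $N_{\mathcal A_t}^{\Estr}$ times the (analysis-only, retrospective) indicator that the current block ends up safe and anchored at $m$, rather than indexing it by the current class. Block completion then preserves the counter exactly. With your indicator, the whole active mass is lost when an unsafe block closes, and no exposure spending accounts for that drop.

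Every round of an unsafe block should instead be placed in a bad interval. Its direct variance is charged through $W_t\ge N_{\mathcal A_t}^{\Estr}$ at cost $\lesssim\sqrt{N_b}\le\Delta_{\max}c_b^{\Dstr}/\sigma$, and its leakage costs $\lesssim\sqrt{MN_b}$. Long but unsafe prefixes inside a safe block are not leakage, since the counter keeps growing there. Their variance is bounded using $n<\Delta_{\max}^2(c_b^{\Dstr})^2/\sigma^2$. Summing $c_b^{\Dstr}$ via Lemma~\ref{lemma:global-contamination-control} gives the term $(1+\sqrt M)\frac{\Delta_{\max}}{\sigma}(S+1)\ell_{\id}$.
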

\begin{proof}
The proof proceeds in four steps.  We first work pathwise on
$\mathcal G_{\det}$ and take expectations at the end.

\textbf{Step 1 - Decomposition into three types of rounds}

For each class $m$, let
\[
\mathfrak B_m^{\rm LS}
:=
\{b\in\mathfrak B_{\anc}:
\mathfrak a(b)=m,\ 
|\bar\mu_b^{\Dstr}-\theta^{(m)}|
\le\sigma/\sqrt{N_b^{\Dstr}}\}.
\]
Here $\mathrm{LS}$ stands for \emph{long-safe}.
Thus $\mathfrak B_m^{\rm LS}$ is the collection of anchored safe class-$m$
blocks; these blocks are automatically long by the definition of an anchor.

The recurrent variance gain is first established when the current block $b_t$ is
anchored long-safe at the current class and the active prefix is also
long-safe.  Define
\begin{equation}
\label{eq:long-safe-round}
\begin{aligned}
&b_t\in\mathfrak B_m^{\rm LS},\qquad
\mu_t=\theta^{(m)},\qquad
N_{\mathcal A_t}^{\Dstr}\ge\ell_{\id},\qquad
|\bar\mu_{\mathcal A_t}^{\Dstr}-\theta^{(m)}|
\le\frac{\sigma}{\sqrt{N_{\mathcal A_t}^{\Dstr}}},\qquad
(\mathcal E_t^{\rm cmp})^c\text{ occurs}.
\end{aligned}
\end{equation}
Being in $\mathfrak B_m^{\rm LS}$ (for the current block $b_t$) is retrospective and is used only
in the analysis, not by the algorithm.
Let $\mathcal T_{{\rm LS},m}$ denote the rounds satisfying
Eq.~(\ref{eq:long-safe-round}).  The centering and length conditions imply
Eq.~(\ref{eq:active-class-centering}) when $C_{\id}$ is sufficiently large.

The second type has the same anchored long-safe current block, while the
active prefix is long-unsafe but remains within $\Delta_{\min}/4$ of the
current class.  Define
\begin{equation}
\label{eq:long-unsafe-round}
\begin{aligned}
&b_t\in\mathfrak B_m^{\rm LS},\qquad
\mu_t=\theta^{(m)},\qquad
N_{\mathcal A_t}^{\Dstr}\ge\ell_{\id},\qquad
|\bar\mu_{\mathcal A_t}^{\Dstr}-\theta^{(m)}|
>\sigma/\sqrt{N_{\mathcal A_t}^{\Dstr}},\\
&\bigl|\bar\mu_{\mathcal A_t}^{\Dstr}-\theta^{(m)}\bigr|
\le\Delta_{\min}/4,\qquad
(\mathcal E_t^{\rm cmp})^c\text{ occurs}.
\end{aligned}
\end{equation}
Let $\mathcal T_{{\rm LU},m}$ denote the rounds satisfying
Eq.~(\ref{eq:long-unsafe-round}); the suffix ${\rm LU}$ indicates the
long-unsafe active prefix.

All prediction rounds outside
$\bigcup_m(\mathcal T_{{\rm LS},m}\cup\mathcal T_{{\rm LU},m})$ are called
\emph{bad}.  Split them at detector-block boundaries into maximal consecutive
intervals and denote the resulting collection by $\mathcal J_{\bad}$.
The three types give the exact decomposition
\begin{equation}
\label{eq:variance-three-way-decomposition}
\begin{aligned}
V_T
={}&
\sum_{m=1}^M
\sum_{\substack{t\in\mathcal T_{{\rm LS},m}\cap\mathcal T_+}}
\frac1{\sqrt{W_t}}
+\sum_{m=1}^M
\sum_{\substack{t\in\mathcal T_{{\rm LU},m}\cap\mathcal T_+}}
\frac1{\sqrt{W_t}}
+\sum_{J\in\mathcal J_{\bad}}
\sum_{t\in J\cap\mathcal T_+}\frac1{\sqrt{W_t}}.
\end{aligned}
\end{equation}
We first bound the $\mathcal T_{{\rm LU},m}$ contribution, then control the
$\mathcal T_{{\rm LS},m}$ contribution through the recurrent-capacity
argument, and finally bound the bad-interval contribution.

\textbf{Step 2 - Long-unsafe, $\Delta_{\min}/4$-centered active prefixes}

Define
\begin{equation}
\label{eq:long-unsafe-variance-cost}
\Gamma_{\rm LU}
:=
\sum_{m=1}^M
\sum_{t\in\mathcal T_{{\rm LU},m}\cap\mathcal T_+}
\frac1{\sqrt{W_t}}.
\end{equation}

\begin{lemma}[Variance with a long-unsafe active prefix]
\label{lem:long-unsafe-variance-cost}
On $\mathcal G_{\det}$,
\begin{equation}
\label{eq:long-unsafe-variance-cost-bound}
\Gamma_{\rm LU}
\le
CK
+C\frac{\Delta_{\max}}{\sigma}(S+K)\ell_{\id}.
\end{equation}
\end{lemma}

\begin{proof}
Fix an anchored long-safe block $b\in\mathfrak B_m^{\rm LS}$, and write
$n=N_{\mathcal A_t}^{\Dstr}$.  At any round
$t\in\mathcal T_{{\rm LU},m}$ with $b_t=b$,
\[
\left|
\sum_{u\in\mathcal A_t\cap\cS^{\Dstr}}
(\mu_u-\theta^{(m)})
\right|
>\sigma\sqrt n.
\]
Because $\mathcal A_t\subseteq\cB_b$ and $\mu_t=\theta^{(m)}$, its off-anchor decision-stream
mass is at most $c_b^{\Dstr}$; hence the absolute value of the inner sum is at
most $\Delta_{\max}c_b^{\Dstr}$.  Every such round in block $b$ therefore
satisfies
\begin{equation}
\label{eq:long-unsafe-active-count}
n
<
\frac{\Delta_{\max}^2(c_b^{\Dstr})^2}{\sigma^2}.
\end{equation}
Moreover, $W_t\ge N_{\mathcal A_t}^{\Estr}$, the two active-stream counts differ by at
most one, and each value of $n$ occurs on at most two prediction rounds.
Consequently, 
\[
\sum_{\substack{t\in\mathcal T_{{\rm LU},m}\cap\mathcal T_+:\\b_t=b}}
\frac1{\sqrt{W_t}}
\le
C+C\sum_{n=1}^{\lceil L_b\rceil}\frac1{\sqrt n}
\le
C+C\frac{\Delta_{\max}}{\sigma}c_b^{\Dstr},
\]
where $L_b$ is the right-hand side of
Eq.~(\ref{eq:long-unsafe-active-count}).  Summing over blocks and applying
Lemma~\ref{lemma:global-contamination-control}
proves the claim.
\end{proof}

\textbf{Step 3 - Long-safe active prefixes and recurrent capacity}

We now control the first term in
Eq.~(\ref{eq:variance-three-way-decomposition}).  Throughout this step, we fix
a class $m$ and consider the rounds $t$ in $\mathcal T_{{\rm LS},m}$.
At such rounds, the pointwise acceptance argument in Proposition~\ref{prop:compatibility-summary} declares every same-class
long-safe completed block compatible.

\medskip
\noindent\textbf{Step 3.1: Guaranteed class-relevant mass.}
The denominator $W_t$ may contain blocks of several types.  We consider only
the completed blocks guaranteed to be reused at a round in
$\mathcal T_{{\rm LS},m}$, as these are the blocks that can provide the monotonicity
argument below.  Their total mass gives a lower bound on $W_t$ and hence a
valid upper bound on the variance term.
Define the "live" repository
\begin{equation}
\label{eq:causal-live-repository}
\mathcal S_m(t)
:=
\{b\in\mathfrak B_m^{\rm LS}:b<b_t,\ H_b(t)<B_{\exp}\}
\end{equation}
and the active class-$m$ estimation mass
\begin{equation}
\label{eq:persistent-active-credit}
A_m(t)
:=
N_{\mathcal A_t}^{\Estr}\1\{b_t\in\mathfrak B_m^{\rm LS}\}.
\end{equation}

On $\mathcal G_{\det}$, at every round in $\mathcal T_{{\rm LS},m}$,
\begin{equation}
\label{eq:guaranteed-class-mass}
\mathcal S_m(t)\subseteq\mathcal U_t,
\qquad
W_t
\ge
A_m(t)+\sum_{b\in\mathcal S_m(t)}N_b^{\Estr}.
\end{equation}

Indeed, at a round in $\mathcal T_{{\rm LS},m}$,
$A_m(t)=N_{\mathcal A_t}^{\Estr}$.  Moreover, the active prefix is safe for $m$, while
every block in $\mathcal S_m(t)$ is safe and anchored at $m$.
By Proposition~\ref{prop:compatibility-summary} each such block is
in $\cC_t$.  Since $H_b(t)<B_{\exp}$, it is also in
$\mathcal U_t$.  Keeping only these blocks in the definition of $W_t$ proves
the bound.

\medskip
\noindent\textbf{Step 3.2: Persistent capacity.}
The live sample mass in Eq.~(\ref{eq:guaranteed-class-mass}) may decrease when a
block $b \in \mathcal S_m(t)$ reaches the budget $B_{\exp}$.  We therefore track how much
reusable "capacity" remains.  Define
\begin{equation}
\label{eq:residual-capacity-potential}
\begin{aligned}
\Phi_m(t)
&:=
\sum_{\substack{b\in\mathfrak B_m^{\rm LS}:b<b_t}}
N_b^{\Estr}\bigl(B_{\exp}-H_b(t)\bigr)_+
=
\sum_{b\in\mathcal S_m(t)}
N_b^{\Estr}\bigl(B_{\exp}-H_b(t)\bigr)
\end{aligned}
\end{equation}
and
\begin{equation}
\label{eq:normalized-capacity-counter}
\begin{aligned}
Y_m(t)&:=B_{\exp}A_m(t)+\Phi_m(t),\qquad
x_m(t):=\frac{Y_m(t)}{B_{\exp}-2}.
\end{aligned}
\end{equation}
$\Phi_m(t)$ denotes the remaining reusable capacity. It measures how much future reuse budget remains available in the completed long-safe repository $\mathcal{S}_m(t)$.
$Y_m(t)$ denotes the total reusable capacity. It combines the capacity contributed by the active block with the remaining capacity of the stored repository.
$x_m(t)$ denotes the normalized capacity counter. 

At a round in $\mathcal T_{{\rm LS},m}$,
Eq.~(\ref{eq:guaranteed-class-mass}) and
$0<B_{\exp}-H_b(t)\le B_{\exp}$ give
\begin{equation}
\label{eq:capacity-denominator-lower-bound}
\begin{aligned}
W_t
&\ge
A_m(t)+\sum_{b\in\mathcal S_m(t)}N_b^{\Estr}
\ge
A_m(t)+\frac{\Phi_m(t)}{B_{\exp}} = \frac{Y_m(t)}{B_{\exp}}
=
c_Bx_m(t),
\end{aligned}
\end{equation}
where $c_B:=\frac{B_{\exp}-2}{B_{\exp}}$.
Since $c_Bx_m(t)$ is a
lower bound on $W_t$, growth of $x_m(t)$ yields an upper bound on the variance
term. Therefore, our main goal is to show the growth of $x_m(t)$. 

First, a class-$m$ ${\rm LS}/{\rm LU}$ interval is a maximal consecutive interval
contained in
$\mathcal T_{{\rm LS},m}\cup\mathcal T_{{\rm LU},m}$.  Between two
consecutive estimation-stream observations becoming available to the
predictor, the alternating split allows at most two predictions.  We
use $B_{\exp}>2$; factors depending only on this fixed numerical choice are
absorbed into universal constants.

When an alarm completes an anchored long-safe class-$m$
block $b$, the term $B_{\exp}N_b^{\Estr}$ disappears from
$B_{\exp}A_m$ and the same term enters $\Phi_m$, because the newly completed
block has exposure zero.
Thus block completion preserves $Y_m$ exactly\footnote{We use the observation
from the proof of Lemma~\ref{lem:regret-decomposition} that restarts occur
after decision-stream observations, so the active and stored estimation counts
agree at the transfer.}. This is summarized in the
following lemma.

For $W_t>0$, define the class-$m$ capacity usage
\begin{equation}
\label{eq:class-capacity-expenditure}
g_m(t)
:=
\sum_{b\in\mathcal S_m(t)\cap\mathcal U_t}
\frac{N_b^{\Estr}}{W_t}.
\end{equation}

\begin{lemma}[Capacity growth]
\label{lem:hard-capacity-dynamics}

Consider a complete pair of prediction rounds $\{2v+1,2v+2\}$ contained in
the same class-$m$ ${\rm LS}/{\rm LU}$ interval.  Let $\Delta$ denote the
change from immediately before $X_{2v}$ becomes available to immediately
after the exposure update at round $2v+2$, before $X_{2v+2}$ becomes
available, including any block-completion transfers.  Then
\begin{equation}
\label{eq:long-prefix-capacity-increments}
\Delta Y_m\ge B_{\exp}-2,
\qquad
\Delta x_m\ge1.
\end{equation}
\end{lemma}
\begin{proof}

When a block $b\in\mathcal S_m(t)\cap\mathcal U_t$ is reused, its residual
capacity decreases by at most
\[
\begin{aligned}
N_b^{\Estr}\bigl(B_{\exp}-H_b(t)\bigr)_+
&\mathrel{}-
N_b^{\Estr}
\left(B_{\exp}-H_b(t)-\frac1{W_t}\right)_+
\le
\frac{N_b^{\Estr}}{W_t}.
\end{aligned}
\]
This also covers the round on which the block crosses $B_{\exp}$.  Since the reused
long-safe repositories are disjoint across classes and their reused mass is
part of $W_t$,
\begin{equation}
\label{eq:total-capacity-expenditure}
\sum_{m=1}^M g_m(t)\le1.
\end{equation}
Observation $X_{2v}$ adds $B_{\exp}$ to $Y_m$.  Any block completion preserves
$Y_m$ by the exact transfer above, while
Eq.~(\ref{eq:total-capacity-expenditure}) bounds the total capacity consumed
by the two prediction updates by $2$.  Thus, after $X_{2v}$ becomes
available and through the final exposure update, $Y_m$ remains at least
$B_{\exp}-2$ above its initial value.  Dividing by $B_{\exp}-2$ proves the
stated growth of $x_m$, including at both prediction rounds.
\end{proof}

\medskip
\noindent\textbf{Step 3.3: The ideal recurrent telescope.}
We first derive the upper bound on the variance term in the ideal case with no "leakage", meaning that no capacity is spent on sample reuse during bad intervals.
Let $I_{m,1},\ldots,I_{m,E_m}$ be obtained from the maximal
consecutive intervals contained in
$\mathcal T_{{\rm LS},m}\cup\mathcal T_{{\rm LU},m}$ by omitting at most
four endpoint prediction rounds from each. The estimation observation
$X_{2v}$ becomes available for predictions at $2v+1$ and $2v+2$; trimming
the endpoints leaves only complete pairs of these prediction rounds.
Since $W_t\ge1$ on
$\mathcal T_+$, the omitted rounds contribute at most $4E_m$, which is restored
in Step~3.4. 

Let $q_{m,j}$ be the
number of estimation observations whose following prediction interval is
fully contained in $I_{m,j}$, and let $x_{m,j}$ be the counter immediately
before the first such observation becomes available.
If trimming leaves an empty interval, retain its index, set $q_{m,j}=0$,
and take both $x_{m,j}$ and its terminal counter to be the counter immediately
before the first prediction of the original interval.  When $E_m=0$, set
$x_{m,\mathrm{final}}=0$ and omit the class from the telescope.

First suppose that no class-$m$ capacity is spent between two consecutive
intervals $I_{m,j}$ or during the omitted endpoint predictions.  Iterating
Eq.~(\ref{eq:long-prefix-capacity-increments}) then gives the ideal
recurrence
\begin{equation}
\label{eq:ideal-episode-recurrence}
x_{m,j+1}\ge x_{m,j}+q_{m,j}.
\end{equation}
For the final interval, set
$x_{m,E_m+1}=x_{m,\mathrm{final}}$ equal to the counter at the end of its
trimmed interval, before the omitted endpoint predictions.
If the trimmed interval is empty, use the terminal counter specified above.
After the $i$th counted observation becomes available, the counter remains at
least $x_{m,j}+i$ until the next estimation observation.  Since at most two predictions occur between consecutive estimation-stream observations, Eq.~(\ref{eq:capacity-denominator-lower-bound}) gives
\begin{align}
\sum_{j=1}^{E_m}
\sum_{\substack{t\in I_{m,j}\cap\mathcal T_{{\rm LS},m}}}
\frac1{\sqrt{W_t}}
&\le
\frac{2}{\sqrt{c_B}}\sum_{j=1}^{E_m}\sum_{i=1}^{q_{m,j}}
\frac1{\sqrt{x_{m,j}+i}}
\label{eq:ideal-capacity-telescope}
\le
\frac{4}{\sqrt{c_B}}\sum_{j=1}^{E_m}
\left(\sqrt{x_{m,j}+q_{m,j}}-\sqrt{x_{m,j}}\right)
\\
&\overset{\text{Eq.~(\ref{eq:ideal-episode-recurrence})}}{\le}
\frac{4}{\sqrt{c_B}}\sum_{j=1}^{E_m}
\left(\sqrt{x_{m,j+1}}-\sqrt{x_{m,j}}\right)
=\frac{4}{\sqrt{c_B}}\left(\sqrt{x_{m,E_m+1}}-\sqrt{x_{m,1}}\right)
\le \frac{4}{\sqrt{c_B}}\sqrt{x_{m,\mathrm{final}}}.
\nonumber
\end{align}

The active and completed blocks represented in $Y_m(t)$ are disjoint, and
the retrospective set $\mathfrak B_m^{\rm LS}$ includes the eventual current
anchored long-safe block.  Since each such sample provides at most $B_{\exp}$
capacity,
\begin{equation}
\label{eq:class-capacity-terminal-bound}
x_m(t)
\le
\frac{B_{\exp}}{B_{\exp}-2}
\sum_{b\in\mathfrak B_m^{\rm LS}}N_b^{\Estr}
\qquad\text{for every }t.
\end{equation}
The long-safe repositories are disjoint across classes.  Consequently,
\[
\sum_{m=1}^M x_{m,\mathrm{final}}
\le
\frac{B_{\exp}}{B_{\exp}-2}
\sum_{m=1}^M\sum_{b\in\mathfrak B_m^{\rm LS}}N_b^{\Estr}
\le \frac{T}{c_B}.
\]
Thus, in the ideal comparison,
\begin{align}
\label{eq:ideal-recurrent-variance}
\sum_{m=1}^M\sum_{j=1}^{E_m}
\sum_{\substack{t\in I_{m,j}\cap\mathcal T_{{\rm LS},m}}}
\frac1{\sqrt{W_t}}
&\le
\frac{4}{\sqrt{c_B}}\sum_{m=1}^M\sqrt{x_{m,\mathrm{final}}}
\le \frac{4}{\sqrt{c_B}}
\sqrt{M\sum_{m=1}^Mx_{m,\mathrm{final}}}\le C\sqrt{MT},
\end{align}
where $C$ is universal.

Thus the dependence on the number of changes $S$ disappears: every occurrence of class
$m$ advances the same counter rather than restarting the denominator.
Multiplying Eq.~(\ref{eq:ideal-recurrent-variance}) by $\sigma$ gives the target
$O(\sigma\sqrt{MT})$ variance scale before the leakage correction.

\paragraph{Step 3.4: Adding leakage to the telescope.}
Capacity can leak across bad intervals, but not during an
${\rm LS}/{\rm LU}$ round of another class, due to the
rejection argument in the proof of
Proposition~\ref{prop:compatibility-summary}.  This changes the
recurrence between intervals; the growth within each $I_{m,j}$ remains as in
Step~3.3.

For a bad interval $J\in\mathcal J_{\bad}$, define its
normalized class-$m$ leakage by
\begin{equation}
\label{eq:normalized-bad-interval-leakage}
d_{m,J}
:=
\frac1{B_{\exp}-2}
\sum_{t\in J\cap\mathcal T_+}g_m(t).
\end{equation}

Let $J^-$ and $J^+$ denote the instants immediately before and
after, respectively, all updates associated with $J$.
On $\mathcal{G}_{\det}$, class-\(m\) capacity can decrease
across $J$ only through exposure spent on reused class-\(m\) blocks.  Hence
\begin{equation}
\label{eq:bad-interval-capacity-leakage}
x_m(J^+)\ge\bigl(x_m(J^-)-d_{m,J}\bigr)_+.
\end{equation}
Moreover, since \(\sum_m g_m(t)\le1\), summing over \(t\in J\) gives
\begin{equation}
\label{eq:bad-interval-capacity-leakage2}
\sum_m d_{m,J}\le |J|/(B_{\exp}-2).
\end{equation}

Return now to the intervals $I_{m,j}$ defined in the zero-leakage comparison.
For $j<E_m$, let $\mathcal J_{m,j}$ be the bad intervals between
$I_{m,j}$ and $I_{m,j+1}$, and set
\begin{equation}
\label{eq:inter-episode-leakage}
d_{m,j}:=\sum_{J\in\mathcal J_{m,j}}d_{m,J}.
\end{equation}
Combining the growth bound from Step~3.3 with the intervening leakage gives,
for $j<E_m$ and a universal constant $C$,
\begin{equation}
\label{eq:leaky-episode-recurrence}
x_{m,j+1}
\ge
\bigl(x_{m,j}+q_{m,j}-d_{m,j}-C\bigr)_+.
\end{equation}
Here $C$ accounts for capacity spent during the omitted endpoint predictions.

\begin{lemma}[Leaky square-root telescope]
\label{lem:classwise-leaky-telescope}
On $\mathcal G_{\det}$,
\begin{equation}
\label{eq:classwise-leaky-telescope}
\begin{aligned}
\sum_{t\in\mathcal T_{{\rm LS},m}\cap\mathcal T_+}
\frac1{\sqrt{W_t}}
&\le
C\sqrt{x_{m,\mathrm{final}}}
+C\sum_{J\in\mathcal J_{\bad}}\sqrt{d_{m,J}}
+CE_m.
\end{aligned}
\end{equation}
\end{lemma}

\begin{proof}
The first two inequalities in Eq.~(\ref{eq:ideal-capacity-telescope}) apply
unchanged within each trimmed interval.  Only the recurrence used to
telescope across intervals changes.  For $j<E_m$,
Eq.~(\ref{eq:leaky-episode-recurrence}) implies
$x_{m,j}+q_{m,j}\le x_{m,j+1}+d_{m,j}+C$.  Hence
\[
\sqrt{x_{m,j}+q_{m,j}}
\le
\sqrt{x_{m,j+1}}+\sqrt{d_{m,j}}+C.
\]
For the final interval,
$x_{m,E_m}+q_{m,E_m}\le x_{m,\mathrm{final}}$ as in Step~3.3.
The same telescope therefore gives the terminal-capacity term, together
with a correction of at most
$C\sum_{j<E_m}\sqrt{d_{m,j}}+CE_m$.
Each bad interval belongs to at most one gap between consecutive class-$m$
intervals, so subadditivity of the square root gives
\[
\sum_{j<E_m}\sqrt{d_{m,j}}
=
\sum_{j<E_m}
\sqrt{\sum_{J\in\mathcal J_{m,j}}d_{m,J}}
\le
\sum_{J\in\mathcal J_{\bad}}\sqrt{d_{m,J}}.
\]
Restoring the omitted endpoint predictions, whose contribution is at most
$CE_m$, proves Eq.~(\ref{eq:classwise-leaky-telescope}).
\end{proof}

It remains to aggregate the leakage across classes.  For every bad interval
$J$, Cauchy--Schwarz and
Eq.~(\ref{eq:bad-interval-capacity-leakage2}) yield
\[
\sum_{m=1}^M\sqrt{d_{m,J}}
\le
\sqrt{M\sum_{m=1}^Md_{m,J}}
\le
\sqrt{\frac{M|J|}{B_{\exp}-2}}.
\]
The terminal-capacity calculation in
Eq.~(\ref{eq:ideal-recurrent-variance}) gives
$\sum_m\sqrt{x_{m,\mathrm{final}}}\le C\sqrt{MT}$.
Summing Eq.~(\ref{eq:classwise-leaky-telescope}) over $m$ therefore gives
\begin{align}
\label{eq:long-safe-variance-with-leakage}
\sum_{m=1}^M
\sum_{t\in\mathcal T_{{\rm LS},m}\cap\mathcal T_+}
\frac1{\sqrt{W_t}}
&\le
C\sqrt{MT}
+C\sqrt{\frac{M}{B_{\exp}-2}}
\sum_{J\in\mathcal J_{\bad}}\sqrt{|J|}
+C\sum_{m=1}^ME_m.
\end{align}
The two remaining quantities depend only on the bad intervals and are
controlled in Step~4.

\textbf{Step 4 - Bad rounds}

We finally control the contribution of the bad intervals.  Their square-root
complexity governs both the third term in
Eq.~(\ref{eq:variance-three-way-decomposition}) and the leakage correction in
Eq.~(\ref{eq:long-safe-variance-with-leakage}); it will also control the number
of intervals $I_{m,j}$.  We first bound the direct variance on each bad
interval and then use the segmentation and contamination structure to bound
their total complexity.

A bad interval $J$ does not cross a detector-block boundary.
Hence $N_{\mathcal A_t}^{\Estr}$ increases once every two rounds, while each
positive count occurs on at most two prediction rounds.  Since
$W_t\ge N_{\mathcal A_t}^{\Estr}$, treating the initial zero-count
predictions separately and summing $n^{-1/2}$ gives
\begin{equation}
\label{eq:bad-round-variance}
\sum_{J\in\mathcal J_{\bad}}
\sum_{t\in J\cap\mathcal T_+}\frac1{\sqrt{W_t}}
\le
C\sum_{J\in\mathcal J_{\bad}}\sqrt{|J|} =:C \,\Xi_{\bad}.
\end{equation}

It remains to bound $\Xi_{\bad}$.  On $\mathcal G_{\det}$,
the bad rounds consist of rounds in unanchored or unsafe blocks, rounds in
anchored long-safe blocks outside the two displayed classes, and
compatibility-deviation rounds.

\begin{lemma}[Bad-interval upper bound]
\label{lem:bad-interval-complexity}
On $\mathcal G_{\det}$
\begin{equation}
\label{eq:bad-complexity-coarse}
\Xi_{\bad}
\le
C(S+K)\sqrt{\ell_{\id}}+CN_{\rm cmp}
+C\frac{\Delta_{\max}}{\sigma}
(S+K)\ell_{\id}.
\end{equation}
\end{lemma}

\begin{proof}
We bound each type of bad interval separately.
When intervals of different types overlap, assign each round to the first applicable type.  Splitting
an interval can only increase the resulting upper bound because
$\sqrt{x+y}\le\sqrt x+\sqrt y$.  We repeatedly use the elementary bound
\begin{equation}
\label{eq:bad-family-cauchy-schwarz}
\sum_{J\in\mathcal F}\sqrt{|J|}
\le
\sqrt{|\mathcal F|\sum_{J\in\mathcal F}|J|}
\end{equation}
for each type $\mathcal F$ of bad intervals.

\medskip
\noindent\textbf{Unanchored blocks.}
As in Appendix~\ref{app:population-bias}, the true segments and detector blocks produce $O(S+K)$ stationary
intervals.  Every interval in an unanchored block contains
$O(\ell_{\id})$ observations.  Hence these
intervals contribute $O((S+K)\sqrt{\ell_{\id}})$. 

\medskip
\noindent\textbf{Unsafe blocks.}
Let $\mathfrak B_{\unsafe}$ denote the anchored unsafe blocks.
For an anchored block,
\[
\left|\bar\mu_b^{\Dstr}
-\theta^{(\mathfrak a(b))}\right|
\le \Delta_{\max}\frac{c_b^{\Dstr}}{N_b^{\Dstr}}.
\]
If the block is unsafe, combining this inequality with
$|\bar\mu_b^{\Dstr}-\theta^{(\mathfrak a(b))}|
>\sigma/\sqrt{N_b^{\Dstr}}$ gives
\begin{equation}
\label{eq:unsafe-block-length}
N_b^{\Dstr}
<
\frac{\Delta_{\max}^2(c_b^{\Dstr})^2}{\sigma^2}.
\end{equation}
Using the alternating-stream part of the boundary convention,
taking square roots and summing shows that the contribution of unsafe blocks is at most
\begin{align}\label{eq:app:step4-1}
  C\frac{\Delta_{\max}}\sigma
\sum_{b\in\mathfrak B_{\unsafe}}
c_b^{\Dstr}.
\end{align}
The unsafe blocks are a subset of the anchored blocks.  Therefore
Lemma~\ref{lemma:global-contamination-control} together with Eq.~(\ref{eq:app:step4-1}) implies
the third term in Eq.~(\ref{eq:bad-complexity-coarse}).

\medskip
\noindent\textbf{Initial and mismatched intervals of anchored long-safe blocks.}
As in Appendix~\ref{app:population-bias}, the rounds before the current pure suffix accumulates
$\ell_{\id}$ decision-stream observations form an interval of length
$O(\ell_{\id})$; afterward, the no-alarm argument makes the active prefix
long and ensures
$|\bar\mu_{\mathcal A_t}^{\Dstr}-\mu_t|\le\Delta_{\min}/4$.
Within a same-anchor stationary interval in a fixed detector block, the
cumulative population deviation from the anchor stays fixed while the
prefix count grows.  The length and centering conditions therefore persist
once satisfied.  Apart from compatibility deviations, the bad rounds thus
form a single, possibly empty, initial interval.
A class-anchor mismatch
interval is an off-anchor stationary interval and also has length
$O(\ell_{\id})$ by anchor uniqueness (Proposition~\ref{prop:anchored-segmentation}).  There are $O(S+K)$ intervals in total, so
these contribute $O((S+K)\sqrt{\ell_{\id}})$.

\medskip
\noindent\textbf{Compatibility deviations.}
Finally, the total contribution of rounds in $\mathcal E_t^{\rm cmp}$ is at most $N_{\rm cmp}$.  This proves
Lemma~\ref{lem:bad-interval-complexity}.
\end{proof}

To complete the proof of Lemma~\ref{lem:variance-upper-bound},
order all intervals $I_{m,j}$ chronologically, over every
$m$ and $j$. Within a detector block its anchor is fixed, so consecutive
good intervals in that block are separated by a nonempty bad interval.
Allowing one initial good interval per block gives
\[
\sum_mE_m
\le
K+|\mathcal J_{\bad}|
\le
K+\Xi_{\bad}.
\]
Substituting this inequality and
$\sum_{J\in\mathcal J_{\bad}}\sqrt{|J|}=\Xi_{\bad}$ into
Eq.~(\ref{eq:long-safe-variance-with-leakage}) bounds the
$\mathcal T_{{\rm LS},m}$ contribution
by
\[
C\sqrt{MT}
+C\left(1+\sqrt{\frac{M}{B_{\exp}-2}}\right)\Xi_{\bad}
+CK.
\]
Adding the $\mathcal T_{{\rm LU},m}$ contribution $\Gamma_{\rm LU}$ and the
direct bad-round bound Eq.~(\ref{eq:bad-round-variance}) gives, on
$\mathcal G_{\det}$,
\[
V_T
\le
C\sqrt{MT}
+C\Gamma_{\rm LU}
+C(1+\sqrt M)\Xi_{\bad}+CK.
\]
Substitute Eq.~(\ref{eq:long-unsafe-variance-cost-bound}) and
Eq.~(\ref{eq:bad-complexity-coarse}). Since $\ell_{\id}\ge1$, the extra
$CK$ is absorbed by the $(S+K)\sqrt{\ell_{\id}}$ term. Multiplying by
$\1\{\mathcal G_{\det}\}$, taking expectations, and using
$K\le S+1$ on $\mathcal G_{\det}$,
$\E[N_{\rm cmp}]\le\alpha$, and $\ell_{\id}\ge1$ proves
Eq.~(\ref{eq:good-event-effective-sample-size}).
\end{proof}

\subsection{Localization of detector deviations}
\label{app:detector-localization}

The preceding bias and variance bounds use $\mathcal G_{\det}$ to identify
anchors, control contamination, and ensure that the active prefix is long and
within $\Delta_{\min}/4$ of the current mean outside a controlled number of
rounds.  We now allow detector deviations.  In each block, we charge the prefix
through the last deviation by its length, then recover the required properties
on the suffix that follows it.

The main proof idea is the following. 
If this suffix contains no sufficiently long stationary interval, we treat the
block as unanchored and charge the short stationary intervals in the suffix as
before.  Otherwise, a long class-$m$ interval either triggers an alarm or makes
the population mean of the full prefix close to $\theta^{(m)}$.  In the absence
of an alarm, this interval therefore establishes an anchor.  A later long
interval from another class would trigger an alarm, so the same suffix cannot
contain long intervals from two distinct classes.  A long interval that lies
mostly before the last deviation may leave too few observations in the suffix
to establish an anchor; these observations are covered by the same prefix and
short-interval charges.  We first formalize this structure and then return to
the bias and variance bounds.

\textbf{Step 1 - Post-deviation structure}

Let
\begin{equation}
\label{eq:detector-deviation-cost}
e_t^{\det}:=\1\{\mathcal D_{r(t),t}^{\det}\},
\qquad
\Xi_{\det}:=
\sum_{t=2}^T(t-r(t))e_t^{\det}.
\end{equation}
For a detector block $\cB_b=[r_b,q_b)$, where $q_b:=r_{b+1}$, let
$t_b^\star$ be the last round $t\in\{2,\ldots,T\}$ for which
$r(t)=r_b$ and $e_t^{\det}=1$, with $t_b^\star=r_b$ if no such round exists.
Truncate this time at the block's right boundary:
\begin{equation}
\label{eq:last-detector-deviation}
s_b:=\min\{q_b,t_b^\star\}.
\end{equation}
We call
\begin{equation}
\label{eq:deviation-free-suffix}
\mathcal R_b:=[s_b,q_b)
\end{equation}
the \emph{deviation-free suffix}.  We use this suffix only to identify an
anchor.  All means, counts, exposures, and capacity terms still refer to the
full stored block $\cB_b$; the affected prefix $[r_b,s_b)$ remains in the
estimator, and its cost is accounted for by $\Xi_{\det}$.

A block is \emph{locally anchored} if its deviation-free suffix contains a pure interval
with at least $\ell_{\id}$ decision-stream observations.  The corresponding
class, shown below to be unique, is denoted by
$\mathfrak a_{\rm loc}(b)$.  All other blocks are called locally unanchored;
write $\mathfrak B_{\anc}^{\rm loc}$ and
$\mathfrak B_{\un}^{\rm loc}$ for the two collections.  For a locally
anchored block and each stream $a\in\{\Dstr,\Estr\}$, define
\begin{equation}
\label{eq:localized-contamination}
c_{b,\rm loc}^a
:=
\sum_{u\in\cB_b\cap\cS^a}
\1\{\mu_u\ne\theta^{(\mathfrak a_{\rm loc}(b))}\}.
\end{equation}
Call a locally anchored block $b$ \emph{locally safe} if
\[
|\bar\mu_b^{\Dstr}-\theta^{(\mathfrak a_{\rm loc}(b))}|
\le\frac{\sigma}{\sqrt{N_b^{\Dstr}}},
\]
and locally unsafe otherwise.  Every locally anchored block is long, since
its suffix contains at least $\ell_{\id}$ decision-stream observations.
The next lemma extends the anchor and contamination guarantees of
Lemmas~\ref{lem:anchor-formation} and
\ref{lemma:global-contamination-control} to this local definition.

\begin{lemma}[Localized detector structure]
\label{lem:robust-detector-localization}
On every sample path,
\begin{equation}
\label{eq:localized-prefix-cost}
\sum_{b=0}^{K-1}(s_b-r_b)\le\Xi_{\det}.
\end{equation}
Each deviation-free suffix contains a long interval from at most one class.  If
$b$ is locally anchored at class $m$, its full stored block satisfies
\begin{equation}
\label{eq:localized-anchor-mean}
|\bar\mu_b^{\Dstr}-\theta^{(m)}|
\le\Delta_{\min}/4.
\end{equation}
Moreover, for $a\in\{\Dstr,\Estr\}$,
\begin{equation}
\label{eq:localized-structural-budget}
\sum_{b\in\mathfrak B_{\anc}^{\rm loc}}c_{b,\rm loc}^a
+\sum_{b\in\mathfrak B_{\un}^{\rm loc}}N_b^a
\le
C\bigl((S+K)\ell_{\id}+\Xi_{\det}\bigr).
\end{equation}
The number of rounds with a short active prefix or a population mean farther
than $\Delta_{\min}/4$ from $\mu_t$ satisfies
\begin{equation}
\label{eq:localized-long-centered-prefix-budget}
\begin{aligned}
&\sum_{t=2}^T
\1\!\left\{
N_{\mathcal A_t}^{\Dstr}<\ell_{\id}
\ \text{or}\ 
|\bar\mu_{\mathcal A_t}^{\Dstr}-\mu_t|>\Delta_{\min}/4
\right\}
\le C\bigl((S+K)\ell_{\id}+\Xi_{\det}\bigr).
\end{aligned}
\end{equation}
Finally, outside $\mathcal E_t^{\rm cmp}$ the local argument in the proof of
Proposition~\ref{prop:compatibility-summary} applies with local anchors: a locally
safe block is accepted when the active prefix is safe for the same class,
while no block with a different local anchor is accepted when the active
prefix is long and its population mean lies within $\Delta_{\min}/4$ of the
corresponding class mean.
\end{lemma}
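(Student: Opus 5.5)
The proof is pathwise, replaying the arguments of Lemmas~\ref{lem:anchor-formation}, \ref{lemma:global-contamination-control} and Eq.~(\ref{eq:long-centered-prefix-budget}), but started at $s_b$ rather than at $r_b$. The one new ingredient is this observation. By definition of $t_b^\star$, every round $t$ with $r(t)=r_b$ and $t>s_b$ has $e_t^{\det}=0$. So at such a round the no-alarm implication $D^{r_b}_{k,t}<2\gamma_t^{r_b}$ of Eq.~(\ref{eq:no-alarm-population-bound}) holds for every valid split $k$. Conversely, an alarm at such a round implies that some population CUSUM exceeds zero by a margin, as on $\mathcal G_{\det}$. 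All structural arguments used only these two local implications at the round in question, never the global event. They therefore transfer verbatim to rounds in $(s_b,q_b]$.

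The first step is Eq.~(\ref{eq:localized-prefix-cost}). If $t_b^\star>r_b$, then $s_b-r_b\le t_b^\star-r(t_b^\star)$, and this is exactly the summand of $\Xi_{\det}$ at $t=t_b^\star$. Distinct blocks give distinct rounds $t_b^\star$, so summing over $b$ yields the bound.

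The second step is anchor uniqueness and Eq.~(\ref{eq:localized-anchor-mean}). Let a pure class-$m$ interval $[u,v)\subseteq\mathcal R_b$ have $h\ge\ell_{\id}$ decision samples. Apply part (i) of Lemma~\ref{lem:anchor-formation} with the split $k=u$ at round $v$. The split is taken relative to the full prefix $[r_b,u)$, which may include the deviated part; the argument never needed that part to be clean, only the no-alarm inequality at round $v>s_b$. This gives $|\bar\mu^{\Dstr}_{[r_b,v)}-\theta^{(m)}|\le\Delta_{\min}/8$. Parts (ii) and (iii) then give preservation up to $q_b$ within $\Delta_{\min}/4$, and also show that no second long pure interval of another class fits in the suffix. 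Evaluating at the last no-alarm round $q_b$, as in the proof of Proposition~\ref{prop:anchored-segmentation}, yields Eq.~(\ref{eq:localized-anchor-mean}). The compatibility acceptance and rejection statements follow as in Proposition~\ref{prop:compatibility-summary}. That proof used only the centering, length and safety inequalities of the full block and prefix, together with the absence of $\mathcal E_t^{\rm cmp}$, and these are now available.

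The third step is the budgets Eq.~(\ref{eq:localized-structural-budget}) and Eq.~(\ref{eq:localized-long-centered-prefix-budget}). I would partition each block into the prefix $[r_b,s_b)$ and the suffix. The prefixes cost at most $\Xi_{\det}$ in total, counted in both streams up to constants. Within the suffixes, intersect with the true segments, which gives at most $S+K+1$ pieces. Each off-anchor piece, and each piece of a locally unanchored suffix, is short, i.e.\ has fewer than $\ell_{\id}$ decision samples, by the second step. One exception needs care: a pure piece that straddles $s_b$ may be long overall yet short inside the suffix. I would split it at $s_b$ and charge the left part to $\Xi_{\det}$ and the right part to the short-interval count. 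For the prefix-round budget, I would run the pure-suffix argument of Eq.~(\ref{eq:long-centered-prefix-budget}) on the joint partition refined by the points $s_b$. Rounds in $(r_b,s_b]$ are charged to $\Xi_{\det}$. Beyond $s_b$, each piece has at most $2\ell_{\id}+1$ bad rounds, since Lemma~\ref{lem:anchor-formation}(i) applies once the pure suffix reaches length $\ell_{\id}$. The main obstacle I expect is exactly this straddling bookkeeping. One must also check that $K$ here is the unconditional block count, so false alarms are included in $S+K$ rather than bounded using $\mathcal G_{\det}$.
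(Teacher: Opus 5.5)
Your proposal is correct and follows essentially the same route as the paper. It charges each affected prefix $[r_b,s_b)$ to $\Xi_{\det}$ through the distinct times $t_b^\star$, and reruns the anchor calculation at the split $u$ using only the deviation-free no-alarm inequality at rounds after $s_b$, with the possibly contaminated part $[r_b,u)$ absorbed by the convex-combination identity. It then counts short stationary pieces of the partition refined by true changes, detector boundaries and the points $s_b$.

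Two small imprecisions are harmless. First, when the closing evaluation deviates ($s_b=q_b<t_b^\star$), round $q_b+1$ is not deviation-free, but then the suffix is empty and no local anchor is claimed. Second, your ``converse'' (alarm $\Rightarrow$ positive population CUSUM) gives only $D>0$, not a margin, and it is not needed here: the contrapositive of the no-alarm bound already forces the alarms used in part (iii).
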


\begin{proof}
If $s_b>r_b$, Eq.~(\ref{eq:last-detector-deviation}) gives
\[
s_b-r_b\le(t_b^\star-r_b)e_{t_b^\star}^{\det}.
\]
The selected times are distinct across detector cycles.  Summing this
inequality proves Eq.~(\ref{eq:localized-prefix-cost}).

If $s_b<q_b$, every relevant detector evaluation after $s_b$ is
deviation-free.  Hence every subsequent no-alarm round satisfies
\begin{equation}
\label{eq:localized-no-alarm-bound}
D_{k,t}^{r_b}<2\gamma_t^{r_b}
\qquad\text{for every valid split }k,
\end{equation}
as in Eq.~(\ref{eq:no-alarm-population-bound}).  To see how a pure interval in
the suffix anchors the full prefix, let $[u,v)\subseteq\mathcal R_b$ be a
pure class-$m$ interval, and write
$p=N_{[r_b,u)}^{\Dstr}$ and $h=N_{[u,v)}^{\Dstr}$.  If $p=0$, the full prefix
is pure.  Otherwise, applying the preceding display at the split $u$ gives,
as in the proof of Lemma~\ref{lem:anchor-formation},
\begin{equation}
\label{eq:localized-split-bound}
\frac{ph}{p+h}
\frac{\left(\bar\mu_{[r_b,u)}^{\Dstr}
-\theta^{(m)}\right)^2}{\sigma^2}
<4(\gamma_v^{r_b})^2.
\end{equation}
The full-prefix mean is a weighted average of the preceding mean and
$\theta^{(m)}$, so
\begin{equation}
\label{eq:localized-anchor-calculation}
\begin{aligned}
\left|\bar\mu_{[r_b,v)}^{\Dstr}-\theta^{(m)}\right|
&=
\frac{p}{p+h}
\left|\bar\mu_{[r_b,u)}^{\Dstr}-\theta^{(m)}\right|
\le
\frac{2\sigma\gamma_v^{r_b}}{\sqrt h}
\sqrt{\frac{p}{p+h}}
\le
\frac{2\sigma\gamma_v^{r_b}}{\sqrt h}.
\end{aligned}
\end{equation}
For $h\ge\ell_{\id}$, the last term is at most $\Delta_{\min}/8$ by
Eq.~(\ref{eq:ell-id-main}).  Thus the full prefix, including observations before $s_b$, is anchored at this class. Lemma~\ref{lem:anchor-formation}, applied to later deviation-free
evaluations, keeps its mean within $\Delta_{\min}/4$ of this class mean and
rules out a second anchor. At $\mathcal A_{q_b}=\cB_b$, use the bound above
if the pure interval first reaches $\ell_{\id}$ decision-stream observations at $q_b$,
and the bound for subsequent prefixes otherwise. This proves
Eq.~(\ref{eq:localized-anchor-mean}). A detector deviation at the closing
evaluation gives $s_b=q_b$, so no local anchor is assigned.

It remains to account for the observations not addressed by the local anchor.
Charge each affected prefix $[r_b,s_b)$ by
Eq.~(\ref{eq:localized-prefix-cost}).  The true changes, detector boundaries, and
suffix boundaries determine $O(S+K)$ stationary intervals.  Every off-anchor
interval in a locally anchored suffix, and every interval in a locally unanchored suffix,
contains $O(\ell_{\id})$ decision-stream observations and hence
$O(\ell_{\id})$ observations; otherwise it would
form, respectively, a competing local anchor or a local anchor.  This proves
Eq.~(\ref{eq:localized-structural-budget}).

For Eq.~(\ref{eq:localized-long-centered-prefix-budget}), first charge all
prediction rounds through $s_b$ by Eq.~(\ref{eq:localized-prefix-cost}).  On the
remaining deviation-free suffix, refine the block at the true changes and
charge the first $O(\ell_{\id})$ rounds of each stationary interval.  Thereafter
the current pure suffix contains at least $\ell_{\id}$ decision-stream
observations.  If the evaluation
at $t>s_b$ does not raise an alarm, it is deviation-free and
Eq.~(\ref{eq:localized-anchor-calculation}) makes $\mathcal A_t$ long and gives
$|\bar\mu_{\mathcal A_t}^{\Dstr}-\mu_t|\le\Delta_{\min}/4$.  If it raises an alarm,
the post-update prediction belongs to the new detector cycle and is instead
charged among that cycle's initial $O(\ell_{\id})$ rounds.  This proves the
claimed bound.

Finally, compatibility depends only on the aggregate active-prefix and
full-block means.  Active-prefix safety and local block safety give the
same-anchor implication exactly as before.  For different anchors,
Eq.~(\ref{eq:localized-anchor-mean}) and the corresponding bound on the active
mean give the same $\Delta_{\min}/2$ separation used
in the rejection argument of
Proposition~\ref{prop:compatibility-summary}.  The same sample-size argument then
excludes every block with a different local anchor.
\end{proof}

\textbf{Step 2 - Returning to the bias and variance bounds}

Lemma~\ref{lem:robust-detector-localization} recovers the properties of
$\mathcal G_{\det}$ used in the preceding proofs, with an additional cost
controlled by $\Xi_{\det}$.  We can therefore apply the same exposure and
capacity arguments with this enlarged cost.

\begin{lemma}[Removing the detector conditioning]
\label{lem:remove-detector-conditioning}
For a universal constant $C$, on every sample path,
\begin{equation}
\label{eq:localized-population-bias}
\begin{aligned}
P_T
&\le
C\Delta_{\max}\bigl((S+K)\ell_{\id}+\Xi_{\det}\bigr)
\times\bigl(B_{\exp}+\log(eT)+1\bigr)
+\Delta_{\max}N_{\rm cmp}.
\end{aligned}
\end{equation}
Moreover,
\begin{equation}
\label{eq:localized-effective-sample-size}
\begin{aligned}
V_T\le{}&C\sqrt{MT}
+C(1+\sqrt M)(S+K)\sqrt{\ell_{\id}}
+C(1+\sqrt M)(\Xi_{\det}+N_{\rm cmp})\\
&+C(1+\sqrt M)\frac{\Delta_{\max}}{\sigma}(S+K)\ell_{\id}
+C(1+\sqrt M)\frac{\Delta_{\max}}{\sigma}\Xi_{\det}.
\end{aligned}
\end{equation}
Consequently,
\begin{equation}
\label{eq:unconditional-population-bias}
\begin{aligned}
\E[P_T]
&\le
C\Delta_{\max}\bigl(B_{\exp}+\log(eT)+1\bigr)
\times\bigl((S+1+\alpha)\ell_{\id}
+\alpha(1+\log T)\bigr)
+\Delta_{\max}\alpha,
\end{aligned}
\end{equation}
and
\begin{equation}
\label{eq:expected-effective-sample-size}
\begin{aligned}
\E[V_T]\le{}&C\sqrt{MT}
+C(1+\sqrt M)(S+1+\alpha)\sqrt{\ell_{\id}}
+C(1+\sqrt M)\alpha(2+\log T)\\
&+C(1+\sqrt M)\frac{\Delta_{\max}}{\sigma}(S+1+\alpha)\ell_{\id}
+C(1+\sqrt M)\frac{\Delta_{\max}}{\sigma}\alpha(1+\log T).
\end{aligned}
\end{equation}
\end{lemma}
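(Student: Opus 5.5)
The plan is to rerun, pathwise and without $\mathcal G_{\det}$, the bias and variance proofs of Lemmas~\ref{lem:population-bias} and \ref{lem:variance-upper-bound}. Wherever those proofs invoked $\mathcal G_{\det}$, we substitute the localized structure of Lemma~\ref{lem:robust-detector-localization}. Every structural input there was either the contamination budget of Lemma~\ref{lemma:global-contamination-control}, the long-centered budget Eq.~(\ref{eq:long-centered-prefix-budget}), or the acceptance/rejection implications of Proposition~\ref{prop:compatibility-summary}. Lemma~\ref{lem:robust-detector-localization} supplies each of these with local anchors, at an extra additive cost $\Xi_{\det}$: Eqs.~(\ref{eq:localized-structural-budget}), (\ref{eq:localized-long-centered-prefix-budget}), and its final compatibility statement. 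We also no longer use $K\le S+1$, keeping $K$ explicit in the pathwise bounds.

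For $P_T$, the argument follows the split Eq.~(\ref{eq:population-bias-two-parts}). The current-block harmonic charge in Lemma~\ref{lem:current-block-bias} goes through blockwise with $c_{b,\rm loc}^{\Estr}$ in place of $c_b^{\Estr}$. Locally unanchored blocks are charged $\Delta_{\max}$ per round, and so are the affected prefix rounds $t\le s_b$, which total at most $\Xi_{\det}$ by Eq.~(\ref{eq:localized-prefix-cost}). This yields $C\Delta_{\max}((S+K)\ell_{\id}+\Xi_{\det})(1+\log(eT))$.

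For reused blocks there are three cases:
\begin{itemize}
\item Correct-anchor and locally unanchored blocks: the exposure bound Eq.~(\ref{eq:H-budget}) is purely pathwise, so the sum swap gives $\Delta_{\max}(B_{\exp}+1)$ times the localized budget.
\item Wrong-anchor blocks: the localized rejection shows reuse is possible only on rounds counted in Eq.~(\ref{eq:localized-long-centered-prefix-budget}) or on compatibility-deviation rounds.
\end{itemize}
Summing the three cases gives Eq.~(\ref{eq:localized-population-bias}).

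For $V_T$, the plan is to redefine the LS and LU round types using local anchors and local safety, and to declare every round $t\le s_{b_t}$ bad. The LU cost argument of Lemma~\ref{lem:long-unsafe-variance-cost} carries over with $c_{b,\rm loc}^{\Dstr}$. The capacity counter, the growth Lemma~\ref{lem:hard-capacity-dynamics}, and the leaky telescope Lemma~\ref{lem:classwise-leaky-telescope} are purely combinatorial statements about exposures. They only need the acceptance of locally safe same-class blocks on LS rounds and rejection of other anchors on LS/LU rounds, both given by the localized compatibility statement. They therefore again yield $C\sqrt{MT}+C(1+\sqrt M)\Xi_{\bad}+CK$.

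The main obstacle is re-bounding $\Xi_{\bad}$ (Lemma~\ref{lem:bad-interval-complexity}) now that bad intervals include the deviation prefixes. I would assign these prefix rounds to their own family of at most $K$ intervals, one per block, with total length at most $\Xi_{\det}$. Rather than applying Cauchy--Schwarz to this family, I would use $\sqrt{|J|}\le|J|$ to get the crude bound $\Xi_{\det}$; this matches the target's linear $\Xi_{\det}$ term. The unsafe and unanchored families are then bounded via Eq.~(\ref{eq:localized-structural-budget}), which yields the $(\Delta_{\max}/\sigma)\Xi_{\det}$ term. I must also check that the count $\sum_mE_m\le K+|\mathcal J_{\bad}|$ still holds, which it does since deviation intervals are bad intervals. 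This gives Eq.~(\ref{eq:localized-effective-sample-size}).

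Finally, I would take expectations, using:
\begin{itemize}
\item $\E[K]\le S+1+\alpha$ from Lemma~\ref{lem:false-alarms};
\item $\E[\Xi_{\det}]\le\alpha(1+\log T)$ from Eq.~(\ref{eq:local-detection-prefix-length});
\item $\E[N_{\rm cmp}]\le\alpha$ from Lemma~\ref{lem:compat-concentration}.
\end{itemize}
All bounds are linear in these random quantities, so Eqs.~(\ref{eq:unconditional-population-bias}) and (\ref{eq:expected-effective-sample-size}) follow directly. In the latter, the $\alpha(2+\log T)$ term collects $\E[\Xi_{\det}+N_{\rm cmp}]$.
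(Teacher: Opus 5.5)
Your proposal is correct and follows essentially the same route as the paper. You rerun the bias and variance arguments pathwise with the local anchors of Lemma~\ref{lem:robust-detector-localization}, restrict the LS/LU round types to $t>s_{b_t}$, charge the deviation prefixes via $\sum_b\sqrt{s_b-r_b}\le\sum_b(s_b-r_b)\le\Xi_{\det}$, and take expectations using $\E[K]\le S+1+\alpha$, $\E[\Xi_{\det}]\le\alpha(1+\log T)$, and $\E[N_{\rm cmp}]\le\alpha$; the only difference is your separate $\Delta_{\max}$-per-round charge for prefix rounds in the current-block bias, which is redundant because the harmonic charge with $c_{b,\rm loc}^{\Estr}$ (whose budget already includes $\Xi_{\det}$) covers those rounds, but it is harmless.
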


\begin{proof}
For the bias, repeat Eq.~(\ref{eq:population-bias-two-parts}) with local anchors.
The current-block harmonic argument and the exchange of the time and block
sums in Eq.~(\ref{eq:exposure-sum-swap}) are unchanged.  Correct-anchor
contamination and locally unanchored blocks are controlled by
Eq.~(\ref{eq:localized-structural-budget}) and the pathwise exposure bound
Eq.~(\ref{eq:H-budget}).
The final compatibility statement of
Lemma~\ref{lem:robust-detector-localization} implies that a wrong-anchor
block can be reused only when the active prefix is short, its population mean is
farther than $\Delta_{\min}/4$ from the current class mean, or the compatibility
statistic deviates.  Using
Eq.~(\ref{eq:localized-long-centered-prefix-budget}) gives
Eq.~(\ref{eq:localized-population-bias}) by the arguments of
Lemmas~\ref{lem:current-block-bias} and \ref{lem:reused-block-bias}.

For the variance, repeat the three-way decomposition of
Appendix~\ref{app:variance-control} with locally anchored long-safe blocks.
The first type has a long-safe active prefix; the second has a
long-unsafe active prefix whose population mean is within $\Delta_{\min}/4$
of the current mean.  Both types require $t>s_{b_t}$.  In the first type,
every live locally anchored long-safe class-$m$ block is compatible; since
$H_b(t)<B_{\exp}$, the reuse rule places it in $\mathcal U_t$.  In both types, a
locally anchored long-safe block with another anchor is long and is
therefore excluded by the same localized compatibility conclusion.  These
are the only detection-dependent facts used in the capacity proof.
Apply the same counter construction to the locally anchored long-safe
repositories; every block entering one of them retains its full stored count and
exposure.  The block-completion transfer, unit capacity growth, and leaky square-root telescope therefore remain unchanged.

If an earlier long interval lies before $s_b$, the block may lose its local
anchor.  It is then omitted from the guaranteed historical repository, while
all prediction rounds in that locally unanchored detector block are assigned
to the local bad set.  This cannot invalidate the lower bound on $W_t$; any
actual reuse only increases $W_t$, while capacity spent from other
repositories is covered by bad-interval leakage.  The only new pieces in the
bad-interval derivations are the affected prefixes, whose complexity
satisfies
\[
\sum_b\sqrt{s_b-r_b}
\le
\sum_b(s_b-r_b)
\le\Xi_{\det}.
\]
Every stationary interval in the remaining suffix of a locally unanchored
block has length $O(\ell_{\id})$.  Thus the long-unsafe calculation and the
capacity, leakage, and bad-interval arguments of
Appendix~\ref{app:variance-control} apply with
Eq.~(\ref{eq:localized-structural-budget}), yielding
Eq.~(\ref{eq:localized-effective-sample-size}).

Finally, take expectations and use Eq.~(\ref{eq:K-bound}),
Eq.~(\ref{eq:local-detection-prefix-length}), and
Eq.~(\ref{eq:compatibility-deviation-count}).  This proves
Eq.~(\ref{eq:unconditional-population-bias}) and
Eq.~(\ref{eq:expected-effective-sample-size}).
\end{proof}

\subsection{Combining the bounds}
\label{app:final-assembly}

We now substitute the population-bias and effective-sample-size bounds into
the regret decomposition. Recall $L_T$ from Section~\ref{ssec:upper-bound}
and $\ell_{\id}$ from Eq.~(\ref{eq:ell-id-main}). Since $B_{\exp}=4$,
{
\begin{equation}
\label{eq:assembly-scale-comparisons}
B_{\exp}+\log(eT)+1\le CL_T,
\end{equation}
where $C$ is a universal constant.
}

\begin{proof}[Proof of Theorem~\ref{thm:ecr-upper-bound}]
Fix $\nu\in\mathcal E$.  Multiplying
Eq.~(\ref{eq:expected-effective-sample-size}) by $\sigma$ and using
Eq.~(\ref{eq:assembly-scale-comparisons}) gives
\begin{align}
\sigma\E_\nu[V_T]
\le{}&C\sigma\sqrt{MT}
\label{eq:assembled-variance}
+C\sigma(1+\sqrt M)
\left((S+1)\sqrt{\ell_{\id}}+\alpha L_T\right)
+C\Delta_{\max}(1+\sqrt M)
\left((S+1)\ell_{\id}+\alpha L_T\right).
\end{align}
Similarly, Eq.~(\ref{eq:unconditional-population-bias}) yields
\begin{equation}
\label{eq:assembled-bias}
\E_\nu[P_T]
\le
C\Delta_{\max}L_T
\left((S+1)\ell_{\id}+\alpha L_T\right).
\end{equation}
Finally, Eq.~(\ref{eq:K-bound}) implies
\[
(\sigma+\Delta_{\max})\E_\nu[K]
\le(\sigma+\Delta_{\max})(S+1+\alpha).
\]
Because $L_T,\ell_{\id}\ge1$, this initialization term is absorbed by the
second and third terms of
Eq.~(\ref{eq:assembled-variance}).  Substituting these
bounds into Lemma~\ref{lem:regret-decomposition} proves
Eq.~(\ref{eq:ecr-upper-bound-main}). Since $\nu$ was arbitrary and the constants
are uniform, this proves Theorem~\ref{thm:ecr-upper-bound}.
\end{proof}

For fixed $\sigma,\Delta_{\min},\Delta_{\max}$, and $\alpha$,
Eq.~(\ref{eq:ecr-upper-bound-main}) implies, uniformly over $\nu\in\mathcal E$,
\[
R_T(\mathrm{ECR};\nu)
=
\widetilde O\!\left(
\sigma\sqrt{MT}+(S+1)(1+\sqrt M)
\right).
\]
Under Eq.~(\ref{eq:sparse-change-regime}), the second term is
$\widetilde O(\sqrt{MT})$, which proves
Corollary~\ref{cor:ecr-recurrent-rate}.
\section{Proof of the lower bound}
\label{app:lower-bound}

We isolate two unavoidable sources of regret through two oracle-aided
problems.  First, even when the recurrent means and class labels are known,
the learner must locate the changes.  Second, even when the change points and
class labels are known, the recurrent means must be estimated. 
Since each oracle only helps the learner, either oracle lower bound also
lower-bounds $R_T^\star(\mathcal E)$; we then take their maximum.  Throughout,
$\Delta_{\min}$ and $\Delta_{\max}$ are fixed parameters of the environment
class, with $\Delta_{\max}>(M-1)\Delta_{\min}$.

\subsection{Cost of unknown change points}
\label{app:change-time-lower-bound}

We reveal the class means, the ordered class labels, and a macroblock containing
each change.  The only remaining uncertainty is the change point location
within each macroblock.  

\begin{proposition}[Lower bound due to unknown change times]
\label{prop:unknown-change-time-lower-bound}
There exist universal constants $c,c_0,C_0>0$ such that, if
\begin{equation}
\label{eq:change-time-global-resolution}
\frac{T}{S}\ge C_0,
\qquad
\frac{\sigma^2}{\Delta_{\min}^2}
\log\left(\frac{T}{S}\right)
\le
c_0\sqrt{\frac{T}{S}},
\end{equation}
then
\begin{equation}
\label{eq:change-time-lower-bound}
R_T^\star(\mathcal E)
\ge
c\frac{\sigma^2}{\Delta_{\min}}\,
S\log\!\left(\frac{T}{S}\right).
\end{equation}
\end{proposition}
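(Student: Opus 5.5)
We will prove Proposition~\ref{prop:unknown-change-time-lower-bound} by an oracle reduction to $S$ independent single-change problems. We restrict to Gaussian noise $\mathcal N(0,\sigma^2)$ and pick two classes $\theta^{(1)}=0$ and $\theta^{(2)}=\Delta_{\min}$; the remaining $M-2$ values are placed so that separation and diameter hold but are never visited, and this is admissible because $\Delta_{\max}>(M-1)\Delta_{\min}$. We partition $[T]$ into $S$ disjoint macroblocks of length $L\asymp T/S$. The mean alternates between the two classes, with exactly one change inside each macroblock, at an unknown location $\kappa_i$ in the central portion of macroblock $i$. Revealing the means, the labels, and the macroblock boundaries to the learner can only help. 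Since the observations of different macroblocks are independent given the $\kappa_i$'s, and the loss on macroblock $i$ depends only on $\kappa_i$, a Bayes argument with a product prior over the $\kappa_i$ makes the minimax regret at least $S$ times the Bayes risk of a single macroblock problem. It therefore remains to show that each macroblock contributes $\Omega((\sigma^2/\Delta_{\min})\log L)$.

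For a single macroblock, the mean is $0$ before $\kappa$ and $\Delta:=\Delta_{\min}$ afterwards. The pointwise loss is at least $\tfrac{\Delta}{2}$ on every round where the prediction lies on the wrong side of $\Delta/2$, so we reduce to a sequential classification problem. Let $\delta:=\lceil c_1(\sigma^2/\Delta^2)\log L\rceil$ with a small constant $c_1$. The second condition in Eq.~(\ref{eq:change-time-global-resolution}) gives $\delta\le \sqrt{L}\ll L$, so many disjoint windows of length $\delta$ fit inside the macroblock. We place candidate change times $\kappa_0<\kappa_1<\dots<\kappa_J$ spaced $2\delta$ apart, with $J\asymp L/\delta$, and use a uniform prior over them. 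We compare the environment $P_j$ (change at $\kappa_j$) with the reference $P_\infty$ (no change in the window). On the window $[\kappa_j,\kappa_j+\delta)$, the divergence $\KL(P_\infty\|P_j)$ restricted to data before $\kappa_j+\delta$ is $\delta\Delta^2/(2\sigma^2)=\tfrac{c_1}{2}\log L$. A change-of-measure (Bretagnolle--Huber, or the Lai-style argument) applied to the event "the learner predicts $\ge\Delta/2$ at most $\delta/2$ times in $[\kappa_j,\kappa_j+\delta)$" then gives the following dichotomy. Either this event has $P_\infty$-probability at most $1/2$ for many $j$, in which case under $P_\infty$ the learner makes $\Omega(\delta)$ false-alarm errors in each such window; or it has $P_\infty$-probability at least $1/2$, in which case $P_j$ assigns it probability at least $\tfrac14 e^{-\KL}\ge L^{-c_1}/4$ — too weak on its own. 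We strengthen this by working with the sum over $j$.

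That aggregation is the main obstacle: we need a $\log L$ gain rather than a constant. The plan is the classical delay–false-alarm tradeoff. Under $P_\infty$, which is consistent with every $P_j$ up to time $\kappa_j$ (predictions at $\kappa_j$ are $\mathcal F_{\kappa_j-1}$-measurable), a total false-alarm cost below $c\,\delta\cdot J/2$ forces the learner to predict "no change" on at least half of the disjoint windows, and hence to spend about $\delta$ rounds in "no change" mode on each of them. On those windows, $\KL\le \tfrac{c_1}{2}\log L$ together with the disjointness across $j$ lets us average the divergence bounds. Choosing $c_1<1$ and using Pinsker/Bretagnolle–Huber on the averaged quantities, as in the lower bound of \citet{lai1998information}, shows that under a uniformly chosen $P_j$ the expected number of wrong-side predictions after $\kappa_j$ is $\Omega(\delta)$. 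We will handle this carefully via the data-processing inequality applied to the stopping-like statistic that counts the number of "change" predictions in the window. In both branches the single-block Bayes risk is at least $c\,\Delta\,\delta\asymp(\sigma^2/\Delta_{\min})\log(T/S)$. Here $T/S\ge C_0$ guarantees $\log L\ge1$ and $J\ge2$. Multiplying by $S$ gives Eq.~(\ref{eq:change-time-lower-bound}).
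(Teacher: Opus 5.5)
Your overall architecture matches the paper: oracle-revealed macroblocks of length $L\asymp T/S$, one unknown change per macroblock, a false-alarm/delay dichotomy over disjoint windows of length $\delta\asymp(\sigma^2/\Delta_{\min}^2)\log L$, and summation over the $S$ macroblocks. Using a product prior instead of the paper's successive adversarial choice is fine. The gap is in the single-macroblock step, which is the only place the $\log(T/S)$ factor can come from.

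Your small-false-alarm branch is triggered by a total false-alarm cost below $c\,\delta J/2$. Let $A_j$ be the event of more than $\delta/2$ change-side predictions in window $j$, and $q_j:=P_\infty(A_j)$. Averaging over windows then gives only $q_j\le 2c$, a constant. That is exactly the case you already flagged as too weak. Data processing gives $\mathrm{kl}(P_j(A_j),q_j)\le\frac{c_1}{2}\log L$. But when $q_j$ is bounded only by a constant, $\mathrm{kl}(1/2,q_j)=O(1)$, so nothing rules out $P_j(A_j)\ge 1/2$, i.e.\ fast detection. Averaging the divergence bounds cannot repair this, for three reasons. Every window carries the same KL, $\frac{c_1}{2}\log L$. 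Pinsker is vacuous once the KL exceeds $2$. Bretagnolle--Huber returns only polynomially small bounds in $L$. Appealing to Lai does not close the gap either, because that argument needs a small per-window false-alarm probability as input, and you have not produced one.

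The missing idea is to place the false-alarm threshold at constant total \emph{probability}, i.e.\ a total expected false-alarm count of order $\delta$, not $\delta J$. Two cases:
\begin{enumerate}[label=(\roman*), nosep]
\item If $\sum_{i\le J/2}q_i\ge 1/4$, charge the false alarms of window $i$ to every $P_j$ with $j>i$; their law agrees with $P_\infty$ before $\kappa_j$. The Bayes false-alarm loss is then already $\gtrsim\delta\Delta_{\min}$, which is the target.
\item Otherwise, at least about $J/4$ windows $j\le J/2$ have $q_j\le 1/J$, and for these
\[
\mathrm{kl}(1/2,q_j)\ \ge\ \tfrac12\log\tfrac{1}{4q_j}\ \ge\ \tfrac12\log\tfrac{J}{4}\ \ge\ \tfrac14\log L-O(1),
\]
using $J\asymp L/\delta$ and $\delta\lesssim\sqrt L$. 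For $c_1<1/2$ and $L$ large this exceeds the KL budget $\frac{c_1}{2}\log L$. Hence $P_j(A_j)<1/2$, and the delay loss under $P_j$ is at least $\delta\Delta_{\min}/8$.
\end{enumerate}
This is exactly the paper's Step~1.1. There the split is $q_i>1/(4m)$ for every $i$ (false-alarm loss $\ge\ell\Delta/16$) versus some $q_j\le 1/(4m)$, with $m\ge\sqrt H$ windows, so that $\mathrm{kl}(1/2,q_j)\ge\frac12\log m$ beats $(\ell-1)\Delta^2/(2\sigma^2)\le\frac1{16}\log H$.

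A minor point: visiting only two classes may fall outside $\mathcal E$ if every environment must use all $M$ classes. The paper's reflected label path, with means $(m-1)\Delta_{\min}$, uses all classes (given $M\le S+1$) while keeping every jump equal to $\Delta_{\min}$.
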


\begin{proof}
The argument follows the same broad steps as the change-point lower bound of
\cite{gafni2026the}. For completeness, and because the present setting
requires several adaptations, notably, $\Delta_{\min}$ is fixed while the
constructed class-mean diameter $(M-1)\Delta_{\min}$ depends on $M$, we give
a self-contained proof. We first treat one macroblock and then embed
this local problem into a recurrent environment with $S$ changes.

\noindent\textbf{Step 1 - Cost of a single macroblock.}

We use local time within the macroblock, so local round $1$ denotes its first
prediction round.  For the lower bound, we restrict attention to independent
Gaussian observations with variance $\sigma^2$.
Let $I=\{1,\ldots,H\}$ be a macroblock of length $H$, and let
$\theta^{(0)}\neq\theta^{(1)}$ be two known means with
$\Delta:=|\theta^{(0)}-\theta^{(1)}|$. The learner knows that $I$ contains a single
change from $\theta^{(0)}$ to $\theta^{(1)}$, but not its location, denoted by $\tau$.  For
$\tau\in I$, define
\[
\mu_t^{(\tau)}
=
\begin{cases}
\theta^{(0)}, & t<\tau,\\
\theta^{(1)}, & t\ge\tau.
\end{cases}
\]
For a fixed policy, let $\mathbb P_\tau$ and $\E_\tau$ denote the corresponding
probability law and expectation, including the policy's internal randomness,
when the change occurs at time $\tau$.  The same argument
applies after any prior history whose law is common to all candidate change
locations, since such history contributes no relative entropy \citep{gafni2026the}.

\begin{lemma}[One-macroblock change cost]
\label{lem:one-macroblock-timing-cost}
Suppose
\begin{equation}
\label{eq:change-time-resolution-condition}
H\ge64,
\qquad
\frac{\sigma^2}{\Delta^2}\log H\le\sqrt H.
\end{equation}
Then every possibly randomized online policy $\pi$ satisfies
\begin{equation}
\label{eq:one-macroblock-timing-cost}
\sup_{\tau\in I}\E_{\pi,\tau}\left[
\sum_{t\in I}|\widehat\mu_t^\pi-\mu_t^{(\tau)}|
\right]
\ge
c_{\rm time}\frac{\sigma^2}{\Delta}\log H,
\end{equation}
where $c_{\rm time}>0$ is universal.
\end{lemma}

\begin{proof}
We compare one late change point with several early candidates.  If a policy
anticipates the change too often, it incurs large loss under the late
candidate.  Otherwise, it is stable on some window, and an
information-theoretic bound prevents it from adapting quickly enough when the
change instead occurs at the start of that window.  Fix a policy $\pi$ and
suppress it from the probability notation.

\medskip
\noindent\textbf{Step 1.1 - Candidate change locations.}
Fix $1\le \ell\le(H-1)/2$, to be chosen later, and set
$m:=\lfloor(H-1)/\ell\rfloor$.  We place $m$ disjoint windows of length
$\ell$ in the first $H-1$ rounds.  Their start times are the candidate early
change points, which we compare with a late candidate at time $H$.
Specifically, for $i=1,\ldots,m$, define
\[
W_i
:=
\{n_i,\ldots,n_i+\ell-1\},
\qquad
n_i:=1+(i-1)\ell.
\]
By reflecting the observations and predictions if necessary,
suppose without loss of generality that
$\theta^{(1)}>\theta^{(0)}$.  Define
\[
A_i
:=
\left\{
\#\{t\in W_i:\widehat\mu_t>\frac{\theta^{(0)}+\theta^{(1)}}{2}\ \}\ge\ell/2
\right\}.
\]
Thus the event $A_i$ means that at least half of the predictions in $W_i$ lie
on the post-change side of the midpoint.  Under $\tau=H$, every window
contains only observations with mean $\theta^{(0)}$; under $\tau=n_i$, the
window $W_i$ contains only observations with mean $\theta^{(1)}$.  The late
candidate therefore gives
\begin{equation}
\label{eq:late-window-loss}
\sum_{t\in W_i}|\widehat\mu_t-\theta^{(0)}|
\ge
\frac{\ell\Delta}{4}\1\{A_i\},
\end{equation}
whereas the early candidate gives
\begin{equation}
\label{eq:early-window-loss}
\sum_{t\in W_i}|\widehat\mu_t-\theta^{(1)}|
\ge
\frac{\ell\Delta}{4}\1\{A_i^c\}.
\end{equation}

\textit{Late candidate:} We first show that if the policy makes a window error with non-negligible
probability in every window under the late-change instance, then it
already suffers large regret.  Let
$q_i:=\mathbb P_H(A_i)$ denote this error probability on $W_i$.  If
$q_i>1/(4m)$ for every $i$, then the disjointness of the windows and
Eq.~(\ref{eq:late-window-loss}) give
\[
\E_H\left[
\sum_{t\in I}|\widehat\mu_t-\mu_t^{(H)}|
\right]
\ge
\frac{\ell\Delta}{4}\sum_{i=1}^m q_i
>
\frac{\ell\Delta}{16}.
\]
With the choice \(\ell\asymp 1+(\sigma^2/\Delta^2)\log H\), whose admissibility is verified below, the preceding display already gives the desired regret lower bound. We may therefore restrict attention to the complementary case, in which some \(W_j\) satisfies \(q_j\le 1/(4m)\). Since no change has
yet occurred in $W_j$ under $\mathbb P_H$, a small $q_j$ quantifies the
stability of the policy on this window.

\noindent\textit{Early candidate:}
Now consider a change at $n_j$, and let
$p_j:=\mathbb P_{n_j}(A_j)$.  The event $A_j$ depends only on
the policy randomness and observations available before the last prediction
in $W_j$.  Hence its early- and late-change laws differ through at most
$\ell-1$ Gaussian observations, and data processing gives
\begin{equation}
\label{eq:change-time-binary-kl}
\mathrm{kl}(p_j,q_j)
\le
\frac{(\ell-1)\Delta^2}{2\sigma^2},
\end{equation}
where $\mathrm{kl}$ denotes binary relative entropy.

If $q_j=0$, finiteness of the relative entropy forces $p_j=0$.  Otherwise,
if $p_j\ge1/2$, monotonicity of binary relative entropy in its first argument
above $q_j$ gives
\[
\mathrm{kl}(p_j,q_j)
\ge
\mathrm{kl}(1/2,q_j)
=
\frac12\log\frac1{4q_j(1-q_j)}
\ge
\frac12\log m.
\]
Consequently, whenever
\begin{equation}
\label{eq:change-time-information-condition}
\frac{(\ell-1)\Delta^2}{2\sigma^2}
<
\frac12\log m,
\end{equation}
Eq.~(\ref{eq:change-time-binary-kl}) forces $p_j<1/2$.  Then
Eq.~(\ref{eq:early-window-loss}) yields
\[
\E_{n_j}\left[
\sum_{t\in I}|\widehat\mu_t-\mu_t^{(n_j)}|
\right]
\ge
(1-p_j)\frac{\ell\Delta}{4}
\ge
\frac{\ell\Delta}{8}.
\]
Together with the late-candidate case, this shows that, for every $\ell$ satisfying
Eq.~(\ref{eq:change-time-information-condition}), some change location incurs
at least $\ell\Delta/16$ regret.

\medskip
\noindent\textbf{Step 1.2 - Choice of the window length.}
The preceding lower bound increases with $\ell$, while
Eq.~(\ref{eq:change-time-information-condition}) limits how much information can
be accumulated within one window.  Choose
\[
\ell
:=
1+\left\lfloor
\frac{\sigma^2}{8\Delta^2}\log H
\right\rfloor.
\]
Eq.~(\ref{eq:change-time-resolution-condition}) gives
$\ell\le\sqrt H/4$ and hence $m\ge\sqrt H$.  Moreover,
\[
\frac{(\ell-1)\Delta^2}{2\sigma^2}
\le \frac1{16}\log H
< \frac12\log m,
\]
so Eq.~(\ref{eq:change-time-information-condition}) holds.  Finally,
$\ell\ge(\sigma^2/(8\Delta^2))\log H$, and therefore
$\ell\Delta/16\ge c_{\rm time}(\sigma^2/\Delta)\log H$ for a universal
$c_{\rm time}>0$.  This proves Eq.~(\ref{eq:one-macroblock-timing-cost}).
\end{proof}

\medskip
\noindent\textbf{Step 2 - A hard recurrent environment.}

We now embed the single-macroblock problem into an oracle-aided Gaussian
family with $S$ changes.  The class means, class sequence, and macroblocks are
revealed; only the change location inside each macroblock remains unknown.
Set
\[
H:=\left\lfloor\frac{T-1}{S}\right\rfloor.
\]
For $C_0$ sufficiently large and $c_0$ sufficiently small,
Eq.~(\ref{eq:change-time-global-resolution}) implies
Eq.~(\ref{eq:change-time-resolution-condition}) with
$\Delta=\Delta_{\min}$.

Fix the class means
\[
\theta^{(m)}:=(m-1)\Delta_{\min},
\qquad m=1,\ldots,M.
\]
We construct the sequence of class labels so that every class is visited while every transition has magnitude $\Delta_{\min}$.
To this end, let $z_0, \ldots, z_S$ be the first $S+1$ terms of the reflected sequence
\[
1,2,\ldots,M,M-1,\ldots,2,1,2,\ldots.
\]

Associate one change with each of the $S$ disjoint macroblocks
\[
I_s
:=
\{2+(s-1)H,\ldots,1+sH\},
\qquad s=1,\ldots,S,
\]
and allow its location to range over the rounds in that macroblock.  The
adversarial location will be selected below using
Lemma~\ref{lem:one-macroblock-timing-cost}.  The transition in $I_s$ is
from $\theta^{(z_{s-1})}$ to $\theta^{(z_s)}$.  Set the mean at round $1$ to
$\theta^{(z_0)}$, and keep the mean at $\theta^{(z_S)}$ on any rounds left
after $I_S$.  Let $\mathcal E_{\rm time}$ be the resulting family over all
choices of the $S$ change locations, with independent Gaussian noise of
variance $\sigma^2$.
Every environment in $\mathcal E_{\rm time}$ has exactly $S$ changes and uses all $M$ classes.  Moreover, its class-mean diameter is
$(M-1)\Delta_{\min}<\Delta_{\max}$, so
$\mathcal E_{\rm time}\subseteq\mathcal E$.

Fix any oracle policy $\pi$ and choose the change points
successively.  Once the change points in $I_1,\ldots,I_{s-1}$ have been fixed,
all candidate change points in $I_s$ induce the same prior-history law.  Viewing
$I_s$ in the local rounds of
Lemma~\ref{lem:one-macroblock-timing-cost} and taking
$\Delta=\Delta_{\min}$, the lemma gives a change point in $I_s$
whose expected loss is at least
\[
c_{\rm time}\frac{\sigma^2}{\Delta_{\min}}\log H.
\]
This change location may depend on the policy and its induced law, but not on the
realized observations.  Every candidate completes the transition to class
$z_s$ by the end of $I_s$, so the same argument can be continued in
$I_{s+1}$.  By causality, later choices do not affect the losses already
incurred.

The successive choices produce one environment in $\mathcal E_{\rm time}$
for which summing over the disjoint macroblocks gives
\[
\sup_{\nu\in\mathcal E_{\rm time}}
\E_{\pi,\nu}\left[
\sum_{t=2}^T|\widehat\mu_t^\pi-\mu_t|
\right]
\ge
c_{\rm time}\frac{\sigma^2}{\Delta_{\min}}S\log H.
\]
Since the policy was arbitrary, the same lower bound holds for the oracle
minimax regret.  Finally, $H=\lfloor(T-1)/S\rfloor$ and $H\ge64$ imply
$\log H\asymp\log(T/S)$.  This proves
Eq.~(\ref{eq:change-time-lower-bound}).
\end{proof}

\subsection{Cost of unknown class means}
\label{app:class-mean-lower-bound}

We now reveal all change points and class labels, leaving only the recurrent
means unknown.  We first reduce this oracle problem to one binary estimation
problem for each class, solve the binary problem using Le Cam's method, and
then sum the resulting costs. Define
\[
\bar{\Delta}
:=
\Delta_{\max}-(M-1)\Delta_{\min}>0.
\]

\begin{proposition}[Cost of unknown class means]
\label{prop:unknown-class-mean-lower-bound}
There exist universal constants $c,C>0$ such that, for
\[
T
\ge
C\max\left\{
M,
\frac{\sigma^2M^3}{\bar{\Delta}^2}
\right\},
\]
\begin{equation}
\label{eq:class-mean-lower-bound}
R_T^\star(\mathcal E)
\ge
c\sigma\sqrt{MT}.
\end{equation}
\end{proposition}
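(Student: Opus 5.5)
We prove Proposition~\ref{prop:unknown-class-mean-lower-bound} with an oracle that reveals all change points and the ordered class labels, so only the $M$ recurrent means remain unknown. In the construction, each class receives $n\asymp T/M$ rounds. Concretely, we use $S'=M-1\le S$ changes and a reflected or cyclic label sequence visiting each class once for about $T/M$ rounds. If exactly $S$ changes are required, we add extra alternations between adjacent classes; this still gives each class $\Theta(T/M)$ rounds, since $M\le S+1$ and the split can be chosen so that the counts are balanced. The noise is Gaussian with variance $\sigma^2$.

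The base means are $\theta_0^{(m)}:=(m-1)(\Delta_{\min}+2\xi)$, and we perturb them as $\theta^{(m)}_\omega:=\theta_0^{(m)}+\omega_m\xi$ with $\omega\in\{\pm1\}^M$ and $\xi:=c_1\sigma/\sqrt n$. Two constraints must hold for $\mathcal E_\omega\subseteq\mathcal E$.
\begin{itemize}
\item \textbf{Separation.} Consecutive perturbed means differ by at least $\Delta_{\min}+2\xi-2\xi=\Delta_{\min}$, so the minimum gap is $\Delta_{\min}$. Adjacent labels differ, so every change is a genuine mean change.
\item \textbf{Diameter.} The diameter is at most $(M-1)(\Delta_{\min}+2\xi)+2\xi\le (M-1)\Delta_{\min}+2M\xi$. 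This is at most $\Delta_{\max}$ provided $2M\xi\le\bar\Delta$, that is, $M\sigma/\sqrt{T/M}\lesssim\bar\Delta$, which is exactly the hypothesis $T\ge C\sigma^2M^3/\bar\Delta^2$.
\end{itemize}
The condition $T\ge CM$ ensures $n\ge1$ and that the rounding is harmless.

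Next comes the reduction to coordinates. For any policy, the regret splits over the rounds of each class $m$ as $\sum_m\sum_{t:\text{class }m}\E|\widehat\mu_t-\theta^{(m)}_\omega|$, and we average over a uniform prior on $\omega$. Fix $m$ and a round $t$ in class $m$ with $k_t$ prior class-$m$ observations. Condition on $\omega_{-m}$ and on the policy randomness. The observations from other classes carry no information about $\omega_m$ once $\omega_{-m}$ is fixed. The two hypotheses $\omega_m=\pm1$ therefore differ only through the $k_t\le n$ class-$m$ samples, with $\KL=k_t(2\xi)^2/(2\sigma^2)\le 2c_1^2$.

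Le Cam's two-point bound with Pinsker then gives
\[
\tfrac12\sum_{\pm}\E_\pm|\widehat\mu_t-\theta^{(m)}_\pm|\ge \tfrac{\xi}{2}\bigl(1-\sqrt{c_1^2}\bigr)\ge c\xi
\]
for small $c_1$; the point is that any prediction is at distance at least $\xi$ from one of the two values. Summing over the $n$ rounds of class $m$ gives $c\,n\xi=c\,c_1\sigma\sqrt n$. Summing over the $M$ classes gives $cM\sigma\sqrt{T/M}=c\sigma\sqrt{MT}$. The Bayes average lower-bounds the supremum, and oracle information can only help the learner, so $R_T^\star(\mathcal E)\ge c\sigma\sqrt{MT}$.

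The main obstacle is the per-coordinate Le Cam step under adaptivity. We must verify that conditioning on $\omega_{-m}$ makes the likelihood ratio between $\omega_m=\pm1$ depend only on the class-$m$ samples revealed before $t$. This holds because the change points and labels are deterministic and known, and it lets us avoid a full Assouad argument over the joint law.

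A secondary care point is the bookkeeping when $S>M-1$. The class-round counts must remain $\asymp T/M$ for every class, since the bound needs $\sum_m\sqrt{n_m}\gtrsim\sqrt{MT}$. Even a bound of $\sum_m n_m\xi_m$ with class-specific $\xi_m=c_1\sigma/\sqrt{n_m}$ would suffice, provided the diameter constraint still holds with $\max_m\xi_m$.
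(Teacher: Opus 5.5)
Your proposal is correct and follows the paper's proof. Both use an oracle that reveals the change points and labels, independent $\pm$ perturbations of well-separated class centers under a uniform sign prior, and a per-round Le Cam/Pinsker two-point bound that uses only the at most $n$ prior class-$m$ samples, then sum $n\xi\asymp\sigma\sqrt{n}$ over $M$ balanced classes. The differences are cosmetic: the paper fixes the center spacing at $\Delta_{\min}+\bar{\Delta}/(2M)$ with $\delta_m=\min\{\sigma/(4\sqrt{T_m}),\bar{\Delta}/(4M)\}$, so the hypothesis on $T$ only selects the $\sigma\sqrt{MT}$ branch at the end, and it gets exactly $S$ changes by labelling all $S+1$ segments cyclically; that cyclic labelling is the clean way to realize your balanced-split claim, since alternating between only two adjacent classes would not stay balanced when $S$ is comparable to $T$.
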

\begin{proof}[Proof]
The proof follows the steps below.

\textbf{Step 1 - Reduction to coordinate-wise two-point problems}

Assume for simplicity that $(T-1)/M$ is an
integer; the general case changes only universal constants.  Assign the $S+1$
segment labels cyclically as
\[
1,2,\ldots,M,1,2,\ldots,
\]
truncated after $S+1$ labels, and choose their lengths so that every class
appears on $T_m$ of these rounds, where
\begin{equation}
\label{eq:balanced-class-occupancy}
T_m=\frac{T-1}{M}.
\end{equation}

We next make a further reduction by allowing only two possible values for
the mean of each class.  These values lie at distance $\delta_m$ from a
class-dependent center $a_m$ and are therefore separated by $2\delta_m$.
Formally, set
\[
\begin{aligned}
a_m&:=(m-1)\left(\Delta_{\min}+\frac{\bar{\Delta}}{2M}\right),
\qquad
\delta_m:=
\min\left\{\frac{\sigma}{4\sqrt{T_m}},
\frac{\bar{\Delta}}{4M}\right\}.
\end{aligned}
\]
For a sign vector $v=(v_1,\ldots,v_M)\in\{-1,+1\}^M$, define
\begin{equation}
\label{eq:class-mean-two-point-family}
\theta_v^{(m)}:=a_m+v_m\delta_m,
\qquad m=1,\ldots,M,
\end{equation}
and take independent Gaussian noise with variance $\sigma^2$.  Every member
of this family belongs to $\mathcal E$.  Indeed, adjacent class means satisfy
\[
\theta_v^{(m+1)}-\theta_v^{(m)}
\ge
\Delta_{\min}+\frac{\bar{\Delta}}{2M}-\delta_m-\delta_{m+1}
\ge\Delta_{\min},
\]
whereas the diameter is at most
\begin{align*}
(M-1)\left(\Delta_{\min}+\frac{\bar{\Delta}}{2M}\right)
+\frac{\bar{\Delta}}{2M}
&=(M-1)\Delta_{\min}+\frac{\bar{\Delta}}{2}
<\Delta_{\max}.
\end{align*}

Restrict attention to this finite Gaussian subclass. We reveal the
change points, class labels, and the two candidate means for each class, so
that only the sign vector \(v\) remains unknown to the learner. For any such policy $\pi$, define
\begin{align*}
R(\pi,v)
&:=
\E_v\left[
\sum_{t=2}^T|\widehat\mu_t^\pi-\mu_t|
\right],
\qquad
R_m(\pi,v)
:=
\E_v\left[
\sum_{\substack{2\le t\le T:\\\mu_t=\theta_v^{(m)}}}
|\widehat\mu_t^\pi-\theta_v^{(m)}|
\right].
\end{align*}
Then $R(\pi,v)=\sum_{m=1}^M R_m(\pi,v)$, and the oracle reduction gives
\begin{equation}
\label{eq:class-mean-oracle-reduction}
R_T^\star(\mathcal E)
\ge
\inf_{\pi}
\max_{v\in\{-1,+1\}^M}
R(\pi,v).
\end{equation}

Let $V=(V_1,\ldots,V_M)$ have the uniform product distribution on
$\{-1,+1\}^M$.  Here $\E_V$ averages only over this artificial sign prior,
since $R(\pi,v)$ already includes the observation and policy randomness.  For
every fixed policy,
\begin{equation}
\label{eq:class-mean-max-average-reduction}
\max_v R(\pi,v)
\ge
\E_V[R(\pi,V)]
=
\sum_{m=1}^M\E_V[R_m(\pi,V)].
\end{equation}
We finally reduce this $M$-coordinate problem to $M$ binary problems by
considering one class at a time and conditioning on the remaining signs.
For a fixed class \(m\), let
\(
V_{-m}=(V_1,\ldots,V_{m-1},V_{m+1},\ldots,V_M)
\)
denote the signs of all classes other than \(m\).
The
product structure gives
\begin{equation}
\label{eq:class-mean-conditional-reduction}
\E_V[R_m(\pi,V)]
=
\E_{V_{-m}}
\left[
\E\!\left[R_m(\pi,V)\mid V_{-m}\right]
\right],
\end{equation}
where the inner conditional expectation averages over the two possible
values of $V_m$.  It is therefore a two-point estimation problem for the mean
of class $m$.

\medskip
\noindent\textbf{Step 2 - Solving the two-point problem}

We use the following sequential form of Le Cam's two-point bound.

\begin{lemma}[Sequential two-point estimation cost]
\label{lem:sequential-two-point-estimation}
Fix a revealed class-label sequence, and suppose a given class is active on a
set $\mathcal I\subseteq\{2,\ldots,T\}$ of $n\ge2$ rounds, i.e.,
$|\mathcal I|=n$.  Consider two
Gaussian environments that differ only in whether the mean of this class is
\[
\theta^{(0)}:=a-\delta
\qquad\text{or}\qquad
\theta^{(1)}:=a+\delta,
\]
where $\delta\le\sigma/(4\sqrt n)$.  Then every online policy satisfies
\begin{equation}
\label{eq:sequential-two-point-estimation}
\frac12\sum_{u\in\{0,1\}}
\E_u\left[
\sum_{t\in\mathcal I}
|\widehat\mu_t-\theta^{(u)}|
\right]
\ge
\frac38n\delta.
\end{equation}
\end{lemma}

\begin{proof}
We use the same two candidate means throughout all $n$ appearances of the
class.
Fix $t\in\mathcal I$, and let $\Pbb_t^{(1)}$ and $\Pbb_t^{(0)}$ be the
laws of the learner's full pre-$t$ observation history together with its
internal random seed.  At most \(n\) observations from this class have
appeared before round \(t\), while all other observations have the same law
under the two alternatives. Hence, additivity of KL divergence for independent Gaussian observations gives
\begin{equation}
\label{eq:sequential-two-point-kl}
\KL( \Pbb_t^{(1)}\Vert \Pbb_t^{(0)})
\le
\frac{n(2\delta)^2}{2\sigma^2}
\le
\frac18,
\end{equation}
where the last inequality follows from \(\delta\le \sigma/(4\sqrt n)\).
Pinsker's inequality yields
\begin{equation}
  \label{eq:Pinsker}
\left\| \Pbb_t^{(1)}- \Pbb_t^{(0)}\right\|_{\mathrm{TV}}
\le
\sqrt{\frac12
\KL\!\left(
 \Pbb_t^{(1)}\middle\| \Pbb_t^{(0)}
\right)} \le \frac{1}{4}.
\end{equation}
We next translate this testing difficulty into estimation loss. An estimate
on the wrong side of $a$ is at least $\delta$ away from the true mean, so
\begin{align*}
\E_1|\widehat\mu_t-\theta^{(1)}|
&\ge \delta\,\mathbb P_1(\widehat\mu_t<a),
\qquad
\E_0|\widehat\mu_t-\theta^{(0)}|
\ge \delta\,\mathbb P_0(\widehat\mu_t\ge a).
\end{align*}
Le Cam's testing inequality gives
\[
\mathbb P_1(\widehat\mu_t<a)
+\mathbb P_0(\widehat\mu_t\ge a)
\ge
1-\left\| \Pbb_t^{(1)}- \Pbb_t^{(0)}\right\|_{\mathrm{TV}}
\ge\frac34,
\]
where the last inequality is due to Eq.~(\ref{eq:Pinsker}).
Adding the two inequalities, dividing by two, and applying the preceding
bound gives
\begin{equation}
\label{eq:sequential-le-cam-inequality}
\frac12\Big(
\E_1|\widehat\mu_t-\theta^{(1)}|
+\E_0|\widehat\mu_t-\theta^{(0)}|
\Big)
\ge
\frac{3\delta}{8}.
\end{equation}
Summing Eq.~(\ref{eq:sequential-le-cam-inequality}) over the $n$ rounds in
$\mathcal I$ proves Eq.~(\ref{eq:sequential-two-point-estimation}).  In
particular, choosing $\delta=\sigma/(4\sqrt n)$ gives a cumulative cost of
order $\sigma\sqrt n$.
\end{proof}

Return now to Eq.~(\ref{eq:class-mean-conditional-reduction}).  Conditional on
$V_{-m}$, the two hypotheses differ only in whether the class-$m$ mean is
$a_m-\delta_m$ or $a_m+\delta_m$.  Since
$\delta_m\le\sigma/(4\sqrt{T_m})$, Lemma~\ref{lem:sequential-two-point-estimation}
gives
\[
\E\!\left[R_m(\pi,V)\mid V_{-m}\right]
\ge
\frac38T_m\delta_m.
\]
Averaging over $V_{-m}$ and summing over the classes therefore gives, for
every such policy $\pi$,
\begin{equation}
\label{eq:class-mean-product-prior-bound}
\E_V[R(\pi,V)]
\ge
\frac38\sum_{m=1}^M T_m\delta_m.
\end{equation}

\medskip
\noindent\textbf{Step 3 - The final rate.}

Since $T-1\ge T/2$, Eq.~(\ref{eq:balanced-class-occupancy}) gives the following
bound up to universal constants:
\begin{align}
\sum_{m=1}^M T_m\delta_m
&=
\frac14\min\left\{
\sigma\sqrt{M(T-1)},
\frac{\bar{\Delta}}{M}(T-1)
\right\}
\ge
c\min\left\{
\sigma\sqrt{MT},
\frac{\bar{\Delta}}{M}T
\right\}.
\label{eq:class-mean-summed-cost}
\end{align}
Combining Eq.~(\ref{eq:class-mean-oracle-reduction}),
Eq.~(\ref{eq:class-mean-max-average-reduction}),
Eq.~(\ref{eq:class-mean-product-prior-bound}), and
Eq.~(\ref{eq:class-mean-summed-cost}), and then taking the infimum over
all policies with access to the revealed information, gives
\begin{equation}
\label{eq:class-mean-truncated-lower-bound}
R_T^\star(\mathcal E)
\ge
c\min\left\{
\sigma\sqrt{MT},
\frac{\bar{\Delta}}{M}T
\right\}.
\end{equation}
Under the assumed condition on \(T\), the first term is the
minimum in Eq.~(\ref{eq:class-mean-truncated-lower-bound}), proving
Eq.~(\ref{eq:class-mean-lower-bound}). For fixed
\(M,\sigma,\Delta_{\min}\), and \(\Delta_{\max}\), this condition holds for
all sufficiently large \(T\).

\end{proof}

Combining Propositions~\ref{prop:unknown-change-time-lower-bound} and
\ref{prop:unknown-class-mean-lower-bound} proves
Eq.~(\ref{eq:ecr-lower-bound-main}) whenever
Eq.~(\ref{eq:change-time-global-resolution}) and the condition of
Proposition~\ref{prop:unknown-class-mean-lower-bound} hold.

\section{Necessity of the exposure cap for ECR}
\label{app:uncapped-ecr}

We first create short intervals that alternate between means $0$ and $1$.
The detector misses their internal changes and turns each interval into an
unanchored \emph{source block} with a positive mean. We then create long, mostly-zero
\emph{reuse intervals} whose detected blocks are anchored at class $0$ but unsafe. During
the reuse intervals, the compatibility test accepts all of the source blocks
but rejects the blocks from earlier reuse intervals. The same positive
historical mass is therefore reused repeatedly without a growing historical
denominator.

Let $\mathrm{ECR}_{\infty}$ denote Algorithm~\ref{alg:ECR} with
$B_{\exp}=\infty$. Thus every compatible completed block is reused and
$\mathcal U_t=\cC_t$.

\begin{proposition}[Failure of uncapped reuse]
\label{prop:uncapped-ecr}
Fix $\alpha\in(0,1/4)$. There exist numerical constants
$A,c,C>0$ such that, for every sufficiently large $T$, there is a Gaussian
environment $\nu\in\mathcal E_{T,S,3}(1,A,1)$ with
$\Theta=\{0,1,A\}$ and $c\sqrt T\le S\le C\sqrt T$, for which
\[
R_T(\mathrm{ECR}_{\infty};\nu)
\ge c\frac{T^{5/8}}{(\log T)^{1/4}}.
\]
\end{proposition}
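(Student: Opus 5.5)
The plan is to build an explicit Gaussian environment with $\Theta=\{0,1,A\}$ in which $\mathrm{ECR}_\infty$ keeps reusing the same biased historical mass. We then lower-bound the population-bias part of the regret directly. Because estimation noise is centered conditionally on $\mathcal G_T^{\Dstr}$, as in the proof of Lemma~\ref{lem:regret-decomposition}, Jensen gives $\E|\widehat\mu_t-\mu_t|\ge\E|\widetilde\mu_t-\mu_t|$. It therefore suffices to show that, on the event $\mathcal G_{\det}\cap\mathcal G_{\rm cmp}$ of probability at least $1-2\alpha\ge1/2$, we have $P_T\gtrsim T^{5/8}/(\log T)^{1/4}$.

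\textbf{Source phase.} The horizon starts with $J_s$ short source intervals, separated by long class-$A$ segments. The jumps of size $A$ force alarms on $\mathcal G_{\det}$ by Lemma~\ref{lem:anchor-formation}(iii), so each source interval becomes its own completed block. Inside a source interval of length $L_s$, the mean alternates between $0$ and $1$ on runs short enough that no split can exceed $2\gamma_t^r$. This uses Eq.~(\ref{eq:no-alarm-population-bound}) in reverse: we pick the run lengths so that every population CUSUM stays below $\gamma/2$, and then $\widehat D<\gamma$ on the good event. As a result no alarm fires, and the block is unanchored with population mean $\bar\mu_b\approx\beta>0$ in both streams.

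\textbf{Reuse phase.} The source phase is followed by $J$ reuse intervals of length $R$, again separated by class-$A$ segments. Each reuse interval is mostly class $0$, but contains sparse isolated $1$-runs. These runs shift its decision-stream mean by some $\varepsilon$ with
\[
\sigma/\sqrt R\ll\varepsilon\ll\Delta_{\min}/4 .
\]
The resulting block is then anchored at $0$ by Proposition~\ref{prop:anchored-segmentation}, yet unsafe.

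\textbf{Compatibility in the reuse phase.} Fix a later reuse interval. Its active prefix is clean zero for a long stretch before the first injected run, and we check two things on that stretch using Eq.~(\ref{eq:compat-implications}).
\begin{itemize}
\item Every source block is accepted, because $Q_b(t)\lesssim\sqrt{\min(n,L_s)}\,\beta\le2\lambda$ when $L_s\beta^2\lesssim\log T$.
\item Every earlier reuse block is rejected as soon as $\sqrt n\,\varepsilon\ge5\lambda$, that is, once the prefix length satisfies $n\gtrsim\log T/\varepsilon^2$.
\end{itemize}
Hence the denominator is $W_t\approx n+J_sL_s$ and does not grow with the number of recurrences. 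Summing the bias $J_sL_s\beta/(n+J_sL_s)$ over the interval gives a contribution of about $J_sL_s\beta\log(1+R/(J_sL_s))$ from each reuse interval, and so $P_T\gtrsim J\,J_sL_s\beta$ overall.

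The final step is the parameter bookkeeping. The constraints are:
\begin{itemize}
\item $S\asymp$ (number of $0/1$ alternations in the sources) $+$ (injected runs per reuse interval)$\cdot J$ $+$ $O(J+J_s)$ must lie in $[c\sqrt T,C\sqrt T]$;
\item the time budget is $J(R+\text{separators})+J_s(L_s+\text{separators})\le T$;
\item the alternations must stay undetectable, and $L_s\beta^2\lesssim\log T$ is needed for acceptance;
\item the injected runs must make $\varepsilon\sqrt R\gg\sqrt{\log T}$ while remaining undetected and keeping the anchor.
\end{itemize}
These constraints couple $\varepsilon$, $R$ and the change count. I expect solving this optimization, together with rigorously certifying non-detection of the short runs on $\mathcal G_{\det}$, to be the main obstacle. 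I would first try $R\asymp\sqrt T$, $J\asymp\sqrt T$, and source mass $J_sL_s\beta\asymp T^{1/8}(\log T)^{-1/4}$ with $\beta\asymp(\log T/L_s)^{1/2}$, and check that the change budget closes at $\Theta(\sqrt T)$. The exponent $5/8$ should emerge from balancing the $\sqrt{\log T}$ acceptance constraint on the source blocks against the number of changes spent on sources. If the population-CUSUM certificate for the sources is too lossy, the fallback is to place the $1$-runs on the estimation stream only, as in the experiment of App.~\ref{app:exposure-experiment}; the decision stream then sees pure zeros and detection is trivially avoided.
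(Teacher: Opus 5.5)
\textbf{Your reuse-phase rejection step fails, and it carries the whole construction.} You give every reuse interval the same design, with block mean $\approx\varepsilon$. You then reject earlier reuse blocks using a long clean-zero stretch of the current active prefix. This cannot happen unless the detector cuts the current interval.

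Suppose the current interval has $R$ decision samples with population mean $\bar\mu_{\rm cur}$, and a prefix of $n$ of them has mean $f$. At the end of the interval, the detector split at that prefix and the compatibility statistic satisfy
\[
(D_{k,t}^r)^2=\frac{nR}{R-n}\,(f-\bar\mu_{\rm cur})^2\ \ge\ n\,(f-\bar\mu_{\rm cur})^2,
\qquad\text{while}\qquad
Q_{b_i}(t)^2\le n\,(f-\bar\mu_{b_i})^2 .
\]
With a common design, $\bar\mu_{b_i}=\bar\mu_{\rm cur}+O(\log T/R)$. So your requirement $Q_{b_i}(t)\ge5\lambda_t^{b_i}$ forces $D_{k,t}^r\gtrsim5\lambda>2\gamma$, and on $\mathcal G_{\det}$ an alarm fires inside the interval. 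In your version, the rejection condition $n_0\varepsilon^2\ge25\lambda^2$ is exactly what makes the transition from the clean stretch into the run region detectable.

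Consequently, the single unsafe, anchored, mean-$\varepsilon$ block per interval that your accounting uses never forms. Instead, each interval sheds a nearly pure zero piece, and later clean-zero prefixes accept it. Hence $W_t$ does grow with the number of recurrences, and $P_T\gtrsim J\,J_sL_s\beta$ is unsupported. The obstruction is general: a prefix cannot be population-incompatible with a block of the same design as its own interval unless the detector sees the same discrepancy inside that interval.

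\textbf{The missing idea} is to make the reuse intervals pairwise distinguishable while keeping each one internally homogeneous. In interval $j$, place single one-cells evenly at density $p_j=j\eta$, with $L\asymp\sqrt T$ and $\eta=256\sqrt{\log T/L}$.
\begin{itemize}
\item Even spacing keeps every within-interval population CUSUM at most $\sqrt2$, so there is no alarm.
\item The active prefix mean stays within $1/n$ of $p_j$. On the latter half of the interval it is therefore at least $\eta/2$ away from every earlier block mean $\approx p_i$. This gives $Q^2\ge\frac L6(\frac\eta2)^2>500\log T$, which rejects those earlier blocks.
\item The sources are $\asymp L/\log T$ blocks of $\asymp\log T$ alternating cells. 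Each has mean of order one and $Q_b^2\le4N_b\le16\log T$, so each is accepted. Their total bias mass is at least $L/16$.
\item Hence $W_t\le3L$, and each zero round in the latter half has population bias at least $1/64$.
\end{itemize}

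This also shows why your bookkeeping cannot close. With $J\asymp\sqrt T$, each unsafe interval needs one-mass $\gg\sqrt{R\log T}$. That mass must sit in runs of length $O(\log T)$, because longer runs are detected. This costs $\gg T^{3/4}/\sqrt{\log T}$ changes, far above $C\sqrt T$.

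The exponent $5/8$ comes from the reuse side, not from the source acceptance constraint. Pairwise gaps of at least $\eta$ cost $\sum_jLp_j\asymp L\eta J^2\lesssim L$ changes. This caps $J\asymp\eta^{-1/2}\asymp T^{1/8}/(\log T)^{1/4}$, and the regret is then $\gtrsim JL$.

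Two smaller points:
\begin{itemize}
\item Your no-alarm certificate is invalid as stated. On $\mathcal G_{\det}$, $D<\gamma/2$ only gives $\widehat D<3\gamma/2$, so you need a separate tail bound. One option is $2e^{-(\gamma-\sqrt2)^2/2}$ per triple, summed over restarts $r\ge T/3$; a long initial $A$-segment guarantees this range of restarts.
\item The estimation-stream-only fallback cannot be used for the reuse intervals. Their decision-stream means would then be exactly zero, so they would look safe and be accepted.
\end{itemize}
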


\begin{proof}
For ease of presentation, we use conservative constants and state the intermediate bounds for sufficiently large $T$; tighter constants and estimates can reduce the required horizon.

Group rounds $2v-1$ and $2v$ into cell $v$. A cell labeled
$a\in\{0,1,A\}$ means $\mu_{2v-1}=\mu_{2v}=a$, so it gives the decision and
estimation streams the same population mean.
Let $X_t=\mu_t+\varepsilon_t$, where
$\varepsilon_t\stackrel{\mathrm{i.i.d.}}{\sim}\mathcal N(0,1)$.
We reserve \emph{segment} for a true constant-mean segment, use
\emph{interval} for each constructed zero/one stretch, and use \emph{block}
for rounds between consecutive detector alarms.

Take $A:=1024$ and set
\begin{align*}
&q:=\lceil\log T\rceil
\asymp \log T,
\qquad L:=\lfloor\sqrt T\rfloor
\asymp T^{1/2},\\
&N:=\lfloor L/q\rfloor
\asymp \frac{T^{1/2}}{\log T},
\qquad \eta:=256\sqrt{\frac{\log T}{L}}
\asymp \frac{(\log T)^{1/2}}{T^{1/4}},\\
&J:=\lfloor\eta^{-1/2}\rfloor
\asymp \frac{T^{1/8}}{(\log T)^{1/4}},
\qquad p_j:=j\eta
\asymp \frac{j(\log T)^{1/2}}{T^{1/4}},
\qquad 1\le j\le J.
\end{align*}
Then $Nq\asymp L$ and
$p_J\asymp\sqrt\eta\asymp(\log T)^{1/4}/T^{1/8}$; in particular, for
large $T$, $p_j\le p_J\le\sqrt\eta\le1/4$.

\paragraph{Step 1 - Construction.}
\emph{Initialization.}
Let $m_0:=\lfloor T/4\rfloor\asymp T$ and begin with one class-$A$
segment of $m_0$ cells.

\emph{Source intervals.}
For each $n=1,\ldots,N$, append the $q$-cell interval
$(0,1,0,1,\ldots)$, followed by a class-$A$ segment of $q$ cells.
The logarithmic choice of $q$ keeps the resulting blocks small enough for acceptance during reuse intervals, yet leaves enough observations to detect boundaries with the adjacent class-$A$ segments, where the mean gap is at least $A-1$.

\emph{Reuse intervals.}
For each $j=1,\ldots,J$, append an $L$-cell interval containing
$\lfloor Lp_j\rfloor$ one-cells and zeros elsewhere, followed by a class-$A$
segment of $L$ cells. Spread the one-cells evenly: for example, proportions
$1/8$ and $1/4$ repeat $(0,0,0,0,0,0,0,1)$ and $(0,0,0,1)$, respectively.
Formally, the first $v$ cells contain $\lfloor vp_j\rfloor$ ones, so every
$r$ consecutive cells contain either $\lfloor rp_j\rfloor$ or
$\lceil rp_j\rceil$ ones. The interval mean is therefore
$\lfloor Lp_j\rfloor/L$. Fill all remaining rounds with mean $A$.

Fig.~\ref{fig:uncapped-ecr-construction} depicts the constructed zero/one
intervals, the intervening class-$A$ segments, and the resulting detector
blocks; alarm delays are suppressed.

\begin{figure}[t]
  \centering
  \begin{tikzpicture}[
  x=1cm,
  y=1cm,
  >=Latex,
  every node/.style={font=\scriptsize},
  zero/.style={fill=blue!12,draw=blue!70!black,line width=0.4pt},
  one/.style={fill=orange!15,draw=orange!85!black,line width=0.4pt},
  high/.style={fill=green!11,draw=green!55!black,line width=0.6pt},
  alarm/.style={red!70!black,dashed,line width=0.65pt},
  span/.style={black!55,line width=0.45pt},
  reuse/.style={purple!75!black,line width=0.7pt}
]
  \path[use as bounding box] (0,-1.12) rectangle (7.5,5.05);

  % Color key.
  \node[anchor=west,font=\scriptsize\bfseries] at (0.15,4.86) {cell mean:};
  \filldraw[zero] (1.55,4.74) rectangle +(0.22,0.22);
  \node[anchor=west] at (1.84,4.85) {$0$};
  \filldraw[one] (2.25,4.74) rectangle +(0.22,0.22);
  \node[anchor=west] at (2.54,4.85) {$1$};
  \filldraw[high] (2.95,4.74) rectangle +(0.22,0.22);
  \node[anchor=west] at (3.24,4.85) {$A$};
  \draw[alarm] (4.35,4.74) -- (4.35,4.98);
  \node[anchor=west] at (4.47,4.85) {alarm (delay suppressed)};

  % Initial constant-A run.
  \node[font=\scriptsize\bfseries] at (3.75,4.45) {initialization};
  \filldraw[high] (0.60,3.91) rectangle (6.90,4.27);
  \node at (3.75,4.09) {$A\quad A\quad\cdots\quad A$};
  \node at (3.75,3.73) {$m_0$-cell class-$A$ segment};

  % N source intervals, each followed by a class-A segment.
  \draw[span] (0.60,3.42) -- (0.60,3.52) -- (6.90,3.52) -- (6.90,3.42);
  \node[fill=white,inner sep=1pt,font=\scriptsize\bfseries] at (3.75,3.52)
    {$N$ source intervals and following $A$-segments};

  \begin{scope}[shift={(0.60,2.82)},x=0.175cm]
    \foreach \i in {0,2,4,6} {
      \filldraw[zero] (\i,0) rectangle ++(1,0.38);
    }
    \foreach \i in {1,3,5,7} {
      \filldraw[one] (\i,0) rectangle ++(1,0.38);
    }
  \end{scope}
  \node[fill=white,fill opacity=0.78,text opacity=1,inner sep=0.6pt]
    at (1.30,3.01) {$0,1,0,1,\ldots$};
  \filldraw[high] (2.05,2.82) rectangle (3.45,3.20);
  \node at (2.75,3.01) {$A,\ldots,A$};
  \node at (3.75,3.01) {$\cdots$};
  \begin{scope}[shift={(4.05,2.82)},x=0.175cm]
    \foreach \i in {0,2,4,6} {
      \filldraw[zero] (\i,0) rectangle ++(1,0.38);
    }
    \foreach \i in {1,3,5,7} {
      \filldraw[one] (\i,0) rectangle ++(1,0.38);
    }
  \end{scope}
  \node[fill=white,fill opacity=0.78,text opacity=1,inner sep=0.6pt]
    at (4.75,3.01) {$0,1,0,1,\ldots$};
  \filldraw[high] (5.50,2.82) rectangle (6.90,3.20);
  \node at (6.20,3.01) {$A,\ldots,A$};

  \foreach \x in {0.60,2.025,3.45,4.05,5.475,6.90} {
    \draw[alarm] (\x,2.73) -- (\x,3.29);
  }
  \node at (1.30,2.62) {$q$ cells};
  \node at (2.75,2.62) {$q$ cells};
  \node at (4.75,2.62) {$q$ cells};
  \node at (6.20,2.62) {$q$ cells};
  \node[purple!75!black,font=\tiny] at (1.30,2.38) {unanchored};
  \node[green!45!black] at (2.75,2.38) {$A$-block};
  \node[purple!75!black,font=\tiny] at (4.75,2.38) {unanchored};
  \node[green!45!black] at (6.20,2.38) {$A$-block};

  % Only the source blocks feed the common historical pool.
  \node[draw=purple!75!black,fill=purple!4,rounded corners=1.5pt,
    inner xsep=4pt,inner ysep=2.5pt,font=\scriptsize\bfseries,
    text=purple!75!black] (sources) at (3.025,1.97)
    {same source blocks reused};
  \draw[reuse] (1.30,2.28) to[out=270,in=90] (sources.north west);
  \draw[reuse] (4.75,2.28) to[out=270,in=90] (sources.north east);
  \draw[reuse,-{Latex[length=1.5mm]}]
    (sources.south west) to[out=250,in=90] (0.625,1.19);
  \draw[reuse,-{Latex[length=1.5mm]}]
    (sources.south) to[out=270,in=90] (2.955,1.19);
  \draw[reuse,-{Latex[length=1.5mm]}]
    (sources.south east) to[out=290,in=90] (5.285,1.19);

  \begin{scope}[yshift=-0.65cm]
  % Representative p_1 interval: one orange cell among twelve.
  \begin{scope}[shift={(0.15,1.43)},x=0.07917cm]
    \foreach \i in {0,...,11} {
      \filldraw[zero] (\i,0) rectangle ++(1,0.38);
    }
    \filldraw[one] (10,0) rectangle ++(1,0.38);
  \end{scope}
  \node[fill=white,fill opacity=0.82,text opacity=1,inner sep=0.6pt]
    at (0.625,1.62) {$p_1$};
  \filldraw[high] (1.15,1.43) rectangle (2.10,1.81);
  \node at (1.625,1.62) {$A$};

  \node at (2.29,1.62) {$\cdots$};

  % Representative intermediate p_j interval.
  \begin{scope}[shift={(2.48,1.43)},x=0.07917cm]
    \foreach \i in {0,...,11} {
      \filldraw[zero] (\i,0) rectangle ++(1,0.38);
    }
    \foreach \i in {5,11} {
      \filldraw[one] (\i,0) rectangle ++(1,0.38);
    }
  \end{scope}
  \node[fill=white,fill opacity=0.82,text opacity=1,inner sep=0.6pt]
    at (2.955,1.62) {$p_j$};
  \filldraw[high] (3.48,1.43) rectangle (4.43,1.81);
  \node at (3.955,1.62) {$A$};

  \node at (4.62,1.62) {$\cdots$};

  % Representative p_J interval: the largest one-cell proportion.
  \begin{scope}[shift={(4.81,1.43)},x=0.07917cm]
    \foreach \i in {0,...,11} {
      \filldraw[zero] (\i,0) rectangle ++(1,0.38);
    }
    \foreach \i in {3,7,11} {
      \filldraw[one] (\i,0) rectangle ++(1,0.38);
    }
  \end{scope}
  \node[fill=white,fill opacity=0.82,text opacity=1,inner sep=0.6pt]
    at (5.285,1.62) {$p_J$};
  \filldraw[high] (5.81,1.43) rectangle (6.76,1.81);
  \node at (6.285,1.62) {$A$};

  \foreach \x in {0.15,1.125,2.10,2.48,3.455,4.43,4.81,5.785} {
    \draw[alarm] (\x,1.34) -- (\x,1.90);
  }
  \foreach \x in {0.625,1.625,2.955,3.955,5.285,6.285} {
    \node at (\x,1.22) {$L$ cells};
  }
  \node[purple!75!black,font=\tiny,align=center] at (0.625,0.90)
    {anchor $0$\\[-1pt]unsafe};
  \node[green!45!black] at (1.625,0.90) {$A$-block};
  \node[purple!75!black,font=\tiny,align=center] at (2.955,0.90)
    {anchor $0$\\[-1pt]unsafe};
  \node[green!45!black] at (3.955,0.90) {$A$-block};
  \node[purple!75!black,font=\tiny,align=center] at (5.285,0.90)
    {anchor $0$\\[-1pt]unsafe};
  \node[green!45!black] at (6.285,0.90) {$A$-block};
  \end{scope}

  % J reuse intervals with increasing one-cell proportion p_j.
  \draw[span] (0.15,0.0) -- (0.15,-0.10) -- (7.35,-0.10) -- (7.35,0.0);
  \node[fill=white,inner sep=1pt,font=\scriptsize\bfseries] at (3.75,-0.10)
    {$J$ reuse intervals and following $A$-segments, $p_j=j\eta$};
  \node[text=black!75,font=\scriptsize] at (3.75,-0.43)
    {other completed blocks rejected};

  \draw[->,black!65,line width=0.55pt] (0.55,-0.74) -- (6.95,-0.74)
    node[right] {$v$};
  \node[anchor=west,black!70] at (0.55,-0.98)
    {cell index $v$ runs left-to-right, then continues on the next row};
\end{tikzpicture}
  \caption{\textbf{Environment illustration.}
  Each cell represents two rounds. Each source interval is followed by a
  $q$-cell $A$-segment and produces an unanchored source block. Each reuse
  interval is followed by an $L$-cell $A$-segment and produces an
  unsafe block anchored at $0$; the constant-$A$ segments produce class-$A$
  blocks. During the latter half of each reuse interval, the same source
  blocks are accepted and reused, while all other completed blocks are
  rejected. The illustration is schematic and not to scale; alarm delays
  are suppressed.}
  \label{fig:uncapped-ecr-construction}
\end{figure}

\paragraph{Step 2 - Horizon and number of changes.}
The displayed construction uses
$2m_0+4Nq+4JL=T/2+o(T)$ rounds, so the final class-$A$ segment fills the
remaining horizon. The source intervals contribute
$N(q-1)=\Theta(L)$ changes, while the reuse intervals contribute at
most
\[
2\sum_{j=1}^J\lfloor Lp_j\rfloor
\le L\eta J(J+1)=O(L).
\]
The boundaries with the class-$A$ segments add only $O(N+J)$ changes.
Consequently, $S=\Theta(L)=\Theta(\sqrt T)$.

\paragraph{Step 3 - Source intervals.}
We bound the number of observations and the population mean of each source block. These bounds will establish that the compatibility test accepts the source blocks during the latter half of each subsequent reuse interval.
We restrict attention to the latter half to ensure enough observations for rejecting blocks from earlier reuse intervals, while retaining order \(L\) zero-mean prediction rounds for bias accumulation.

We begin with an upper bound on the detection delay at the boundaries between the source intervals and the adjacent class-$A$ segments.

\textit{Delay upper bound:} Denote the population mean on the $0-1$ side by $0\le\bar \mu_L\le1$ and the $A$ mean by $\bar \mu_R=A$. The difference satisfies $|\bar \mu_L - \bar \mu_R|\ge A-1$.
We work on $\mathcal G_{\det}$ and the event with no internal alarms described below. All sample counts here are per-stream counts, with one observation per cell.
Next, consider the split point $k$ of the statistic at the boundary between the $0-1$ and $A$ intervals, and let $d\le q/4$ be a candidate upper bound on the delay, measured in cells. We argue inductively over the boundaries, starting with the initial long $A$-segment. If the preceding boundary was detected within $d$ cells, then the number of pre-change samples satisfies $N_L\ge q-d\ge d$.
Suppose no alarm has occurred after $d$ incoming cells. The statistic at this split point is
\[
D^2 = \frac{N_L \cdot d}{N_L+d}(\bar \mu_{L}-\bar \mu_{R})^2 \ge \frac{d}{2}(A-1)^2.
\]  
Since the threshold of the change detector test satisfies (for large $T$) $(\gamma_t^r)^2\le12\log T$, and under the good detector event $D^2>4(\gamma_t^r)^2$ guarantees an alarm, a sufficient condition for detection by $d$ cells is
\begin{align*}
&\frac{d}{2}(A-1)^2 > 48 \log T  \nonumber \Longleftrightarrow d > \frac{96 \log T}{(A-1)^2}.
\end{align*}
We therefore choose 
\begin{align}
d:=\left\lceil128\log T/(A-1)^2\right\rceil.  
\end{align}
Since $A=1024$, this choice also satisfies $d\le q/4$ for sufficiently large $T$. An alarm must therefore occur within $d$ cells. The same argument applies with the zero/one and $A$ sides reversed, completing the induction.

\textit{No internal alarms:} For a scan whose samples lie within one source or reuse interval, even spacing gives $D_{k,t}^r\le\sqrt{1/n_1+1/n_2}\le\sqrt2$, where $n_1,n_2$ are its two decision-stream counts. Within a class-$A$ segment, $D_{k,t}^r=0$. For each deterministic valid triple, Gaussian tails therefore bound the alarm probability by
\[
2e^{-(\gamma_t^r-\sqrt2)^2/2}
\le C e^{\sqrt{24\log T}}\frac{\alpha_r}{(t-r)^3}.
\]
There are at most $t-r$ splits for each $r,t$. Summing over all such scans with $r\ge T/3$ shows that the probability of any internal alarm is at most
\[
C e^{\sqrt{24\log T}}\sum_{r\ge T/3}\alpha_r=o(1).
\]
On $\mathcal G_{\det}$, there is no alarm inside the initial class-$A$ segment, so every subsequent restart is after $T/3$. Together with the delay bound and induction above, this gives exactly one alarm within $d$ cells after each boundary between a source or reuse interval and a class-$A$ segment, and no other alarms, with probability at least $1-\alpha-o(1)$.
Alarms occur at even rounds and restart at the preceding odd round, so blocks closed by alarms consist of whole cells. Their sample counts and population means are therefore identical in the two streams.

\textit{block length:} We next use the above result to obtain a lower and upper bound on the number of samples in a source block, denoted here by $N_b=N_b^{\Dstr}=N_b^{\Estr}$. Since the source intervals have length $q$, we may "lose" at most $d$ samples per stream due to the alarm starting the block, and retain at most $d+1$ samples per stream after the end of the interval, including the endpoint allowance. The entry and exit delays need not be equal. We have
$q-d\le N_b\le q+d+1$; therefore
\begin{align}
  \frac{3q}{4} \le N_b \le 2q.
\end{align}
Each source block contains only one-cell class-$0$ and class-$1$ segments and at most $d+1<\ell_{\id}$ class-$A$ cells, so it is unanchored for sufficiently large $T$.

\textit{block bias:} We next obtain a lower bound (for the bias contribution) and an upper bound (for the acceptance guarantee) on the block mean.
For the upper bound, note that we have at most $\lfloor q/2\rfloor$ one-cells and at most $d+1$ $A$-cells; therefore
\[
\bar \mu_b \le \frac{\lfloor q/2\rfloor + A(d+1)}{N_b} \le 1+\frac{A(d+1)}{q-d} \le 2,
\]
where the last inequality holds for sufficiently large $T$ by plugging in $q,d$ and using $A=1024$.

For the lower bound, note that we have $\lfloor q/2\rfloor$ one-cells in the interval. We may lose at most $d$ of these due to the late entry alarm, while any retained $A$-cells contribute nonnegative mass. Therefore, for sufficiently large $T$,
\begin{align*}
& \bar \mu_b \ge \frac{\lfloor q/2\rfloor-d}{N_b} \ge \frac{q/4-1}{N_b} \ge \frac{q/8}{N_b}. 
\end{align*}
We therefore conclude
\begin{align}\label{eq:counter-bias-short}
  \frac{q/8}{N_b} \le \bar \mu_b \le 2.
\end{align}

\textit{Acceptance guarantee:} We now show that the compatibility test accepts these source blocks during the latter half of each later reuse interval.
Consider the active prefix $\mathcal A_t$ during this part of the interval. It starts after the entry alarm and contains no $A$-cells, so $0\le\bar \mu_{\mathcal A_t}\le1$. Therefore $|\bar \mu_{\mathcal A_t}-\bar \mu_b|\le2$. All counts and population means in this comparison use the decision stream. The population compatibility statistic satisfies
\[
Q_b^2(t) = \frac{N_{\mathcal A_t}N_b}{N_{\mathcal A_t}+N_b}(\bar \mu_{\mathcal A_t}-\bar \mu_b)^2 \le 4N_b \le 8q.
\]  
Since $t\ge T/3$, the compatibility threshold satisfies $(\lambda_t^b)^2\ge10\log T$ for sufficiently large $T$. Together with $q\le2\log T$, this gives $Q_b^2(t)\le16\log T<4(\lambda_t^b)^2$. On the good compatibility event $\mathcal G_{\rm cmp}$, we therefore have $\widehat Q_b(t)<Q_b(t)+\lambda_t^b<3\lambda_t^b<4\lambda_t^b$, so block $b$ is accepted.

\paragraph{Step 4 - Blocks from reuse intervals.}
We now need to show two things: rejection of blocks from earlier reuse intervals during later reuse intervals, and accumulation of repeated bias beyond the recurrent benchmark scale.
As in Step 3, we first derive bounds on the number of samples and on the mean differences. All counts and means in the compatibility comparisons below use the decision stream; we suppress its superscript.

\textit{Block length:} Each reuse interval contains $L$ cells. We may lose at most $d$ samples per stream due to the entry alarm and retain at most $d+1$ additional samples per stream due to the exit alarm.
Recall that $d<\log T$ and $L=\lfloor\sqrt T\rfloor$. For sufficiently large $T$, we have $6\log T\le L$, so $d\le L/6$. Thus, for a completed block $b$ of this type,
\begin{align}
  L-d \le N_b \le L+d+1
  & \implies \frac{L}{3} \le N_b \le 2L.
\end{align}
During the latter half of the reuse interval, the active prefix contains at least $L/2-d$ samples and at most $L$ samples, since it starts after the entry alarm and remains inside the interval. Therefore,
\begin{align}
  \frac{L}{3} \le N_{\mathcal A_t} \le L.
\end{align}
The completed-block count bounds hold in both streams, and the active estimation count also satisfies $N_{\mathcal A_t}^{\Estr}\le L$. We will charge the bias on zero-mean rounds in this latter half.

\textit{Bounding the population mean:} We start by bounding the mean of the active prefix in reuse interval $j$. It contains $n=N_{\mathcal A_t}$ consecutive zero/one cells and no $A$-cells. By even spacing, the number of ones is either $\lfloor np_j\rfloor$ or $\lceil np_j\rceil$, and therefore
\begin{align}
  |\bar\mu_{\mathcal A_t}-p_j| \le \frac1n \le \frac3L \le \frac\eta4,
\end{align}
where the last inequality follows from the definition of $\eta$ for sufficiently large $T$.

Next, let $b_i$ be the completed block from an earlier reuse interval $i<j$. Its retained zero/one cells form a consecutive portion of the pattern. By even spacing, their number of ones differs from $p_i$ times their number of cells by at most one. Each of the at most $d+1$ retained $A$-cells contributes $A-p_i\le A$ to the deviation from $p_i$. Therefore,
\begin{align*}
  |\bar\mu_{b_i}-p_i| \le \frac{1+A(d+1)}{N_{b_i}} \le \frac{3[1+A(d+1)]}{L}.
\end{align*}
Since $d+1\le2\log T$ for $\log T\ge2$, we have
\begin{align}
  |\bar\mu_{b_i}-p_i| \le \frac{3(1+2A)\log T}{L} = \frac{6147}{65536}\eta^2 \le \frac\eta4,
\end{align}
where we used $A=1024$ and the definition of $\eta$. The last inequality holds for $\eta\le1$, as ensured by the construction.

These blocks are anchored at class $0$. Indeed, even spacing leaves interior zero segments of length at least $(2p_J)^{-1}\gg\ell_{\id}$; since $Lp_1\to\infty$ and $d=o(L)$, at least one survives the entry delay. Proposition~\ref{prop:anchored-segmentation} therefore gives the anchor. The mean bound also gives $\bar\mu_{b_i}\ge p_i-\eta/4\ge3\eta/4\gg N_{b_i}^{-1/2}$, so each such block is unsafe. The same argument applies to every completed block from a reuse interval, and the two streams have identical population means and counts.

\textit{Mutual rejection:} We next lower-bound the compatibility statistic between the active prefix in reuse interval $j$ and the earlier completed block $b_i$. First,
\begin{align}
  |\bar\mu_{\mathcal A_t}-\bar\mu_{b_i}| \ge |p_j-p_i|-\frac\eta4-\frac\eta4 \ge \frac\eta2,
\end{align}
where the last inequality follows from $p_j-p_i=(j-i)\eta\ge\eta$.
The sample-count lower bounds give $N_{\mathcal A_t}N_{b_i}/(N_{\mathcal A_t}+N_{b_i})\ge L/6$. Together with the mean separation, this yields
\begin{align}
  Q_{b_i}^2(t) \ge \frac L6\left(\frac\eta2\right)^2 = \frac{256^2}{24}\log T > 500\log T.
\end{align}
For sufficiently large $T$, the threshold satisfies $(\lambda_t^{b_i})^2\le20\log T$, so
\[
  Q_{b_i}^2(t)>25(\lambda_t^{b_i})^2,
\]
and hence $Q_{b_i}(t)>5\lambda_t^{b_i}$.
On the good compatibility event $\mathcal G_{\rm cmp}$,
\[
  \widehat Q_{b_i}(t)>Q_{b_i}(t)-\lambda_t^{b_i}>4\lambda_t^{b_i},
\]
so every block from an earlier reuse interval is rejected during the latter half of reuse interval $j$.

\textit{Rejection of the $A$-blocks:} Let $b_A$ be a completed block from an $A$-segment of $m\ge q$ cells. This covers the $q$-cell and $L$-cell $A$-segments, as well as the initial $A$-segment. The block loses at most $d$ $A$-cells at entry and retains at most $d+1$ zero/one cells at exit. Therefore,
\[
N_{b_A}\ge m-d\ge q/2,
\]
and, since the added cells have nonnegative means,
\[
\bar\mu_{b_A}\ge A\frac{m-d}{m+d+1}\ge\frac A2,
\]
where the last inequality follows from $m\ge q\ge4d$ and $d\ge1$.
The active prefix in the current reuse interval contains only zeros and ones, so $0\le\bar\mu_{\mathcal A_t}\le1$. Hence,
\[
|\bar\mu_{\mathcal A_t}-\bar\mu_{b_A}|\ge A/2-1.
\]
During the latter half of this interval, $N_{\mathcal A_t}\ge L/3\ge q/2$. Thus both sample counts are at least $q/2$, giving
\[
Q_{b_A}^2(t)\ge\frac q4(A/2-1)^2.
\]
With $A=1024$, $q\ge\log T$, and $(\lambda_t^{b_A})^2\le20\log T$ for sufficiently large $T$, we obtain
\begin{align}
Q_{b_A}^2(t)>500\log T
\ge25(\lambda_t^{b_A})^2.
\end{align}
On the good compatibility event $\mathcal G_{\rm cmp}$, this implies $\widehat Q_{b_A}(t)>Q_{b_A}(t)-\lambda_t^{b_A}>4\lambda_t^{b_A}$, so every completed $A$-block is rejected during the latter half of the interval.

\paragraph{Step 5 - Repeated bias accumulation.}
Finally, we lower-bound the regret caused by repeated bias accumulation. As before, we consider the latter half of each reuse interval. We work on $\mathcal G_0$, the intersection of $\mathcal G_{\det}$, the event with no internal alarms, and $\mathcal G_{\rm cmp}$. All counts and population means below refer to the estimation stream, and sums over $b$ run over the $N$ source blocks.
We first upper-bound the number of samples used by the estimator, since they may dilute the bias.

\textit{Number of samples:} Recall that the test accepts all source blocks and rejects all other completed blocks. Each source block contains at most $2q$ samples, and the active prefix in the latter half of reuse interval $j$ contains at most $L$. Since $Nq\le L$,
\begin{align}
  W_t = N_{\mathcal A_t} + \sum_b N_b \le L+2Nq \le 3L \le 4L.
\end{align} 

\textit{Bias mass:} From Eq.~(\ref{eq:counter-bias-short}), we have $N_b\bar\mu_b\ge q/8$ for each source block. Also,
\[
Nq \ge L-q\ge L/2
\]
for sufficiently large $T$. Therefore,
\begin{align}
  \sum_b N_b\bar\mu_b \ge \frac{Nq}{8} \ge \frac{L}{16}.
\end{align}
Let $\widetilde\mu_t$ denote the population estimate obtained by replacing the observations in the prediction average with their means, while keeping the selected samples fixed. Consider a zero-mean prediction round in the latter half of a reuse interval. Since all population means in the construction are nonnegative, the active prefix cannot cancel the source blocks' positive contribution. Thus,
\begin{align*}
|\widetilde\mu_t-\mu_t|
&=\widetilde\mu_t \ge\frac{\sum_bN_b\bar\mu_b}{W_t}\ge\frac{L/16}{4L}
=\frac1{64}.
\end{align*}

\textit{Repeated accumulation:} Since $p_j\le1/4$, even spacing ensures that the latter half of reuse interval $j$ contains at least
\[ 3L/8-2\ge L/8 \]
zero-cells for sufficiently large $T$. Indeed, this half contains $\lfloor L/2\rfloor$ cells and at most $p_j\lfloor L/2\rfloor+1$ ones. Its number of zero-cells is therefore at least $\frac34\lfloor L/2\rfloor-1\ge3L/8-2$; the subtraction of $2$ allows for rounding.

Each zero-cell contributes one even prediction round with population error at least $1/64$. The same argument applies to all $J$ reuse intervals, giving
\begin{align}
\sum_{t\in\mathcal T_+}|\widetilde\mu_t-\mu_t|
\ge\frac{JL}{512} \asymp \frac{T^{5/8}}{(\log T)^{1/4}},
\end{align}
on $\mathcal G_0$, where $\mathcal T_+=\{t\in\{2,\ldots,T\}:W_t>0\}$. Since $M=3$, this exceeds the recurrent benchmark scale $\sqrt{MT}$ by a factor of order $T^{1/8}/(\log T)^{1/4}$.

Conditioning on the decision samples fixes the blocks, reuse decisions, and weights, while the estimation noises remain centered. Thus the conditional mean of $\widehat\mu_t$ is $\widetilde\mu_t$ whenever $W_t>0$, and Jensen's inequality gives
\[
\mathbb E\!\left[
|\widehat\mu_t-\mu_t|
\,\middle|\,\text{decision samples}
\right]
\ge|\widetilde\mu_t-\mu_t|.
\]
Since $\mathcal G_0$ depends only on the decision samples and has probability at least $1-2\alpha-o(1)$, the population lower bound on this event yields
\[
R_T(\mathrm{ECR}_\infty;\nu)
\ge \mathbb P(\mathcal G_0)\frac{JL}{512}
\gtrsim JL.
\]
This proves the proposition.

\end{proof}

\paragraph{Effect of the exposure cap.}
The exposure cap changes neither the detector blocks nor the compatibility
decisions. On $\mathcal G_0$, Eq.~(\ref{eq:H-budget}) gives
$H_b(T+1)\le B_{\exp}+1$. Moreover, the per-block bounds above imply
$\sum_b N_b^{\Estr}\bar\mu_b^{\Estr}\le4Nq\le4L$, where the sum runs over the source blocks. Their contribution on zero-mean rounds is therefore at most
\[
\sum_b N_b^{\Estr}\bar\mu_b^{\Estr}H_b(T+1)
\le4L(B_{\exp}+1).
\]
For $B_{\exp}=4$, Theorem~\ref{thm:ecr-upper-bound} gives
\[
R_T(\mathrm{ECR};\nu)
=O\!\left(\sqrt T(\log T)^2\right).
\]
Thus only uncapped reuse accumulates the additional factor $J$.

\section{Retrospective limits}
\label{app:retrospective}
\begin{lemma}[Retrospective localization uncertainty] \label{lem:retrospective-localization} 
Fix two known means $\theta^{(0)}\neq\theta^{(1)}$ and let $\Delta:=|\theta^{(1)}-\theta^{(0)}|$. 
Consider the Gaussian subclass 
  \[ 
    X_t\sim \begin{cases} \mathcal N(\theta^{(0)},\sigma^2), & t<\eta,\\ \mathcal N(\theta^{(1)},\sigma^2), & t\ge\eta, 
    \end{cases} 
  \] 
independently across $t$, where the change point $\eta$ is unknown. 
Let $\widehat\tau$ be any possibly randomized estimator based on the entire sample $X_{1:T}$. 
Then, for every integer $h \ge 1$ and every $\tau$ such that $\tau+2h \le T$, 
  \begin{equation} \label{eq:retrospective-localization-lb} 
\max_{\eta\in\{\tau,\tau+2h\}} \Pbb_\eta\!\left(|\widehat\tau-\eta|\ge h\right) \ge \frac14\exp\!\left(-\frac{h\Delta^2}{\sigma^2}\right). 
  \end{equation} 
Consequently, for such $h$, if for some $\delta\in(0,1/4)$, 
  \[ 
    \sup_\eta \Pbb_\eta\!\left(|\widehat\tau-\eta|\ge h\right) \le\delta, 
  \] 
then necessarily 
\begin{equation} \label{eq:retrospective-localization-radius} 
h \ge \frac{\sigma^2}{\Delta^2} \log\frac{1}{4\delta}. 
\end{equation} 
\end{lemma}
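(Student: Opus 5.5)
This is a two-point (Le Cam) argument between the hypotheses $\eta_0:=\tau$ and $\eta_1:=\tau+2h$.

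\emph{Separation.} The two candidates are $2h$ apart. So for any value of $\widehat\tau$, at least one of $|\widehat\tau-\eta_0|\ge h$ or $|\widehat\tau-\eta_1|\ge h$ holds: if both distances were $<h$, the triangle inequality would give $2h<2h$. Define the test $\psi:=\1\{\widehat\tau\ge\tau+h\}$. On $\{\psi=1\}$ we have $|\widehat\tau-\eta_0|\ge h$, and on $\{\psi=0\}$ we have $|\widehat\tau-\eta_1|\ge h$ (integer $\widehat\tau$; otherwise compare with the midpoint $\tau+h$ directly). Therefore
\[
\Pbb_{\eta_0}(|\widehat\tau-\eta_0|\ge h)+\Pbb_{\eta_1}(|\widehat\tau-\eta_1|\ge h)\ge \Pbb_{\eta_0}(\psi=1)+\Pbb_{\eta_1}(\psi=0),
\]
and the maximum of the two terms on the left is at least half their sum. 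The internal randomization of $\widehat\tau$ is handled by adjoining an independent seed with the same law under both hypotheses; it contributes nothing to the KL divergence.

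\emph{Information.} The laws $\Pbb_{\eta_0}$ and $\Pbb_{\eta_1}$ of $X_{1:T}$ differ only on the $2h$ indices $t\in[\tau,\tau+2h)$. There $X_t$ has mean $\theta^{(1)}$ under $\eta_0$ and $\theta^{(0)}$ under $\eta_1$. Additivity of KL over independent Gaussians gives
\[
\KL(\Pbb_{\eta_0}\Vert\Pbb_{\eta_1})=2h\cdot\frac{\Delta^2}{2\sigma^2}=\frac{h\Delta^2}{\sigma^2}.
\]
Pinsker would only give a bound that fails for large $h$, so instead we use the Bretagnolle--Huber inequality:
\[
\Pbb_{\eta_0}(\psi=1)+\Pbb_{\eta_1}(\psi=0)\ge\tfrac12\exp(-\KL).
\]
Combined with the max-versus-half-sum step, this yields $\max\ge\frac14\exp(-h\Delta^2/\sigma^2)$, which is Eq.~(\ref{eq:retrospective-localization-lb}).

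\emph{Consequence.} If $\sup_\eta\Pbb_\eta(|\widehat\tau-\eta|\ge h)\le\delta$, then in particular the maximum over the two candidates is at most $\delta$. Hence $\frac14 e^{-h\Delta^2/\sigma^2}\le\delta$, and rearranging gives Eq.~(\ref{eq:retrospective-localization-radius}).

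The only delicate points are the choice of Bretagnolle--Huber to obtain the exponential form with constant $1/4$, and verifying that the support condition $\tau+2h\le T$ makes both hypotheses valid change points.
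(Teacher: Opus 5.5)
Your proposal is correct and follows the paper's proof essentially step for step. Both use the midpoint event $\{\widehat\tau\ge\tau+h\}$, the KL value $h\Delta^2/\sigma^2$ over the $2h$ differing indices, Bretagnolle--Huber for the half-sum bound $\tfrac12 e^{-h\Delta^2/\sigma^2}$, max-versus-half-sum for the constant $1/4$, and rearrangement for the radius; your integrality caveat is also unnecessary, since on $\{\psi=0\}$ you have $\widehat\tau<\tau+h$ and hence $|\widehat\tau-(\tau+2h)|>h$ for any real-valued $\widehat\tau$.
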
 

\begin{proof}
The proof constructs two environments that are different enough: their
change points are separated by \(2h\) samples, so that confusing them causes
a localization error of at least \(h\), while remaining similar enough in KL
divergence that no estimator can reliably distinguish between them.
Let
\[
A:=\{\widehat\tau\ge\tau+h\}.
\]
The event \(A\) corresponds to placing the estimated change point on or to the
right of the midpoint between the two possible change locations. Under a
change at \(\tau\), the event \(A\) implies
\[
|\widehat\tau-\tau|\ge h,
\]
whereas under a change at \(\tau+2h\), the event \(A^c\) implies
\[
|\widehat\tau-(\tau+2h)|\ge h.
\]
Therefore, the sum of the localization-error probabilities under the two
environments satisfies
\[
\begin{aligned}
\Pbb_\tau\!\left(|\widehat\tau-\tau|\ge h\right)
+\Pbb_{\tau+2h}\!\left(
|\widehat\tau-(\tau+2h)|\ge h
\right)
&\ge
\Pbb_\tau(A)+\Pbb_{\tau+2h}(A^c).
\end{aligned}
\]

By the Bretagnolle--Huber inequality,
\[
\Pbb_\tau(A)+\Pbb_{\tau+2h}(A^c)
\ge
\frac{1}{2}
\exp\!\left(
-\mathrm{KL}\!\left(
\Pbb_\tau\,\middle\|\,\Pbb_{\tau+2h}
\right)
\right).
\]

It remains to compute the KL divergence between the two observation laws.
They differ only on the \(2h\) indices
\(t=\tau,\ldots,\tau+2h-1\). Hence,
\begin{align*}
\mathrm{KL}\!\left(
\Pbb_\tau\,\middle\|\,\Pbb_{\tau+2h}
\right)
&=
2h\,
\mathrm{KL}\!\left(
\mathcal N(\theta^{(1)},\sigma^2)
\,\middle\|\,
\mathcal N(\theta^{(0)},\sigma^2)
\right)
=
2h\frac{\Delta^2}{2\sigma^2}
=
\frac{h\Delta^2}{\sigma^2}.
\end{align*}
Substituting this expression into the Bretagnolle--Huber inequality gives
\[
\Pbb_\tau(A)+\Pbb_{\tau+2h}(A^c)
\ge
\frac{1}{2}
\exp\!\left(-\frac{h\Delta^2}{\sigma^2}\right).
\]

Consequently, at least one of the two localization-error probabilities is at
least one half of the right-hand side. Therefore,
\[
\max_{\eta\in\{\tau,\tau+2h\}}
\Pbb_\eta\!\left(|\widehat\tau-\eta|\ge h\right)
\ge
\frac{1}{4}
\exp\!\left(-\frac{h\Delta^2}{\sigma^2}\right),
\]
which proves Eq.~(\ref{eq:retrospective-localization-lb}).
If both localization-error probabilities are at most \(\delta\), then
\[
\delta
\ge
\frac{1}{4}
\exp\!\left(-\frac{h\Delta^2}{\sigma^2}\right),
\]
which rearranges to
\[
h
\ge
\frac{\sigma^2}{\Delta^2}
\log\frac{1}{4\delta}.
\]
This proves Eq.~(\ref{eq:retrospective-localization-radius}).
\end{proof}

\begin{remark} 
The lower bound gives the estimator the two mean values adjacent to the change and the entire observation sequence \(X_{1:T}\), leaving only the change location unknown. It therefore also applies to any online detector followed by arbitrary retrospective post-processing. If the history is split at \(\widehat\tau\) when the true change point is \(\eta\), then exactly \(|\widehat\tau-\eta|\) observations are assigned to the wrong neighboring block. For a jump of size \(\Delta=\Delta_{\min}\) with \(\sigma^2/\Delta_{\min}^2\ge1\), take \(h=\lceil\sigma^2/\Delta_{\min}^2\rceil\) and assume \(T\ge2h+2\).
Lemma~\ref{lem:retrospective-localization} then shows that, for any hard split, some change location yields at least \(h\) misassigned observations with probability at least \(e^{-2}/4\). This two-point argument does not establish an additional logarithmic
dependence on \(T\); obtaining such a dependence is beyond the scope of this paper.
\end{remark}
\section{Extension to \texorpdfstring{$L_p$}{Lp}}
\label{app:extension-p}
For a fixed $p>0$, consider replacing the absolute loss by the cumulative
$p$-power loss
\begin{equation}
\label{eq:regret-p}
\mathbb E\!\left[\sum_{t=2}^T
|\widehat\mu_t-\mu_t|^p\right].
\end{equation}
The corresponding per-environment error and minimax criterion are defined
analogously. This section gives only a high-level discussion;
we do not claim an $L_p$ regret guarantee for ECR.

\paragraph{The value of recurrence.}
We first isolate the stochastic estimation term under Gaussian noise. If
$\widehat\mu_n$ is the sample mean of $n$ independent observations from
$\mathcal N(\mu,\sigma^2)$, then
\[
\widehat\mu_n-\mu\sim\mathcal N(0,\sigma^2/n),
\qquad
\mathbb E|\widehat\mu_n-\mu|^p
=c_p\sigma^p n^{-p/2},
\]
where $c_p:=\mathbb E|Z|^p$ for $Z\sim\mathcal N(0,1)$. Consequently,
\[
\sum_{n=1}^N\mathbb E|\widehat\mu_n-\mu|^p
\asymp
\sigma^p
\begin{cases}
N^{1-p/2}, & 0<p<2,\\
\log(eN), & p=2,\\
1, & p>2.
\end{cases}
\]
For the sub-Gaussian model of this paper, the corresponding right-hand sides
remain upper bounds up to a constant depending on $p$.

Consider equal-length segments and balanced total occupancy across the $M$
classes, ignoring rounding and first-prediction initialization costs. A
no-reuse benchmark that relearns each of the $S+1$ segments and a
recurrent benchmark that pools all observations from the same recurrent class
have the respective stochastic-loss scales
\begingroup
\setlength{\arraycolsep}{2.5pt}
\[
\begin{array}{c|c|c}
 & \text{No reuse} & \text{Recurrent pooling} \\ \hline
0<p<2
& \sigma^p(S+1)^{p/2}T^{1-p/2}
& \sigma^pM^{p/2}T^{1-p/2} \\[2pt]
p=2
& \sigma^2(S+1)\log\!\left(\dfrac{eT}{S+1}\right)
& \sigma^2M\log\!\left(\dfrac{eT}{M}\right) \\[4pt]
p>2
& \sigma^p(S+1)
& \sigma^pM.
\end{array}
\]
\endgroup
Thus, whenever $M<S+1$, the recurrent benchmark improves upon the no-reuse benchmark for every fixed
$p>0$. For $p<2$,
the improvement factor is of order
$((S+1)/M)^{p/2}$; at $p=1$ this recovers the improvement from
$\sigma\sqrt{(S+1)T}$ to $\sigma\sqrt{MT}$.

\paragraph{The cost of unknown change times.}
The testing argument underlying the change-time lower bound gives the
following heuristic for $p$-power loss. A jump of size $\Delta$ requires a
localization window of order
$\sigma^2\Delta^{-2}\log(T/S)$, while each inaccurate prediction in that
window incurs loss of order $\Delta^p$. Repeating this argument over $S$
changes suggests the unavoidable scale
\[
\sigma^2\Delta_{\min}^{p-2}S\log\!\left(\frac{T}{S}\right),
\]
under resolution conditions analogous to
Eq.~(\ref{eq:change-time-global-resolution}). For $p=2$, this agrees with the single-change lower bound of \cite{gafni2026the}. For $p>2$,
larger admissible jumps may produce a stronger worst-case dependence than the
display based on $\Delta_{\min}$.

\paragraph{Implications for the minimax rate.}
For $1\le p<2$, adapting the two oracle lower-bound arguments suggests the
schematic minimax benchmark
\[
\max\!\left\{
\sigma^pM^{p/2}T^{1-p/2},
\ \sigma^2\Delta_{\min}^{p-2}S\log\!\left(\frac{T}{S}\right)
\right\}.
\]
A corresponding algorithmic target would contain the sum of these two terms,
up to logarithmic factors and additional parameter-dependent contamination
terms.

The recurrent estimation term can dominate the change-time term only if
\[
S\log\!\left(\frac{T}{S}\right)
\lesssim
\left(\frac{\Delta_{\min}}{\sigma}\right)^{2-p}
M^{p/2}T^{1-p/2}.
\]
For fixed $p,M,\sigma$, and $\Delta_{\min}$, this reduces, up to logarithmic
factors, to
\[
S=\widetilde O\!\left(T^{1-p/2}\right).
\]
For example, the resulting powers are $1/2$ for $p=1$, $1/4$ for
$p=3/2$, and $0.1$ for $p=1.8$. Hence the admissible growth of $S$ becomes
more restrictive as $p$ approaches $2$.

The point $p=2$ is the summability transition: stationary estimation loss is
polynomial in the segment or class occupancy for $p<2$, logarithmic for
$p=2$, and bounded for $p>2$. At $p=2$, class estimation and change
localization are both logarithmic in the horizon, but they scale with $M$ and
$S$, respectively, and therefore need not be of the same overall order.
For $0<p<1$, small residual biases are amplified relative to $L_1$, so the
linear exposure accounting does not directly apply.

\section{Extension to vector-valued observations}
\label{app:vector-extension}

We outline the modifications to the proof of
Theorem~\ref{thm:ecr-upper-bound} for vector-valued observations.
Dimension enters through concentration and the Euclidean estimation error
of a sample mean. The compatibility, exposure, and recurrent-capacity
arguments retain their scalar structure.

\paragraph{Model and regret.}
Let
\[
X_t=\mu_t+\varepsilon_t,
\qquad
\mu_t=\theta^{(c_t)},
\qquad
\theta^{(m)}\in\mathbb{R}^d,
\]
where $c_t\in[M]$ is piecewise constant on the same unknown segmentation as
in Section~\ref{sec:problem-formulation}. Assume that the noises are independent, centered, and satisfy
\[
\mathbb{E}\left[\exp\{\lambda u^\top\varepsilon_t\}\right]
\leq
\exp\left(\frac{\sigma^2\lambda^2}{2}\right)
\]
for every $\lambda\in\mathbb{R}$ and every unit vector
$u\in\mathbb{R}^d$. The class parameters satisfy
\begin{align*}
&\min_{m\neq m'}
\|\theta^{(m)}-\theta^{(m')}\|_2
\geq \Delta_{\min}
\qquad
\max_{m,m'}
\|\theta^{(m)}-\theta^{(m')}\|_2
\leq \Delta_{\max}.
\end{align*}
We assume that these geometric conditions define a nonempty parameter class.
The regret is
\[
R_T^{(d)}
:=
\mathbb{E}\left[
\sum_{t=2}^T
\|\widehat\mu_t-\mu_t\|_2
\right].
\]
The decision/estimation split, block definitions, exposure counters
$H_b(t)$, reuse sets $\mathcal U_t$, and effective sample sizes $W_t$ are
unchanged.

\paragraph{Vector detector.}
For a valid split $r<k<t$, define
\begin{align*}
&\widehat D^r_{k,t}
:=
\frac{1}{\sigma}
\sqrt{
\frac{N^D_{[r,k)}N^D_{[k,t)}}{N^D_{[r,t)}}
}
\left\|
\overline X^D_{[r,k)}-
\overline X^D_{[k,t)}
\right\|_2,
\qquad
C_t^r:=\max_{r<k<t}\widehat D^r_{k,t},
\end{align*}
with the usual convention when no valid split exists.
The alarm rule remains
\[
N_G^r:=\inf\left\{t>r:C_t^r\geq\widetilde{\gamma}_t^r\right\},
\]
where $\widetilde{\gamma}_t^r$ is specified below. Its population
counterpart $D^r_{k,t}$ is obtained by replacing the sample means by the
corresponding population means.

Let $A^r_{k,t}$ denote the normalized
centered two-sample difference. Every one-dimensional projection of
$A^r_{k,t}$ is $1$-sub-Gaussian. Let $\mathcal N$ be a $1/2$-net of the unit
sphere with $|\mathcal N|\leq 5^d$. For every vector $z$, some
$u\in\mathcal N$ satisfies $u^\top z\geq \|z\|_2/2$. Consequently, for every
$x\geq0$,
\[
\mathbb P\left(
\|A^r_{k,t}\|_2
\geq
2\sqrt{2\left(d\log 5+x\right)}
\right)
\leq e^{-x}.
\]
The reverse triangle inequality gives
\[
\left|
\widehat D^r_{k,t}-D^r_{k,t}
\right|
\leq
\|A^r_{k,t}\|_2.
\]
Thus, with $\alpha_r=6\alpha/(\pi^2r^2)$, one valid anytime threshold is
\[
\widetilde{\gamma}_t^r
:=
2\sqrt{2\left(
 d\log 5
 +3\log(t-r)
 +\log\frac{\pi^2}{6\alpha_r}
\right)}.
\]
Indeed, for a fixed triple $(r,k,t)$ the preceding tail is at most
$6\alpha_r/[\pi^2(t-r)^3]$. A union bound over at most $t-r$ candidate split
points and then over all $t>r$ gives
\[
\mathbb P\left(
\exists\, r<k<t\leq T:
\left|\widehat D^r_{k,t}-D^r_{k,t}\right|
\geq \widetilde{\gamma}_t^r
\right)
\leq \alpha_r.
\]
The deterministic-restart summation used in the scalar proof therefore still
gives $\mathbb E[N_{\mathrm{FA}}]\leq\alpha$ and
$\mathbb E[K]\leq S+1+\alpha$. Each detector candidate and each
completed-block compatibility comparison now costs $O(d)$ operations, so
vectorization multiplies the scalar arithmetic cost by a factor of order
$d$.

\paragraph{Vector compatibility statistic.}
For an active prefix $\mathcal A_t$ and a completed block $b$, define
\[
\widehat Q_b(t)
:=
\frac{1}{\sigma}
\sqrt{
\frac{N^D_{\mathcal A_t}N^D_b}{N^D_{\mathcal A_t}+N^D_b}
}
\left\|
\overline X^D_{\mathcal A_t}-\overline X^D_b
\right\|_2,
\]
and let $Q_b(t)$ be its population counterpart. With
$\alpha_b=6\alpha/[\pi^2(b+1)^2]$, define
\[
\widetilde{\lambda}_{b,t}
:=
\sqrt{8\left(
 d\log 5
 +6\log t
 +\log\frac{\pi^2}{6\alpha_b}
\right)},
\]
and use the compatibility rule
\[
\mathcal C_t^{(d)}
:=
\left\{
 b<b_t:
 \widehat Q_b(t)<4\widetilde{\lambda}_{b,t}
\right\}.
\]
The same net argument gives, for every fixed pair of deterministic intervals,
\[
\mathbb P\left(
\left|\widehat Q_b(t)-Q_b(t)\right|
\geq\widetilde{\lambda}_{b,t}
\right)
\leq
\frac{6\alpha_b}{\pi^2t^6}.
\]
The deterministic union over interval pairs and block indices used in the
proof of Lemma~\ref{lem:compat-concentration} then yields a summable
compatibility-deviation probability.
Outside such a deviation event,
\[
\begin{aligned}
Q_b(t)\leq2\widetilde{\lambda}_{b,t}
&\quad\Longrightarrow\quad
b\in\mathcal C_t^{(d)},
\qquad
b\in\mathcal C_t^{(d)}
\quad\Longrightarrow\quad
Q_b(t)<5\widetilde{\lambda}_{b,t}.
\end{aligned}
\]
The numerical constants are not optimized.

\paragraph{Identification scale and anchor segmentation.}
For all indices not exceeding $T$, the two confidence levels above satisfy
\[
\begin{aligned}
\left(\widetilde{\gamma}_t^r\right)^2
&\leq
C_\gamma\left(d+\log\frac{T}{\alpha}\right),
\qquad
\left(\widetilde{\lambda}_{b,t}\right)^2
\leq
C_\lambda\left(d+\log\frac{T}{\alpha}\right),
\end{aligned}
\]
for universal constants $C_\gamma,C_\lambda$. It is therefore sufficient to
replace the scalar identification scale by
\[
\ell_{\mathrm{id}}^{(d)}
:=
\left\lceil
C_{\mathrm{id}}
\frac{\sigma^2}{\Delta_{\min}^2}
\left(
 d+\log\frac{T}{\alpha}
\right)
\right\rceil,
\]
where $C_{\mathrm{id}}$ is chosen sufficiently large. 

The anchor arguments then carry over with Euclidean norms. To see the key
step, suppose that a no-alarm prefix beginning at $r$ contains a pure
class-$m$ interval $[u,v)$ with $h=N^D_{[u,v)}$ observations, and let
$p=N^D_{[r,u)}$. At the split $k=u$,
\[
\left(D^r_{u,v}\right)^2
=
\frac{1}{\sigma^2}
\frac{ph}{p+h}
\left\|
\overline\mu^D_{[r,u)}-\theta^{(m)}
\right\|_2^2.
\]
On the detector good event, no alarm implies $D^r_{u,v}<2\widetilde{\gamma}_v^r$.
Using the convex-combination identity for the full-prefix mean gives
\[
\left\|
\overline\mu^D_{[r,v)}-\theta^{(m)}
\right\|_2
=
\frac{p}{p+h}
\left\|
\overline\mu^D_{[r,u)}-\theta^{(m)}
\right\|_2
\leq
\frac{2\sigma\widetilde{\gamma}_v^r}{\sqrt h}.
\]
Hence $h\geq\ell_{\mathrm{id}}^{(d)}$ makes the full-prefix mean lie within
$\Delta_{\min}/8$ of $\theta^{(m)}$. The preservation argument is identical:
once a prefix of size at least $\ell_{\mathrm{id}}^{(d)}$ is within
$\Delta_{\min}/8$ of its anchor, a later no-alarm prefix remains within
$\Delta_{\min}/4$, by applying the same CUSUM bound to the split at the
anchor-formation time and using the Euclidean triangle inequality.

If a later pure interval belongs to a different class $m'$, then the distance
between its mean and the anchored prefix is at least
$3\Delta_{\min}/4$. Once both sides of the split contain
$\ell_{\mathrm{id}}^{(d)}$ decision-stream observations,
\[
\left(D^r_{k,t}\right)^2
\geq
\frac{9}{16\sigma^2}
\frac{N^D_{[r,k)}N^D_{[k,t)}}{N^D_{[r,t)}}
\Delta_{\min}^2
\geq
\frac{9}{32}
\frac{\ell_{\mathrm{id}}^{(d)}\Delta_{\min}^2}{\sigma^2},
\]
which exceeds the no-alarm level for a sufficiently large
$C_{\mathrm{id}}$. Thus the anchor-formation, preservation, and separation
conclusions of Lemma~\ref{lem:anchor-formation} remain valid after replacing
$\ell_{\mathrm{id}}$ by $\ell_{\mathrm{id}}^{(d)}$. By the same argument as
in Lemma~\ref{lemma:global-contamination-control}, the aggregate count of
off-anchor observations in anchored blocks plus observations in unanchored
blocks remains
\[
O\left((S+K)\ell_{\mathrm{id}}^{(d)}\right).
\]
The localization argument used to remove the detector good event also carries
over, because it uses only the uniform detector bound, convex combinations,
and the triangle inequality.

\paragraph{Vector safety and compatibility guarantees.}
The natural vector analogue of safety is
\[
\left\|
\overline\mu^D_b-\theta^{(\mathfrak a(b))}
\right\|_2
\leq
\sigma\sqrt{\frac{d}{N^D_b}},
\]
with the analogous definition for the active prefix. This choice reduces to
the scalar definition when $d=1$ and matches the Euclidean estimation scale
of a $d$-dimensional sample mean.

For acceptance, let $n=N^D_{\mathcal A_t}$ and $q=N^D_b$. If both the active prefix
and block $b$ are safe for class $m$, then
\[
Q_b^2(t)
\leq
\frac{nq}{n+q}
\left(
\sqrt{\frac{d}{n}}+\sqrt{\frac{d}{q}}
\right)^2
=
d\frac{(\sqrt n+\sqrt q)^2}{n+q}
\leq 2d.
\]
Since $\left(\widetilde{\lambda}_{b,t}\right)^2$ is at least a constant multiple of $d$, the
first compatibility implication above guarantees acceptance.

For rejection, suppose that the active prefix has anchor $m$ and block $b$
has a different anchor $m'$. Anchor separation gives
\[
\left\|
\overline\mu^D_{\mathcal A_t}-\overline\mu^D_b
\right\|_2
\geq
\frac{\Delta_{\min}}{2}.
\]
If both objects are long, then
\[
Q_b^2(t)
\geq
\frac{N^D_{\mathcal A_t}N^D_b}{N^D_{\mathcal A_t}+N^D_b}
\frac{\Delta_{\min}^2}{4\sigma^2}
\geq
\frac{\ell_{\mathrm{id}}^{(d)}\Delta_{\min}^2}{8\sigma^2}.
\]
For a sufficiently large $C_{\mathrm{id}}$, this is larger than
$25\left(\widetilde{\lambda}_{b,t}\right)^2$, contradicting compatibility. Thus the acceptance
and rejection guarantees continue to hold with the vector safety definition.

\paragraph{Population bias and exposure.}
For a round with $W_t>0$, define the population counterpart of the ECR
estimator by
\begin{align*}
&\widetilde\mu_t
:=
\frac{
\sum_{u\in \mathcal A_t\cap\mathcal S^E}\mu_u
+
\sum_{b\in\mathcal U_t}N^E_b\overline\mu^E_b
}{W_t},
\qquad
P_T^{(d)}
:=
\sum_{t:W_t>0}
\|\widetilde\mu_t-\mu_t\|_2.
\end{align*}
The scalar decomposition into current-block and reused-block bias remains
valid by the Euclidean triangle inequality. The exposure calculation is also
unchanged. For an anchored block $b$ used while its anchor is the active
class, write
\[
\beta_b:=\overline\mu^E_b-\theta^{(\mathfrak a(b))}.
\]
Its cumulative contribution is bounded by
\[
\sum_{t:b\in\mathcal U_t}
\frac{N^E_b}{W_t}\|\beta_b\|_2
\leq
N^E_b\|\beta_b\|_2 H_b(T+1).
\]
As in the scalar proof, $H_b(T+1)\leq B_{\exp}+1$. Moreover, if
$c_b^E$ denotes the number of off-anchor estimation observations in block
$b$, then
\[
N^E_b\|\beta_b\|_2
=
\left\|
\sum_{u\in\cB_b\cap\mathcal S^E}
\bigl(\mu_u-\theta^{(\mathfrak a(b))}\bigr)
\right\|_2
\leq
\Delta_{\max}c_b^E.
\]
Thus the repeated bias of a block is controlled by its fixed contamination
mass times its exposure, exactly as in the scalar argument.

The current-block harmonic argument, the treatment of unanchored blocks, and
the exclusion of wrong-anchor blocks after the active prefix becomes long and
centered all use only the diameter bound and the triangle inequality. Thus
the arguments of Lemmas~\ref{lem:current-block-bias} and
\ref{lem:reused-block-bias} carry over. Applying the localization steps of
Lemmas~\ref{lem:robust-detector-localization} and
\ref{lem:remove-detector-conditioning} then gives
\begin{align*}
\mathbb E[P_T^{(d)}]
&\lesssim
\Delta_{\max}
\left[
 (S+1)\ell_{\mathrm{id}}^{(d)}+\alpha L_T
\right]
\left(B_{\exp}+L_T+1\right)
+\Delta_{\max}\alpha,
\qquad
L_T:=1+\log\frac{T}{\alpha}.
\end{align*}

\paragraph{Stochastic estimation under Euclidean loss.}
Condition on the full decision stream, as in
Lemma~\ref{lem:regret-decomposition}. This freezes all
selected estimation indices, while the selected estimation noises remain
independent and centered. If $Z_t:=\widehat\mu_t-\widetilde\mu_t$, then every
projection of $Z_t$ is sub-Gaussian with variance proxy
$\sigma^2/W_t$. For an orthonormal basis $e_1,\ldots,e_d$,
\[
\mathbb E\left[
\|Z_t\|_2^2
\,
\middle|
\mathcal G_T^D
\right]
=
\sum_{j=1}^d
\mathbb E\left[
(e_j^\top Z_t)^2
\,
\middle|
\mathcal G_T^D
\right]
\leq
\frac{d\sigma^2}{W_t}.
\]
Jensen's inequality therefore gives
\[
\mathbb E\left[
\|Z_t\|_2
\,
\middle|
\mathcal G_T^D
\right]
\leq
\sigma\sqrt{\frac{d}{W_t}}.
\]
Consequently, with
\[
V_T:=\sum_{t:W_t>0}\frac{1}{\sqrt{W_t}},
\]
the vector regret decomposition becomes
\[
R_T^{(d)}
\leq
\sigma\sqrt d\,\mathbb E[V_T]
+
\mathbb E[P_T^{(d)}]
+
(\sigma\sqrt d+\Delta_{\max})\mathbb E[K].
\]
The last term accounts for the same fallback rounds as in
Lemma~\ref{lem:regret-decomposition}, using
$\mathbb E\|\varepsilon_t\|_2\leq\sigma\sqrt d$.

\paragraph{Recurrent capacity and the role of $M$.}
The capacity arguments in Eq.~(\ref{eq:guaranteed-class-mass}) and
Lemmas~\ref{lem:hard-capacity-dynamics}--\ref{lem:classwise-leaky-telescope}
are unchanged because they depend only on sample counts, exposure usage, and
compatibility guarantees. Let
$\mathcal B^{\mathrm{LS}}_m$ denote the anchored class-$m$ blocks satisfying
the vector safety condition. At a round at which the active prefix is also
long and safe for class $m$, every live block in
$\mathcal B^{\mathrm{LS}}_m$ is compatible. Define
\begin{align*}
&\Phi_m(t)
:=
\sum_{\substack{b\in\mathcal B^{\mathrm{LS}}_m\\ b<b_t}}
N_b^E\bigl(B_{\exp}-H_b(t)\bigr)_+,
\qquad
Y_m(t):=B_{\exp}A_m(t)+\Phi_m(t),
\qquad
x_m(t):=\frac{Y_m(t)}{B_{\exp}-2},
\end{align*}
where $A_m(t)$ is the active class-$m$ estimation mass. On such a round,
\begin{align*}
W_t
&\geq
A_m(t)+
\sum_{\substack{b\in\mathcal B^{\mathrm{LS}}_m,\ b<b_t\\
H_b(t)<B_{\exp}}}N_b^E
\geq
\frac{Y_m(t)}{B_{\exp}}
=
\frac{B_{\exp}-2}{B_{\exp}}x_m(t).
\end{align*}
When a class-$m$ estimation observation arrives, it adds $B_{\exp}$ units to
the active capacity, while at most two prediction updates occur before the
next estimation observation and consume at most two units of total capacity.
Thus $x_m$ increases by at least one. When an anchored long-safe block is
completed, its active capacity is transferred to the repository at exposure
zero, so $Y_m$ is preserved exactly. Therefore different occurrences of the
same class advance the same counter instead of restarting it.

Ignoring leakage during bad intervals, the integral comparison gives
\[
\sum_{t\in\mathcal T_{\mathrm{LS},m}}
\frac{1}{\sqrt{W_t}}
\lesssim
\sqrt{x_m^{\mathrm{final}}}.
\]
The terminal capacities satisfy
\[
\sum_{m=1}^M x_m^{\mathrm{final}}\lesssim T,
\]
so Cauchy--Schwarz yields
\[
\sum_{m=1}^M
\sqrt{x_m^{\mathrm{final}}}
\leq
\sqrt{M\sum_{m=1}^M x_m^{\mathrm{final}}}
\lesssim
\sqrt{MT}.
\]
Dimension does not modify this telescope.

The bad-interval and leakage bounds require one modification due to the
vector safety radius. If an active prefix with $n$ observations is unsafe for
its anchor, then
\[
\left\|
\sum_{u\in \mathcal A_t\cap\mathcal S^D}
\bigl(\mu_u-\theta^{(m)}\bigr)
\right\|_2
>
\sigma\sqrt{dn}.
\]
If $c$ of these observations are off-anchor, the left-hand side is at most
$\Delta_{\max}c$, and therefore
\[
\sqrt n
<
\frac{\Delta_{\max}}{\sigma\sqrt d}\,c.
\]
The same inequality holds for an unsafe completed block.
Combining Lemmas~\ref{lem:long-unsafe-variance-cost} and
\ref{lem:bad-interval-complexity} with this estimate gives the bound
\begin{align*}
&\mathbb E[V_T]
\lesssim
\sqrt{MT}
+
(1+\sqrt M)
\left(
 (S+1)\sqrt{\ell_{\mathrm{id}}^{(d)}}
 +\alpha L_T
 +
 \frac{\Delta_{\max}}{\sigma\sqrt d}
 \left[
  (S+1)\ell_{\mathrm{id}}^{(d)}+\alpha L_T
 \right]
\right).
\end{align*}
In the contamination terms, the factor $1/(\sigma\sqrt d)$ cancels when
the preceding bound is multiplied by $\sigma\sqrt d$ in the regret decomposition.

\paragraph{Resulting regret bound.}
The preceding modifications give the
following vector analogue, up to universal numerical constants:
\[
\begin{aligned}
R_T^{(d)}(\mathrm{ECR};\nu)
\lesssim{}&
\sigma\sqrt{dMT}
+
\sigma\sqrt d\,(1+\sqrt M)
{}\times
\left[
 (S+1)\sqrt{\ell_{\mathrm{id}}^{(d)}}+\alpha L_T
\right]
\\
&+
\Delta_{\max}(L_T+\sqrt M)
{}\times
\left[
 (S+1)\ell_{\mathrm{id}}^{(d)}+\alpha L_T
\right].
\end{aligned}
\]
A formal theorem would require restating and checking the constants in
Lemmas~\ref{lem:local-detector-concentration}--\ref{lem:remove-detector-conditioning}.

The leading statistical term is therefore
\[
O\left(\sigma\sqrt{dMT}\right).
\]
For comparison, relearning independently on every segment gives the no-reuse benchmark's
estimation scale
\[
\sigma\sqrt d
\sum_{j=0}^S\sqrt{|\mathcal I_j|}
\leq
\sigma\sqrt{d(S+1)T},
\]
whereas pooling all occurrences of each recurrent class gives
\[
\sigma\sqrt d
\sum_{m=1}^M\sqrt{N_m}
\leq
\sigma\sqrt{dMT}.
\]
Thus recurrence yields the same improvement factor as in the scalar case;
dimension multiplies both benchmark scales through $\sqrt d$.

A sufficient condition for the recurrent term to dominate the two displayed
change-dependent terms is that
\begin{align*}
S+1
&\lesssim
\frac{\sqrt{MT}}
     {(1+\sqrt M)\sqrt{\ell_{\mathrm{id}}^{(d)}}},
\qquad
S+1
\lesssim
\frac{\sigma\sqrt{dMT}}
     {\Delta_{\max}(L_T+\sqrt M)\ell_{\mathrm{id}}^{(d)}},
\end{align*}
up to the lower-order $\alpha$ terms. For fixed $d$, $M$, $\sigma$,
$\Delta_{\min}$, and $\Delta_{\max}$, this reduces, up to logarithmic
factors, to the same condition $S=\widetilde O(\sqrt T)$ as in the scalar
paper, and then
\[
R_T^{(d)}(\mathrm{ECR};\nu)
=
\widetilde O\left(\sigma\sqrt{dMT}\right).
\]
Finally, if $\Delta_{\max}\ge(M-1)\Delta_{\min}$, the scalar
change-time construction embeds in one coordinate of $\mathbb R^d$ and yields
the same lower bound under the conditions of
Proposition~\ref{prop:unknown-change-time-lower-bound}. Establishing a matching
$\Omega(\sigma\sqrt{dMT})$ recurrent-estimation lower bound would require a
separate $d$-dimensional packing argument and sufficient geometric slack
under the separation and diameter constraints. 
\end{document}